\definecolor{linkColor}{HTML}{E74C3C}
\definecolor{pearcomp}{HTML}{B97E29}
\definecolor{citeColor}{HTML}{2980B9}
\definecolor{urlColor}{HTML}{1D2DEC}
\definecolor{conjColor}{HTML}{9ab569}
\tikzset{
  invisible/.style={opacity=0},
  visible on/.style={alt={#1{}{invisible}}},
  alt/.code args={<#1>#2#3}{
    \alt<#1>{\pgfkeysalso{#2}}{\pgfkeysalso{#3}}
  },
}
\newtheorem{definition}{\textbf{Definition}}
\newtheorem{lemma}{\textbf{Lemma}}
\newtheorem{theorem}{\textbf{Theorem}}
\newtheorem{assumption}{\textbf{Assumption}}
\newtheorem{proposition}{\textbf{Proposition}}
\newtheorem{remark}{\textbf{Remark}}
\newcommand{\cX}{\mathcal{X}}
\newcommand{\cY}{\mathcal{Y}}
\newcommand{\cV}{\mathcal{V}}
\newcommand{\cN}{\mathcal{N}}
\newcommand{\cR}{\mathcal{R}}
\newcommand{\cE}{\mathcal{E}}
\renewcommand{\cite}[1]{\citep{#1}}
\definecolor{cm}{RGB}{0,0,200}
\definecolor{purple}{RGB}{200,0,200}
\newcommand{\vast}{\bBigg@{2.5}}
\newcommand{\Vast}{\bBigg@{5}}
\DeclareMathOperator{\prob}{\mathbb{P}}
\DeclareMathOperator*{\argmin}{argmin}
\DeclareMathOperator*{\argmax}{argmax}
\title{\fontsize{20pt}{24pt}\selectfont Sail into the Headwind: Alignment via Robust Rewards and Dynamic Labels against Reward Hacking}
\author{Paria Rashidinejad}
\author{Yuandong Tian}
\affiliation{Fundamental AI Research (FAIR) @ Meta AI \\ {\small \email{pariard@meta.com}, \email{yuandong@meta.com}} }
\abstract{Aligning AI systems with human preferences typically suffers from the infamous \textit{reward hacking} problem, where optimization of an imperfect reward model leads to undesired behaviors.
In this paper, we investigate reward hacking in offline preference optimization, which aims to improve an initial model using a preference dataset. We identify two types of reward hacking stemming from statistical fluctuations in the dataset: Type I Reward Hacking due to subpar choices appearing more favorable, and Type II Reward Hacking due to decent choices appearing less favorable. We prove that many (mainstream or theoretical) preference optimization methods suffer from both types of reward hacking. To mitigate Type I Reward Hacking, we propose POWER, a new preference optimization method that combines Guia\c{s}u's weighted entropy with a robust reward maximization objective. POWER enjoys finite-sample guarantees under general function approximation, competing with the best covered policy in the data. To mitigate Type II Reward Hacking, we analyze the learning dynamics of preference optimization and develop a novel technique that dynamically updates preference labels toward certain ``stationary labels'', resulting in diminishing gradients for untrustworthy samples. Empirically, POWER with dynamic labels (POWER-DL) consistently outperforms state-of-the-art methods on alignment benchmarks, achieving improvements of up to \textbf{13.0} points on AlpacaEval 2.0 and \textbf{11.5} points on Arena-Hard over DPO, while also improving or maintaining performance on downstream tasks such as mathematical reasoning. Strong theoretical guarantees and empirical results demonstrate the promise of POWER-DL in mitigating reward hacking.}
\date{December 1, 2024}
\begin{document}

\maketitle

\section{Introduction}
\label{section:intro}

Aligning AI systems with human values is a core problem in artificial intelligence \cite{russell2022human}. After training on vast datasets through self-supervised learning, large language models (LLMs) typically undergo an alignment phase to elicit desired behaviors aligned with human values \cite{ouyang2022training}. A main alignment paradigm involves leveraging datasets of human preferences, with techniques like reinforcement learning from human feedback \cite{christiano2017deep} or preference optimization \cite{rafailov2024direct}. These methods learn an (implicit or explicit) reward model from human preferences, which guides the {decision-making} process of the AI system. This paradigm has been instrumental in today's powerful chat models \cite{achiam2023gpt, dubey2024llama}.

However, these alignment techniques are observed to suffer from the notorious \textit{reward hacking} problem \cite{amodei2016concrete, tien2022causal, gao2023scaling, casper2023open}, where optimizing {imperfect} learned reward leads to poor performance under the true reward---assuming an underlying true reward exists \cite{skalse2022defining}. One primary cause of the discrepancy between the learned and true rewards arises because preference data do not encompass \textit{all} conceivable choices, making the learned reward model vulnerable to significant statistical fluctuations in areas with sparse data. Consequently, the AI system  might be swayed toward choices that {only appear} favorable under the learned reward but are, in reality, subpar, or the system might be deterred from truly desirable choices that do not seem favorable according to the learned rewards.

In this paper, we investigate reward hacking in {offline} preference optimization, in which we are provided with an initial AI system (initial model) and a preference dataset. We do not assume that the preference dataset is necessarily constructed through sampling from the initial model, allowing to leverage existing datasets collected from other models. Our objective is to dissect the roots of reward hacking from an statistical standpoint, analyze current methods, and introduce theoretically sound and practically strong methods to mitigate reward hacking. Our contributions are as follows.

\begin{figure}[t!]
    \centering
    \includegraphics[width=0.95\linewidth]{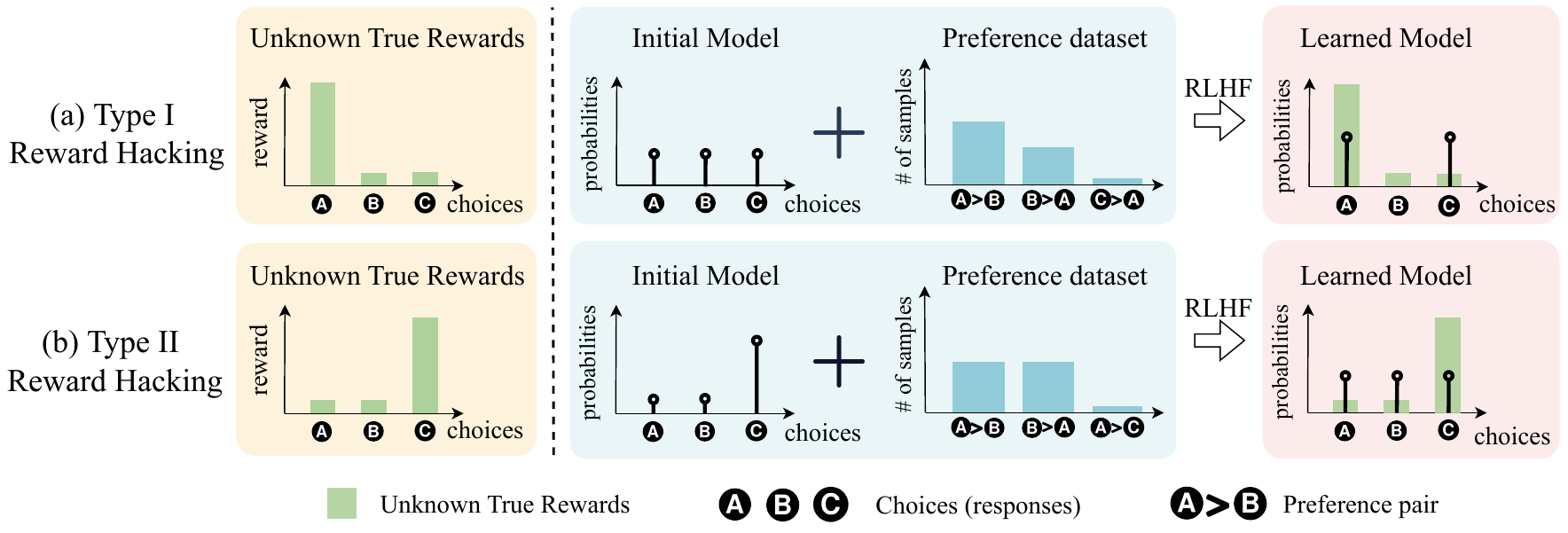}
    \caption{(a) Example of Type I Reward Hacking. The initial model has a uniform distribution over choices (e.g., responses) while the dataset has a high coverage on the high-reward choice and low coverage on a low-reward choice. With a decent chance, the poorly-covered, low-reward choice is labeled as \textit{preferred}, causing PO methods to erroneously assign a high weight to it (Proposition \ref{prop:po_overoptimization}). (b) Example of Type II Reward Hacking. The initial model is aligned with the true rewards while dataset has a low coverage on the high-reward choice. With a decent chance, the poorly-covered, high-reward choice is labeled as \textit{rejected}, leading to deterioration of the model post alignment (Proposition \ref{prop:po_overoptimization_highC}).}\vspace{-0.1cm}
    \label{fig:rewardhackingtypes}
\end{figure}

\textbf{Types of reward hacking.} We describe two types of reward hacking in preference optimization that stem from high statistical fluctuations in regions with sparse data; see Figure \ref{fig:rewardhackingtypes} for an illustration. Type I Reward Hacking manifests when poorly covered, subpar choices appear more favorable than they truly are, leading the model to assign high weights to these subpar choices. Type II Reward Hacking arises when decent choices with insufficient coverage appear worse than their true value and that leads to deterioration of the initial model. {While reward hacking in offline preference optimization is related to the challenge partial coverage in offline RL, the setting we consider here faces two sources of distribution shift: between the learned model and data, and between the initial model and data. This differs from offline RL, which typically considers access to an offline dataset (with possibly known data collection policy) alone \cite{levine2020offline, kumar2020conservative, rashidinejad2021bridging, xie2021bellman, zhu2023principled} and thus is concerned with a single source of distribution shift. The existence of the two sources of distribution shift motivates us to describe the two types of reward hacking, which motivates the designs of new algorithms robust to reward hacking.}

\textbf{Preference optimization methods provably suffer from reward hacking.} We prove that several theoretical and mainstream preference optimization methods suffer from both types of reward hacking (Propositions \ref{prop:po_overoptimization} and \ref{prop:po_overoptimization_highC}).
A common countermeasure against reward hacking is keeping the learned model close to the initial model through minimization of divergence measures \cite{rafailov2024direct, azar2024general, huang2024correcting}. Yet, our analysis reveals that divergence minimization does not induce sufficient pessimism to prevent Type I Reward Hacking, nor does it mitigate deterioration of the initial model caused by Type II Reward Hacking. Notably, reward hacking can occur even when divergence from the initial model is small.

\textbf{POWER-DL: Against Type I and Type II Reward Hacking.} To mitigate reward hacking, we integrate a robust reward maximization framework with Guia\c{s}u's weighted entropy \cite{guiacsu1971weighted}. We transform this objective into a single-step optimization problem (Proposition \ref{prop:power_objective_derivation}) leading to Preference Optimization via Weighted Entropy Robust Rewards (POWER). We prove that POWER enjoys finite-sample guarantees for general function approximation, improving over the best covered policy and mitigating Type I Reward Hacking (Theorem \ref{thm:suboptimality}). Due to the weighted entropy, POWER effectively learns from well-covered choices in the dataset, even those with a large divergence against the initial model, countering potential underoptimization in divergence-based methods. We next develop dynamic labels to mitigate Type II Reward Hacking, whereby preference labels are updated in a way that diminishes gradients for untrustworthy data (Theorem \ref{thm:learning_dynamics}). Our final algorithm combines POWER with Dynamic Labels (POWER-DL), which interpolates robust rewards with maintaining closeness to the initial model, allowing to trade off between reward hacking types.

\textbf{POWER-DL consistently outperforms other methods across various settings.} For aligning LLMs, we implement POWER-DL and compare it against other preference optimization methods across different datasets and two scenarios: one using an existing preference dataset and another with preference data generated through sampling from the initial model. POWER-DL consistently outperforms state-of-the-art methods in alignment benchmarks, achieving improvements over DPO of up to \textbf{13.0} points on AlpacaEval 2.0 and \textbf{11.5} points on Arena-Hard. Additionally, POWER-DL improves or maintains performance on downstream tasks such as truthfulness, mathematical reasoning, and instruction-following, demonstrating robustness against reward hacking and achieving a more favorable bias-variance trade-off compared to other methods. Figure \ref{fig:barcharts} provides comparison with two representative baselines DPO \cite{rafailov2024direct} and SimPO \cite{meng2024simpo} on alignment benchmarks Alpaca-Eval 2.0 and Arena-Hard as well as performance on mathematical reasoning benchmark GSM8K.

\begin{figure}[t]
    \centering
    \includegraphics[width=1\linewidth]{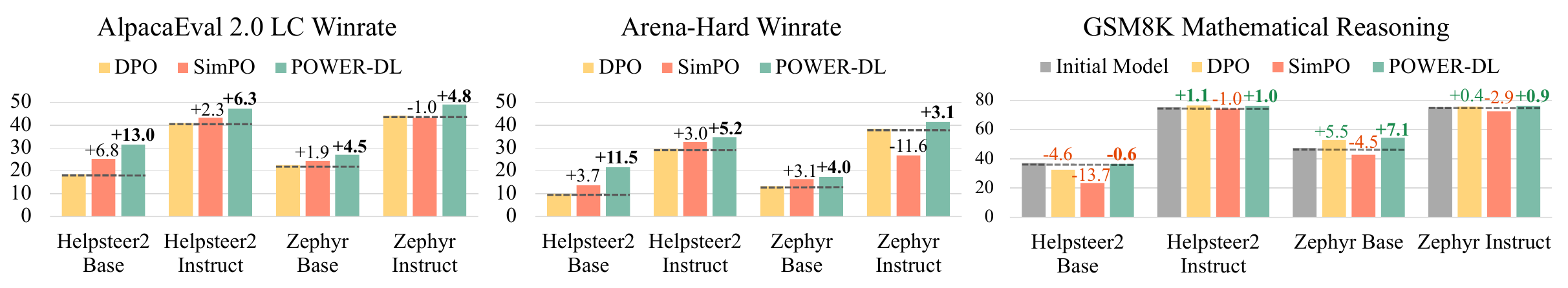}
    \caption{Performance of POWER-DL compared to DPO and SimPO. POWER-DL outperforms DPO and SimPO in alignment benchmarks AlpacaEval 2.0 and Arena-Hard across pipelines with different dataset sizes and levels of distribution shift between data and the initial model. In downstream mathematical reasoning task GSM8K, POWER-DL consistently maintains or improves mathematical reasoning performance while the performance of models trained with DPO and SimPO can drop significantly in some cases.}
    \label{fig:barcharts}
\end{figure}

\section{Background and Problem Formulation}

\subsection{Learning from Human Preference}\label{sec:background_formulation}
\textbf{Contextual bandit formulation.} 
We adopt the contextual bandits formulation described by a tuple $(\cX, \cY, r)$, where $\cX$ is the space of contexts (e.g., prompts), $\cY$ is the space of actions (e.g., responses), and $r: \cX \times \cY \rightarrow \mathbb{R}$ is a scalar reward function. A stochastic policy (e.g., model or language model) $\pi: \cX \rightarrow \Delta (\cY)$ takes in a context $x \in \cX$ and outputs an action according to $y \sim \pi(\cdot|x)$. We denote the set of all stochastic policies by $\Pi \coloneqq \{\pi: \cX \rightarrow \Delta(\cY)\}$. 

\textbf{Performance metric.} We assume that there exists an underlying (unknown) true reward function $r^\star: \cX \times \cY \rightarrow \mathbb{R}$. Given the true reward function $r^\star$ and a target distribution over contexts $x \sim \rho(\cdot)$, performance of a policy $\pi$ is the expected true reward over contexts and actions
\begin{align}\label{eq:performance_metric}
    J(\pi) \coloneqq \mathbb{E}_{x \sim \rho, y \sim \pi(\cdot| x)} \left[r^\star(x,y)\right].
\end{align}

\textbf{The Bradley-Terry model of human preferences.} Consider a prompt $x \in \cX$ and a pair of responses $y^0, y^1 \in \cY$. For any reward function $r$, the Bradley-Terry (BT) model characterizes the probability of preferring $y^1$ over $y^0$, denoted by $l = 1$, according to:
\begin{align}\label{eq:BTmodel}
\prob_{r}(l = 1 \mid x, y^1, y^0) 
= \sigma \left( r(x, y^1) - r(x, y^0) \right),
\end{align}
where $\sigma(z) \coloneqq 1/(1+\exp(-z))$ is the sigmoid function.

\textbf{Offline preference optimization.} We consider an offline learning setup, where we start from an initial reference policy (model), denoted by $\pi_{\theta_0} = \pi_{\text{ref}}$, and an offline pairwise preference dataset $\mathcal{D} = \{(x, y^0, y^1, l)\}$, comprising of $N$ iid samples. Prompt and response pairs are sampled according to a data distribution: $x, y^0, y^1 \sim \mu$, and preferences label is sampled according to the BT model corresponding to true rewards: $l \sim \prob_{r^\star}(\cdot| x, y^1, y^0)$. Importantly, we do \emph{not} assume that the preference dataset is necessarily constructed through sampling from the initial model. To simplify notation, we define $y^+ = ly^1 + (1-l)y^0$ and $y^- = (1-l)y^1 + ly^0$ to denote the chosen and rejected responses in the dataset, respectively. Appendix \ref{app:notation} presents additional notation.

\subsection{Direct Preference Optimization}\label{sec:background_dpo}
A classical approach to learning from human preferences involves learning a reward model from dataset, followed by finding a policy through maximizing the learned reward typically regularized with a (reverse) KL-divergence to keep the learned policy closed to initial policy:
\begin{align}\label{eq:reward_policy_two_step}
\begin{split}
    \hat r \in & \argmin_{r}  L_{\text{BT}}(r) \coloneqq - \mathbb{E}_{\mathcal{D}} \left[ \log \sigma \left( r(x, y^+) - r(x, y^-) \right) \right] \\
    \hat \pi \in & \argmax_{\pi} \mathbb{E}_{x \sim \rho, y \sim \pi}[\hat r(x,y)] - \beta D_{\text{KL}}[\pi \| \pi_{\text{ref}}],
\end{split}
\end{align}
Here, $D_{\text{KL}}[\pi \| \pi_{\text{ref}}] \coloneqq \mathbb{E}_{x \sim \rho} \left[ D_{\text{KL}}[\pi(\cdot| x) \| \pi_{\text{ref}}(\cdot| x)]  \right]$ and $L_{\text{BT}}(r)$ is the negative log-likelihood according to the BT model. 
\citet{rafailov2024direct} observed that the policy maximization step in \eqref{eq:reward_policy_two_step} can be computed in closed form and thus simplified the two-step process into a single minimization objective. This method is called direct preference optimization (DPO) and has inspired a series of works; see Tables \ref{tab:po_objectives_main} and \ref{tab:PO_hypers} for several examples.

Some representative variants of DPO that we theoretically analyze are IPO \cite{azar2024general}, which applies a nonlinear transformation to preferences to reduce overfitting, and SimPO \cite{meng2024simpo}, which removes the reference policy from the DPO objective. We also analyze two recent theoretical methods that come with finite-sample guarantees and aim at mitigating overoptimization: $\chi$PO \cite{huang2024correcting}, which replaces the KL divergence in DPO with a stronger $\chi^2$+KL divergence, and DPO+SFT \cite{liu2024provably, cen2024value}, which adds a supervised finetuning term that increases log-likelihood of chosen responses in the preference dataset.
\begin{table}[t]
\centering
\caption{Preference optimization objectives given data $\mathcal{D} = \{(x,y^+,y^-)\}$ and initial model $\pi_{\text{ref}}$.}
\label{tab:po_objectives_main}
\setlength{\extrarowheight}{5pt}
\resizebox{\textwidth}{!}{
\begin{tabular}{ll}
\toprule
\textbf{Method} & \textbf{Objective} \\
\midrule
{DPO} \cite{rafailov2024direct}
& 
$\hat{\pi}_{\text{DPO}} \in \argmin_\theta - \mathbb{E}_{\mathcal{D}} \left[\log \sigma \left(\beta \left(\log \frac{\pi_\theta(y^+|x)}{\pi_{\text{ref}}(y^+|x)} - \log \frac{\pi_\theta(y^-|x)}{\pi_{\text{ref}}(y^-|x)}\right) \right)\right]$ \\
\midrule
{DPO+SFT} \cite{liu2024provably}
& 
$\hat{\pi}_{\text{DPO+SFT}} \in \argmin_\theta - \mathbb{E}_{\mathcal{D}} \left[\log \sigma \left(\beta \left(\log \frac{\pi_\theta(y^+|x)}{\pi_{\text{ref}}(y^+|x)} - \log \frac{\pi_\theta(y^-|x)}{\pi_{\text{ref}}(y^-|x)}\right) \right)\right] - \mathbb{E}_{\mathcal{D}} \left[ \log \pi_\theta(y^+|x) \right]$ \\
\midrule
{IPO} \cite{azar2024general}
& 
$\hat{\pi}_{\text{IPO}} \in \argmin_\theta \mathbb{E}_{\mathcal{D}} \left[\left(\log \frac{\pi_\theta(y^+|x)}{\pi_{\text{ref}}(y^+|x)} - \log \frac{\pi_\theta(y^-|x)}{\pi_{\text{ref}}(y^-|x)} - \frac{1}{2\tau} \right)^2 \right]$ \\
\midrule
{SimPO} \cite{meng2024simpo}
& 
$\hat{\pi}_{\text{SimPO}}  \in \argmin_\theta - \mathbb{E}_{\mathcal{D}} \left[\log \sigma \left(\beta \left( \frac{1}{|y^+|} \log \pi_\theta(y^+|x) - \frac{1}{|y^-|} \log \pi_\theta(y^-|x) \right) - \gamma \right)\right]$ \\
\midrule
${\chi}${PO} \cite{huang2024correcting}
& 
$\hat{\pi}_{\chi\text{PO}} \in \argmin_\theta - \mathbb{E}_{\mathcal{D}} \left[ \log \sigma \left( \mathsf{clip}_{2R}  \left[ \beta \left( \phi \left(\frac{\pi_\theta(y^+|x)}{\pi_{\text{ref}}(y^+|x)}\right) - \phi \left( \frac{\pi_\theta(y^-|x)}{\pi_{\text{ref}}(y^-|x)} \right) \right) \right] \right) \right]$; $\; \phi(z) \coloneqq z+ \log(z)$ \\
\bottomrule
\end{tabular}
}
\end{table} 

\section{Reward Hacking in Preference Optimization}\label{sec:po_challenges}

In this section, we investigate reward hacking in preference optimization. One driver of reward hacking is statistical errors present in the dataset. Typically, preference datasets suffer from \textit{partial coverage}, lacking extensive samples across all possible options. As a result, preferences for poorly covered choices are subject to high levels of statistical fluctuations, given the fact that preference labels are Bernoulli random variables with probabilities described by the Bradley-Terry model \eqref{eq:BTmodel}. Subsequently, we describe two types of reward hacking, both originating from the presence of poorly covered choices (actions) in the dataset.

\subsection{Type I Reward Hacking}\label{sec:typeIrewardhacking}

Type I Reward Hacking occurs when poorly covered, \emph{subpar} choices in the dataset appear more favorable due to statistical errors, and that leads to a learned policy $\hat \pi$ with a low expected true reward $J(\hat \pi)$. In the following proposition, we prove that even in the favorable scenario that the high-reward actions are well-covered in the dataset, the existence of a single sample on a low-reward action can overwhelm many preference optimization algorithms, causing them to learning highly suboptimal policies.

\begin{proposition}[\textbf{Type I Reward Hacking in $\star$PO}]\label{prop:po_overoptimization} Consider multi-armed bandits with bounded rewards $r^\star(a) \in [0,1]$ and the softmax policy class, defined as
\begin{align}\label{eq:softmax_policy}
    \Pi_\theta \coloneqq \Big\{ \pi_\theta(y) = {\exp(\theta(y))}/{Z_\theta} \Big| Z_\theta = \sum_y \exp(\theta(y)), \theta(y) \in [0,1] \Big\}.
\end{align}
Define the best-in-class policy $\pi_{\theta^\star} = \max_{\pi \in \Pi_\theta} J(\pi)$. There exist three-armed bandit instances with $\Pi_\theta$ parameterization, high coverage of the optimal arms $\mu(a \in \arg \max_a r^\star(a)) > 1/2$, and bounded KL-divergence $D_{\text{KL}}(\pi_{\theta^\star} \mid \pi_{\text{ref}})$, such that for any $N \geq 2$, $\beta > 0$, $\gamma$, $\tau > 0$, policy $\hat \pi \in \{ \hat \pi_{\text{DPO}},\hat \pi_{\text{IPO}}, \hat \pi_{\text{SimPO}}\}$ or $\hat \pi = \hat \pi_{\chi \text{PO}}$ for $0 < \beta \leq 1/3$, suffers from a constant suboptimality $J(\pi_{\theta^\star}) - J(\hat \pi ) > 0.15$ with a constant probability of at least $(e(1+e))^{-1}$.
\end{proposition}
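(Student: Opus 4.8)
The plan is to construct an explicit three-armed bandit instance together with an explicit reference policy, and then show that a single adversarial preference sample—occurring with constant probability—is enough to push each of the listed methods onto arm $2$ (a near-worst arm), making the learned policy deterministic-like on a bad arm. Concretely, I would take arms with true rewards roughly $r^\star(0) = \tfrac12$, $r^\star(1) = 1$, $r^\star(2) = 0$ (values can be tuned inside $[0,1]$ so that the $0.15$ gap is comfortable), place almost all data mass on the ``informative'' comparison between arms $0$ and $1$, i.e. $\mu$ supported (mostly) on the pair $\{0,1\}$ but with a small but $\Omega(1/N)$-or-better chance — actually, since $N\ge 2$ suffices, I want the rare pair $\{0,2\}$ or $\{1,2\}$ to appear with probability bounded below by a constant — and choose the reference policy $\pi_{\text{ref}}$ to put tiny mass on arm $2$ so that $\log$-ratios involving arm $2$ are large. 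The key point, inherited from Figure~\ref{fig:rewardhackingtypes}(a), is that when the poorly covered low-reward arm $2$ is sampled and (by the BT model, with constant probability since rewards are $O(1)$) labeled \emph{preferred}, the gradient/objective of each method wants to drive $\pi_\theta(2)$ up as far as the class allows.

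The main steps, in order, are: (1) Fix the instance and compute $\pi_{\theta^\star}$ and verify $\mu(a\in\arg\max r^\star) > 1/2$ and $D_{\text{KL}}(\pi_{\theta^\star}\|\pi_{\text{ref}})$ bounded — this is a direct calculation with the softmax class $\Pi_\theta$, since $\theta(y)\in[0,1]$ makes all quantities explicitly bounded. (2) Identify the ``bad event'' $E$: the dataset contains at least one sample on the pair involving arm $2$ with arm $2$ labeled as chosen; lower-bound $\prob(E)$ by a constant. The cleanest route is to note that with probability at least a constant a \emph{specific} single draw lands on that pair and flips the right way — the product of a constant sampling probability and the BT probability $\sigma(r^\star(2)-r^\star(\cdot))$, which is itself a constant bounded away from $0$; matching the stated $(e(1+e))^{-1}$ suggests the intended bound comes from something like $\mu(\text{pair})\cdot \tfrac{1}{1+e}\cdot(\text{prob. no contradicting sample})$, and I would reverse-engineer the exact instance constants to hit that. (3) On event $E$, analyze each objective: for DPO, IPO, SimPO the objective is minimized (over the compact set $\theta(y)\in[0,1]$) by taking $\theta(2)$ to its upper boundary and $\theta$ of the true-optimal arm toward its lower boundary, because the per-sample loss is monotone in the relevant log-ratio and the arm-$2$ sample's contribution dominates or at least is never outweighed when the rest of the data is ``neutral'' on arm $2$; this forces $\pi_{\hat\theta}(2)\ge c$ for a constant $c$, hence $J(\hat\pi)\le J(\pi_{\theta^\star}) - \Omega(1)$. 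For $\chi$PO one uses the clipping with $\beta\le 1/3$ so that $\mathsf{clip}_{2R}$ is not active and the same monotonicity argument applies; the restriction $\beta\le1/3$ is presumably exactly what keeps the clipped term in its linear regime here. (4) Combine: on $E$ (constant probability) the suboptimality exceeds $0.15$.

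The hard part will be step (3) done \emph{uniformly} over all $\beta,\gamma,\tau$ and over the minimizer set (the $\argmin$ need not be unique, and for SimPO there is no reference policy so the mechanism is slightly different — there it is the length-normalization and the offset $\gamma$ that must be shown not to rescue arm $2$). I would handle this by showing the stronger statement that \emph{every} approximate minimizer of each objective must place mass $\ge c$ on arm $2$ on event $E$: pick any $\theta$ with $\theta(2)$ bounded away from its max, exhibit a strictly improving perturbation that raises $\theta(2)$, using that the arm-$2$ sample's loss is strictly decreasing in $\theta(2)$ while the other samples (on pair $\{0,1\}$) are independent of $\theta(2)$ in DPO/IPO/SimPO (SimPO: $\log\pi_\theta(2)$ does enter the normalizer $Z_\theta$, so one must check the net effect — this is the one genuinely fiddly monotonicity check). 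Careful bookkeeping of which parameter hits which boundary, and confirming the KL and coverage conditions survive the final choice of reward values, is where the real work lies; everything else is bounded-and-explicit by design of the softmax class.
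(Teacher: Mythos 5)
Your proposal follows essentially the same route as the paper's proof: a three-armed instance with rewards in $\{0,1\}$ (the paper uses $r^\star(1)=1$, $r^\star(3)=0$ and two sub-instances for the middle arm), comparison probabilities $\mu_{1,3}=1/N$ so that the pair containing the bad arm appears exactly once with probability $(1-1/N)^{N-1}\geq 1/e$, the Bradley--Terry flip probability $\sigma(-1)=1/(1+e)$ giving the stated $(e(1+e))^{-1}$, and then a monotonicity argument showing each objective drives the bad arm's parameter to the boundary $\theta=1$. Two remarks on the step you flag as hard. First, the paper sidesteps the joint-optimization issue entirely by an oracle device: it fixes $\theta(1),\theta(2)$ at their best-in-class values and argues only about the one-dimensional problem in $\theta(3)$, which makes every monotonicity check a one-line calculation (your plan to show that \emph{every} minimizer of the full objective puts constant mass on the bad arm is more faithful to the algorithms as stated, but is more work). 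Second, your worry about SimPO's normalizer is moot: since $\log\pi_\theta(y)-\log\pi_\theta(y')=\theta(y)-\theta(y')$ for softmax, the partition function cancels in every pairwise term for DPO, IPO, and SimPO alike; the only method where the coupling through $Z_\theta$ is real is $\chi$PO (via the $\pi_\theta/\pi_{\theta_0}$ ratio inside $\phi$), and your reading of the role of $\beta\leq 1/3$ there (keeping the clip inactive) matches the paper. One small caution: making $\pi_{\text{ref}}(2)$ ``tiny'' is neither possible within the class $\theta(y)\in[0,1]$ (every arm has probability at least $1/(1+2e)$) nor needed — the paper's reference policies are uniform or near-uniform, and the failure mechanism is purely the boundary constraint, not a large log-ratio at the reference.
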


{We defer the proof of the above proposition to Appendix \ref{app:po_overoptimization} and offer some intuition here. Figure \ref{fig:rewardhackingtypes}(a) illustrates a failure instance, where the preference data has high coverage over the high-reward choice $A$ but poor coverage on the low-reward choice $C$. Due to the Bradley-Terry model and the stochastic nature of human preferences, regions with poor coverage are prone to high statistical errors. Consequently, the low-reward choice $C$ might be marked as preferred purely by chance.\footnote{For example, if the reward gap between choices $C$ and $C$ is one, the probability of preferring $C$ over $A$ is approximately $27\%$ according to the BT model.} Proposition \ref{prop:po_overoptimization} demonstrates that algorithms such as DPO and SimPO overfit the untrustworthy preferences, as their objectives aim at increasing the parameter gap $\theta(C) -\theta(A)$, despite the preference being untrustworthy due to inadequate coverage. This can ultimately lead to a final policy that places significant weight on poor choices.}

Type I Reward Hacking and the failure result in Proposition \ref{prop:po_overoptimization} are closely connected to the challenge of partial data coverage in offline RL \cite{levine2020offline}, which can be robustly addressed through the principle of pessimism in the face of uncertainty. Pessimism can be applied in various ways such as reducing the rewards (values) of poorly covered actions \cite{kumar2020conservative, cheng2022adversarially} 
or keeping the learned policy close to {data collection policy} \cite{nachum2019dualdice}. Although divergence-based methods DPO, IPO, and $\chi$PO aim at keeping the learned policy close to the {initial policy}, Proposition \ref{prop:po_overoptimization} shows that maintaining a small divergence from initial model does not induce a sufficient amount of pessimism to prevent Type I Reward Hacking.\footnote{Proposition \ref{prop:po_overoptimization} does not contradict guarantees of \citet{huang2024correcting} as this work assumes that preference data are collected from the initial policy. However, this assumption is restrictive, as it prevents using existing preference datasets collected from other models, which is a common approach in practical pipelines such as \citet{wang2024helpsteer2} and \citet{tunstall2023zephyr}.}

\begin{remark}[\normalfont Comparison with previous theoretical results on failure of DPO] 
Failure result in Proposition \ref{prop:po_overoptimization} is rigorous and constructed under a realistic setting close to practice: the policy class is a softmax with bounded rewards and the KL divergence between initial and best-in-class policy is bounded. This makes Proposition \ref{prop:po_overoptimization} stronger than prior arguments on overoptimization in DPO, which rely on unbounded rewards \cite{azar2024general}, updates to model parameters despite receiving no samples (hence, conclusion breaking in gradient-based optimization) \cite{huang2024correcting}, or events with probabilities approaching zero \cite{song2024importance}.
\end{remark}

\subsection{Type II Reward Hacking}\label{sec:typeIIrewardhacking}

Type II Reward Hacking can occur when poorly covered, \emph{good} choices in the dataset appear to be less favorable than their true value due to statistical errors, leading to the deterioration of the initial model after preference optimization. In the following proposition, we prove that many preference optimization methods are susceptible to Type II Reward Hacking.

\begin{proposition}[\textbf{Type II Reward Hacking in $\star$PO}]\label{prop:po_overoptimization_highC}
Consider the multi-armed bandits setting with the softmax policy class $\Pi_\theta$, as defined in \eqref{eq:softmax_policy}. Let $\pi_{\theta^\star} = \max_{\pi \in \Pi_\theta} J(\pi)$ represent the best-in-class policy. There exists a three-armed bandit problem with $\Pi_\theta$ parameterization and $\pi_{\theta_0} = \pi_{\theta^\star}$, such that for any $N \geq 3$, $\beta> 0, \eta \geq 0, \gamma$ and policy $\hat \pi \in \{\hat \pi_{\text{DPO}}, \hat \pi_{\text{DPO+SFT}}, \hat \pi_{\text{SimPO}}\}$ or $\hat \pi \in \{ \hat \pi_{\chi\text{PO}}, \hat \pi_{\text{IPO}}\}$ for $0 < \beta, \tau \leq 1$, the following holds with a constant probability of at least $(e(1+e))^{-1}$:
\begin{align*}
    J(\pi_{\theta^\star}) - J(\hat \pi ) > 0.1.
\end{align*}
\end{proposition}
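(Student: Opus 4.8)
The plan is to construct an explicit three-armed bandit where the initial model $\pi_{\theta_0}$ is \emph{already} the best-in-class policy, and where with constant probability a single "bad" preference sample forces every listed algorithm to move probability mass \emph{away} from a good arm. Concretely, I would take arms $\{A, B, C\}$ with true rewards chosen so that $A$ is clearly the best (say $r^\star(A) = 1$, $r^\star(B) = r^\star(C) = 0$ or similar small values), and set $\theta_0$ so that $\pi_{\theta_0} = \pi_{\theta^\star}$ puts the maximal allowed weight on $A$ given the box constraint $\theta(y) \in [0,1]$ — e.g. $\theta_0(A) = 1$, $\theta_0(B) = \theta_0(C) = 0$. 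The data distribution $\mu$ should place heavy coverage on pairs among $\{B, C\}$ (or on $(A, B)$ with $A$ reliably winning) but only a \emph{single} expected sample on the pair $(A, C)$ — the poorly covered good arm $A$ against $C$. By the Bradley–Terry model, that lone sample labels $C \succ A$ with probability $\sigma(r^\star(C) - r^\star(A))$, a constant bounded below; combined with the probability that such a sample appears at all among $N$ draws, I get the claimed constant probability $\geq (e(1+e))^{-1}$ (the same Poisson-type / binomial lower bound used in Proposition \ref{prop:po_overoptimization}).

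Next I would analyze each algorithm's objective \emph{on this event}. For DPO, DPO+SFT, and $\chi$PO, the reference policy appears in the objective; the key observation is that the $(A,C)$ sample with label $C \succ A$ contributes a term that is monotonically decreasing in $\log\frac{\pi_\theta(A|x)}{\pi_\theta(C|x)}$ (up to reference terms that are constants here since $\pi_{\text{ref}}$ is fixed), so the minimizer is pushed to \emph{decrease} $\theta(A) - \theta(C)$ relative to the reference. Since the reference already sits at the corner $\theta(A) = 1, \theta(C) = 0$, any movement in the direction favored by this term strictly lowers $\theta(A)$ or raises $\theta(C)$, hence strictly lowers $\pi_\theta(A)$ and therefore $J(\hat\pi)$. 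For SimPO the reference drops out but the same monotonicity in $\log\pi_\theta(A|x) - \log\pi_\theta(C|x)$ holds; for IPO the squared-loss target $\frac{1}{2\tau}$ with a flipped-sign sample again pulls the log-ratio downward. I would make the other data terms (the well-covered pairs) symmetric or reward-consistent enough that they cannot by themselves restore $\pi_\theta(A)$ to its optimal level — the cleanest route is to have the well-covered pairs involve only $B$ and $C$, so they say nothing about $A$, leaving the $(A,C)$ term as the sole force acting on $\theta(A)$. Then I would quantify: the constraint $\theta \in [0,1]^3$ plus the first-order condition forces $\pi_{\hat\theta}(A)$ down by a constant amount, translating (via $r^\star(A) = 1$ and the others near $0$) into $J(\pi_{\theta^\star}) - J(\hat\pi) > 0.1$ after choosing the reward gaps and $\mu$ appropriately. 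The restriction $0 < \beta, \tau \leq 1$ for $\chi$PO and IPO enters exactly as in Proposition \ref{prop:po_overoptimization}: it ensures the clipping in $\chi$PO is not active / the regularization is not so strong that the algorithm ignores the data, so the pull on $\theta(A)$ is genuinely realized.

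The main obstacle I anticipate is making the bound \emph{uniform over all} $\beta > 0$ (and $\eta \geq 0$, $\gamma$) for the DPO family while simultaneously handling small $\beta$: when $\beta \to 0$ the per-sample gradient signal from DPO/SimPO shrinks, so I must verify that the \emph{minimizer} of the finite-sample objective still sits strictly away from the corner regardless of how small $\beta$ is — this should work because scaling $\beta$ only rescales the objective, not the location of its argmin over the compact box $[0,1]^3$, but the argument needs care at the boundary where the argmin can stick. A secondary obstacle is DPO+SFT: the SFT term $-\mathbb{E}_{\mathcal{D}}[\log\pi_\theta(y^+|x)]$ could in principle \emph{help} by boosting $\log\pi_\theta(A)$ whenever $A$ is a chosen response — so I must arrange the instance so that $A$ appears as the \emph{rejected} response in the damaging sample and does not appear as a chosen response frequently enough elsewhere to compensate; putting the heavy coverage on $(B,C)$ pairs (where $A$ is absent entirely) again does the job. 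Finally I would double-check that the constructed $\pi_{\theta^\star}$ is genuinely $\arg\max_{\pi\in\Pi_\theta} J(\pi)$ — immediate since $J$ is increasing in $\theta(A)$ and decreasing in $\theta(B), \theta(C)$ over the box, so the corner $(1,0,0)$ is optimal — and that $\pi_{\theta_0} = \pi_{\theta^\star}$ as required by the statement.
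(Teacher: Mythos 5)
Your proposal is correct and follows essentially the same route as the paper's proof: the same instance (good arm with reward $1$ initialized at the optimal corner, heavy coverage on the pair of zero-reward arms, a single expected comparison pitting the good arm against a bad one), the same $(1-1/N)^{N-1}\cdot\sigma(-1)\ge (e(1+e))^{-1}$ probability bound, and the same per-algorithm argument that the objective restricted to the one free parameter is monotone over $[0,1]$ and hence drives it to the wrong corner (with the $\beta,\tau\le 1$ caveats for $\chi$PO and IPO entering exactly as you anticipate). The paper additionally uses an explicit oracle that reveals the optimal parameters of the two well-covered arms to reduce to a one-dimensional optimization, which is the formal version of your observation that the $(B,C)$ data exert no force on $\theta(A)$.
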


Proof of the above proposition can be found in Appendix \ref{sec:proof_po_overopt_highC}. An example of this type of reward hacking is illustrated in Figure \ref{fig:rewardhackingtypes}(b), where the initial policy has a high probability on the high-reward choice $C$. Yet, due to its low coverage, $C$ can appear unfavorable simply by chance. In such a scenario, the preference optimization methods analyzed in Proposition \ref{prop:po_overoptimization_highC} drastically reduce the likelihood of the high-reward choice $C$ from its initial likelihood.

Proposition \ref{prop:po_overoptimization_highC} states that even with a strong initial model, a preference dataset that poorly covers high-reward actions can lead to substantial deterioration of the initial model in existing approaches, even in methods such as DPO, IPO, and $\chi$PO that incorporate divergence-minimization. We note that the above setting is beyond the guarantees of traditional pessimistic offline RL, as these techniques typically do not consider access to an initial model and guarantee competing with the best covered policy in the data. {Despite this, as we see in Section \ref{sec:POWER_DL}, there may be hope to mitigate degradation of the initial model and better control the trade-off between Type I and Type II Reward Hacking.}

\section{{Against Type I Reward Hacking: Weighted Entropy Robust Rewards}}

\subsection{Weighted Entropy Reward Maximization}
We demonstrated that approaches involving divergence minimization remain vulnerable to reward hacking. Moreover, maintaining a small divergence can inadvertently lead to \textit{underoptimizing} the preference dataset, as it may risk overlooking policies that, although well-covered, deviate significantly from the initial policy.\footnote{Simply reducing $\beta$ to alleviate underoptimization may not always be viable. For example, reducing $\beta$ may reduce underoptimization in one state while inadvertently amplify overoptimization in another state.} These reasons motivate us to explore an alternative route and consider regularizing the reward maximization objective with the concept of \textit{weighted entropy}.

\begin{definition}[\textbf{{Weighted Entropy}; \citet{guiacsu1971weighted}}]\label{def:weighted_entropy} The weighted entropy of a (discrete) distribution $p(\cdot)$ with non-negative weights $w(y)$ is defined as $H_w(p) \coloneqq - \sum_y w(y) p(y) \log p(y).$
\end{definition}
Weighted entropy extends Shannon's entropy by incorporating weights associated with each outcome, reflecting attributes such as favorableness or utility toward a specific goal \cite{guiacsu1971weighted}. Building on this, we consider a weighted-entropy reward (WER) maximization objective:
\begin{align}\label{eq:weightedMaxEnt}
    \max_{\pi \in \Pi} \mathbb{E}_{x \sim \rho, y \sim \pi} \left[ r(x,y)\right] + \beta H_w(\pi),
\end{align}
where $H_w(\pi) \coloneqq \mathbb{E}_{x\sim \rho}[H_w(\pi(\cdot| x))]$. This objective expresses the principle of maximum (weighted) entropy \cite{jaynes1957information, guiasu1985principle}, promoting the selection of policies with maximum entropy---thus favoring the most uniform or unbiased policies---among those compatible with the constraints, such as achieving high rewards. 
Objective \eqref{eq:weightedMaxEnt} extends the well-established maximum entropy framework in RL, used in various settings such as exploration \cite{haarnoja2018soft}, inverse RL \cite{ziebart2008maximum}, and robust RL \cite{eysenbachmaximum}. 

\subsection{POWER: Preference Optimization with Weighted Entropy Robust Rewards}\label{sec:POWER_objective}
To mitigate reward hacking, we integrate the WER objective \eqref{eq:weightedMaxEnt} with a robust (adversarial) reward framework, inspired by favorable theoretical guarantees \cite{liu2024provably, cen2024value, fisch2024robust}. Specifically, we find a policy that maximizes WER \eqref{eq:weightedMaxEnt} against an adversarial reward, which seeks to minimize WER while fitting the preference dataset:
\begin{align}\label{eq:maxmin_problem}
    \max_{\pi \in \Pi} \min_{r \in \cR} \underbrace{L_{\text{BT}}(r)}_{\text{negative log-likelihood}}  + \eta \underbrace{\Big( \mathbb{E}_{x \sim \rho, y \sim \pi} \left[r(x, y) \right] - \mathbb{E}_{x \sim \rho, y' \sim \pi'} \left[r(x, y') \right]  + \beta H_w(\pi) \Big)}_{\text{WER minus baseline}},
\end{align}
where $\eta \geq 0$, $\cR$ is a reward function class, and $L_{\text{BT}}(r)$ is the negative log-likelihood of the BT model. We subtracted a {baseline} $\mathbb{E}_{x \sim \rho, y' \sim \pi'} \left[r(x, y') \right]$ that computes the average reward with respect to some policy $\pi'$, as the preference data under the BT model only reveal information on reward differences \cite{zhan2023provable}. This baseline plays a crucial role in simplifying the objective and establishing finite-sample guarantees, and we subsequently discuss reasonable choices for $\pi'$.

Under some regularity conditions (detailed in Appendix \ref{app:minimax_maximin_equiv}), the objective \eqref{eq:maxmin_problem} can be equivalently expressed as a minimax problem, which leads to a single-step preference objective presented in the proposition below; see Appendix \ref{app:power_objective_derivation} for the derivation and proof.

\begin{proposition}[\textbf{POWER Objective}]\label{prop:power_objective_derivation} Let $w(y)>0$ denote the weights in the weighted entropy $H_w(\pi)$ and $\pi_r$ denote the policy that maximizes the objective \eqref{eq:weightedMaxEnt}. Under certain regularity conditions on $\cR$ (Assumption \ref{assump:regularity_reward}) and for any $\beta > 0$, solving the maximin objective \eqref{eq:maxmin_problem} is equivalent to solving the following optimization problem: 
\begin{align}\label{eq:min_r_objective}
\begin{split}
    \min_{r \in \cR}  L_{\text{BT}}\Big(\beta \Big( w(y) \log \pi_r(y|x) + w(y) \Big) \Big) - \eta \mathbb{E}_{x \sim \rho, y' \sim \pi'} \Big[ \beta {w(y')} \log \pi_r(y'|x) \Big].
\end{split}
\end{align}
\end{proposition}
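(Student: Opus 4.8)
The plan is to exploit the minimax/maximin equivalence (the regularity conditions flagged in Appendix~\ref{app:minimax_maximin_equiv}) to swap the order of $\max_\pi$ and $\min_r$ in \eqref{eq:maxmin_problem}, and then solve the inner maximization over $\pi$ in closed form. Fix an arbitrary reward $r \in \cR$ and consider the inner problem $\max_{\pi \in \Pi} \eta\big(\mathbb{E}_{x\sim\rho, y\sim\pi}[r(x,y)] + \beta H_w(\pi)\big)$, where the baseline term $\mathbb{E}_{x\sim\rho,y'\sim\pi'}[r(x,y')]$ and $L_{\text{BT}}(r)$ are constants with respect to $\pi$. Since $\rho$ is fixed and the objective decomposes over contexts, it suffices to solve, for each $x$, the problem $\max_{p \in \Delta(\cY)} \sum_y p(y) r(x,y) - \beta \sum_y w(y) p(y) \log p(y)$. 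First I would write the Lagrangian for the simplex constraint $\sum_y p(y) = 1$ and take first-order conditions: $\partial/\partial p(y)$ gives $r(x,y) - \beta w(y)(\log p(y) + 1) - \lambda = 0$, hence $\log p(y) = \frac{r(x,y) - \lambda}{\beta w(y)} - 1$, i.e. the maximizer $\pi_r(\cdot|x)$ is the (weighted-entropy-regularized) distribution characterized by $\beta w(y)(\log \pi_r(y|x) + 1) = r(x,y) - \lambda(x)$ for a normalizing constant $\lambda(x)$. This is precisely the policy $\pi_r$ defined in the proposition statement as the maximizer of \eqref{eq:weightedMaxEnt}.

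The key algebraic step is then to invert this relation: from the first-order condition we read off
\[
r(x,y) = \beta\big(w(y)\log\pi_r(y|x) + w(y)\big) + \lambda(x),
\]
so $r$ differs from the reparametrized reward $\tilde r(x,y) := \beta(w(y)\log\pi_r(y|x) + w(y))$ only by a context-dependent shift $\lambda(x)$. Because the Bradley--Terry likelihood $L_{\text{BT}}$ depends on $r$ only through differences $r(x,y^+) - r(x,y^-)$ at a common context, the shift $\lambda(x)$ cancels and $L_{\text{BT}}(r) = L_{\text{BT}}(\tilde r)$. Substituting the optimal $\pi = \pi_r$ back into \eqref{eq:maxmin_problem}, the term $\mathbb{E}_{x\sim\rho,y\sim\pi_r}[r(x,y)] + \beta H_w(\pi_r)$ must also be re-expressed in terms of $\pi_r$; plugging $r(x,y) = \tilde r(x,y) + \lambda(x)$ and $\log\pi_r(y|x)$ from the stationarity condition, a short computation collapses $\eta$ times this bracket (minus the baseline) down to $-\eta\,\mathbb{E}_{x\sim\rho,y'\sim\pi'}[\beta w(y')\log\pi_r(y'|x)]$ plus terms that either vanish or are independent of $r$ — again using that additive context-shifts and $\pi$-independent constants do not affect the $\argmin_r$. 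This yields exactly \eqref{eq:min_r_objective}.

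The main obstacle I expect is twofold. First, one must be careful that the reparametrization $r \mapsto \pi_r$ is a genuine bijection (up to context-shifts) between $\cR$ and the induced policy class, which is where Assumption~\ref{assump:regularity_reward} on $\cR$ does the work — without it the equivalence of the two optimization problems (rather than just ``$\geq$'' or ``$\leq$'') can fail, and one also needs it to justify the minimax swap. Second, the bookkeeping in collapsing the WER bracket is the delicate part: one has to track which leftover terms are constants (droppable under $\argmin_r$), which are absorbed into the baseline term, and verify that the weights $w(y)$ appear in the stated form $\beta(w(y)\log\pi_r(y|x) + w(y))$ inside $L_{\text{BT}}$ rather than, say, $\beta w(y)\log\pi_r(y|x)$ alone — the ``$+w(y)$'' comes precisely from the ``$+1$'' in the derivative of $p\log p$, and getting its sign and placement right is the crux. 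I would finish by noting that when $w \equiv 1$ and $\pi' = \pi_{\text{ref}}$ the objective reduces to a DPO-like loss plus an SFT-style term, sanity-checking the formula against Table~\ref{tab:po_objectives_main}.
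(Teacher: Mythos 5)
Your proposal follows essentially the same route as the paper: swap $\max_\pi$ and $\min_r$ via the regularity assumption on $\cR$, characterize the inner maximizer $\pi_r$ through the Lagrangian first-order condition $r(x,y) = \beta\big(w(y)\log\pi_r(y|x) + w(y)\big) + \lambda(x)$ (the paper's Proposition \ref{prop:inner_solution}), observe that the context shift $\lambda(x)$ cancels inside $L_{\text{BT}}$, and collapse the WER bracket into the weighted SFT term. Your identification of the ``$+w(y)$'' as arising from the derivative of $p\log p$ and your caveats about droppable constants match the paper's argument, so the proposal is correct and not materially different.
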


We call the above objective preference optimization via weighted entropy robust rewards, or POWER. The first term in the above objective is the Bradley-Terry loss with rewards set to $w(y) \log \pi_r(y|x) + w(y)$, resulting in a reward gap expressed as a weighted difference of log probabilities of chosen and rejected responses. The second expectation is a \textit{weighted} negative log-likelihood (a.k.a. supervised fine-tuning, or SFT) regularizer over the baseline policy $\pi'$. 
\begin{remark} \citet{liu2024provably} propose an adversarial objective similar to \eqref{eq:maxmin_problem} that uses KL divergence instead of weighted entropy. From a theoretical perspective, our approach with weighted entropy improves over this work, such as through mitigating underoptimization; see Section \ref{sec:power_theory} for details. Moreover, our final algorithm presented in Algorithm \ref{alg:POWER-DL} is considerably different from the DPO+SFT objective \cite{liu2024provably, pal2024smaug} and significantly outperforms empirically as shown in Section \ref{sec:experiments}.
\end{remark}

\subsection{Finite-Sample Guarantees and Theoretical Benefits of POWER}\label{sec:power_theory}

The following theorem shows that POWER enjoys finite-sample guarantees on performance.

\begin{theorem}[\textbf{{Finite-Sample Performance Guarantees of POWER}}]\label{thm:suboptimality} Given a competing policy $\pi \in \Pi$, assume a bounded concentrability coefficient $C^\pi_{\mu}(\cR, \pi') < \infty$ as defined in Definition \ref{def:single_policy_concentrability}. Furthermore, assume realizability of the true reward function $r^\star \in \cR$, boundedness of rewards $\forall r \in \mathcal{R}: r(x,y) \in [0, R]$, and that the reward function class has a finite $\epsilon$-covering number $\mathcal{N}_\epsilon$ under the infinity norm. Define $\tilde{R} \coloneqq 1+\exp(R), \epsilon \asymp (\tilde{R} N)^{-1}$, $\iota = \sqrt{\log(\cN_{\epsilon})/\delta}$. Set $\eta \asymp {\iota}/{(\tilde{R}^2\sqrt{N})}$ and $\beta = {1}/{\sqrt{N}}$ in the objective \eqref{eq:maxmin_problem}. Then with a probability of at least $1-\delta$, policy $\hat \pi_{\text{POWER}}$ that solves \eqref{eq:maxmin_problem}
satisfies the following 
\begin{align*}
    J(\hat \pi_{\text{POWER}}) \gtrsim  J(\pi) - \frac{1}{\sqrt{N}} \left( \left( \big[C^\pi_{\mu}(\cR, \pi')\big]^2 + 1\right) \tilde{R}^2 \iota + H_w(\pi)\right).
\end{align*}
Furthermore, let $L$ denote the maximum response length. Selecting $w(y) = 1/|y|$ to be the inverse response length, one has 
\begin{align*}
    J(\hat \pi_{\text{POWER}}) \gtrsim  J(\pi) - \frac{1}{\sqrt{N}} \left( \left( \big[C^\pi_{\mu}(\cR, \pi')\big]^2 + 1\right) \tilde{R}^2 \iota + \log (L|\cV|) \right).
\end{align*}
\end{theorem}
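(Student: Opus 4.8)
The plan is to establish the bound in two stages: first a generic regret decomposition valid for any competing policy $\pi$, leveraging the maximin structure of \eqref{eq:maxmin_problem}, and then a statistical concentration step to control the empirical BT loss. The starting point is that $\hat\pi_{\text{POWER}}$ is a maximin solution, so for the adversarially chosen reward $\hat r$ attaining the inner minimum against $\hat\pi_{\text{POWER}}$, the objective value at $(\hat\pi_{\text{POWER}},\hat r)$ is at least the value at $(\pi,\hat r_\pi)$ for the true-reward-aware competitor; since $r^\star\in\cR$ is feasible for the inner adversary, one gets the chain
\[
L_{\text{BT}}(\hat r) + \eta\Big(\mathbb{E}_{\pi}[\hat r]-\mathbb{E}_{\pi'}[\hat r]+\beta H_w(\hat\pi_{\text{POWER}})\Big)\ \geq\ L_{\text{BT}}(r^\star)+\eta\Big(\mathbb{E}_{\pi}[r^\star]-\mathbb{E}_{\pi'}[r^\star]+\beta H_w(\pi)\Big).
\]
Rearranging isolates $\mathbb{E}_\pi[r^\star]-\mathbb{E}_{\hat\pi_{\text{POWER}}}[r^\star]$ — which after adding and subtracting $\mathbb{E}_{\hat\pi_{\text{POWER}}}[\hat r]$ and $\mathbb{E}_\pi[\hat r]$ relates $J(\pi)-J(\hat\pi_{\text{POWER}})$ to (i) the excess BT loss $L_{\text{BT}}(\hat r)-L_{\text{BT}}(r^\star)$ divided by $\eta$, (ii) reward-estimation errors of the form $\mathbb{E}_{\hat\pi_{\text{POWER}}}[\hat r-r^\star]$ and $\mathbb{E}_{\pi'}[\hat r-r^\star]$, and (iii) the entropy term $\beta H_w(\pi)$ together with $\beta H_w(\hat\pi_{\text{POWER}})\geq 0$.

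The second stage is the concentration argument. Standard MLE-for-BT analysis (as in \citet{zhan2023provable} or \citet{liu2024provably}) shows that with probability $1-\delta$, any $\hat r$ that is near-optimal for the empirical BT loss satisfies a population-level squared-error bound: the expected squared difference in reward gaps between $\hat r$ and $r^\star$ under the data distribution $\mu$ is $\lesssim \tilde R^2\iota^2/N$, after using the $\epsilon$-net over $\cR$, the boundedness $r\in[0,R]$, and the fact that the sigmoid link has derivative bounded below by $\sim 1/\tilde R$ on the relevant range. This is where the definition $\tilde R = 1+\exp(R)$ and the choice $\epsilon\asymp(\tilde R N)^{-1}$ enter. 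To convert this in-distribution guarantee into the off-distribution errors (ii) above, I invoke the concentrability coefficient $C^\pi_\mu(\cR,\pi')$ from Definition \ref{def:single_policy_concentrability}: by Cauchy–Schwarz, $|\mathbb{E}_{\hat\pi_{\text{POWER}}}[\hat r-r^\star] - \mathbb{E}_{\pi'}[\hat r-r^\star]|$ (a difference of reward gaps, hence insensitive to the missing baseline) is bounded by $C^\pi_\mu(\cR,\pi')$ times the $L^2(\mu)$ error, giving a term $\lesssim C^\pi_\mu(\cR,\pi')\tilde R\iota/\sqrt N$. Choosing $\eta\asymp\iota/(\tilde R^2\sqrt N)$ balances term (i), which is $O(\tilde R^2\iota/(\eta N)) = O(\tilde R^2\iota/\sqrt N)$ after absorbing constants, against a square term $\eta C^\pi_\mu{}^2\tilde R^2/1$ arising when one completes the square in the decomposition; and $\beta=1/\sqrt N$ makes the entropy contribution $H_w(\pi)/\sqrt N$. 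Collecting terms yields the first display. The second display is then immediate: with $w(y)=1/|y|$, the weighted entropy of any policy over responses of length at most $L$ over vocabulary $\cV$ is bounded by $\log(L|\cV|)$, since $H_w(\pi(\cdot|x)) = -\sum_y |y|^{-1}\pi(y|x)\log\pi(y|x)$ is a length-normalized entropy of a distribution on at most $|\cV|^L$ atoms, and the per-token normalization caps it at $\log|\cV|$ up to the $\log L$ correction from summing the normalization — I would verify this combinatorial bound carefully as it is the one genuinely new ingredient beyond the first display.

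The main obstacle I anticipate is the reduction from the maximin objective \eqref{eq:maxmin_problem} to something tractable: one must either use the minimax/maximin equivalence (Appendix \ref{app:minimax_maximin_equiv}) so that $\hat\pi_{\text{POWER}}$ can be identified with the policy $\pi_{\hat r}$ induced by the adversarial reward via Proposition \ref{prop:power_objective_derivation}, or argue directly on the saddle structure; handling the interplay between the entropy regularizer (which controls the policy but also appears with a sign that must be shown harmless) and the adversary's freedom is the delicate part. A secondary technical point is making sure the baseline subtraction $\mathbb{E}_{\pi'}[r]$ is exploited correctly so that only reward \emph{gaps} — the only thing the BT likelihood identifies — ever appear in the off-distribution error terms; getting the concentrability coefficient to multiply a gap rather than an absolute reward is what keeps the bound finite.
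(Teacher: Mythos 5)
Your overall architecture is the right one---maximin optimality plus realizability of $r^\star$, then concentration of the BT excess loss via Hellinger/TV and the bi-Lipschitz sigmoid, Cauchy--Schwarz against the concentrability coefficient, completing the square to pick $\eta$, and the $\log(L|\cV|)$ bound on the weighted entropy---and all of those second-stage ingredients match the paper's proof. But the regret decomposition you start from has a genuine flaw, in two related places. First, the displayed chain does not follow from the two facts you cite: writing $\phi(\pi,r)$ for the objective in \eqref{eq:maxmin_problem}, maximin optimality gives $\phi(\hat\pi_{\text{POWER}},\hat r)=\min_{r\in\cR}\phi(\hat\pi_{\text{POWER}},r)\geq\min_{r\in\cR}\phi(\pi,r)$, while feasibility of $r^\star$ for the adversary gives $\min_{r\in\cR}\phi(\pi,r)\leq\phi(\pi,r^\star)$; these point in opposite directions and cannot be chained into $\phi(\hat\pi_{\text{POWER}},\hat r)\geq\phi(\pi,r^\star)$. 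Second, and more consequentially, your decomposition retains reward-estimation errors of the form $\mathbb{E}_{\hat\pi_{\text{POWER}}}[\hat r-r^\star]$, which you then propose to control with $C^\pi_{\mu}(\cR,\pi')$. By Definition \ref{def:single_policy_concentrability} that coefficient only bounds gap errors whose outer expectation is taken under the \emph{competitor} $\pi$ (paired with $\pi'$); it says nothing about the learned policy $\hat\pi_{\text{POWER}}$, which need not be covered by $\mu$ at all. Controlling that term would require all-policy concentrability, which is exactly what a single-policy analysis must avoid.

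The repair is to apply realizability on the $\hat\pi_{\text{POWER}}$ side rather than on the $\pi$ side: from $\phi(\hat\pi_{\text{POWER}},r^\star)\geq\min_{r\in\cR}\phi(\hat\pi_{\text{POWER}},r)\geq\min_{r\in\cR}\phi(\pi,r)$ one lower-bounds $\mathbb{E}_{x\sim\rho,\,y\sim\hat\pi_{\text{POWER}},\,y'\sim\pi'}[r^\star(x,y)-r^\star(x,y')]$ with the correct sign, and after rearranging, every occurrence of $\hat\pi_{\text{POWER}}$ cancels except its weighted entropy, which enters as $-\beta H_w(\hat\pi_{\text{POWER}})\leq 0$ and is discarded by nonnegativity of $H_w$. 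What survives is
$\max_{r\in\cR}\bigl\{\mathbb{E}_{x\sim\rho,\,y\sim\pi,\,y'\sim\pi'}\bigl[r^\star(x,y)-r^\star(x,y')-(r(x,y)-r(x,y'))\bigr]+\eta^{-1}\bigl(L_{\text{BT}}(r^\star)-L_{\text{BT}}(r)\bigr)\bigr\}+\beta H_w(\pi)$,
so the off-distribution error is evaluated only under $\pi$ and $\pi'$, precisely where $C^\pi_{\mu}(\cR,\pi')$ applies. From that point on your plan---the concentration bound giving $L_{\text{BT}}(r^\star)-L_{\text{BT}}(r)\lesssim -\Delta_r^2/\tilde R^2+\iota/N$ uniformly over the cover, then $C\Delta_r-2\Delta_r^2/(\eta\tilde R^2)\leq C^2\eta\tilde R^2/8$, then the stated choices of $\eta$ and $\beta$, and the $\log|\cV|+\log L$ bound on the length-weighted entropy---goes through exactly as you describe.
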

Proof of the above theorem can be found in Appendix \ref{app:suboptimality}. Below, we discuss the implications of the above theorem and the guidelines it offers for practical choices.

\textbf{Guarantees against Type I Reward Hacking.} Theorem \ref{thm:suboptimality} shows that the policy learned by POWER competes with the best policy \emph{covered} in the dataset, where the notion of coverage is characterized by single-policy concentrability, considered the gold standard in offline RL \cite{rashidinejad2021bridging,xie2021bellman,zhan2023provable}. This implies that as long as a favorable policy is covered in the preference data, POWER is robust to existence of poorly covered, subpar policies and thus mitigates Type I Reward Hacking. Moreover, as Theorem \ref{thm:suboptimality} does not impose a parametric form on the class $\cR$, guarantees hold for general function classes.

\textbf{Benefits of weighted entropy and choice of weights.} A non-zero weighted entropy term $(\beta > 0)$ is essential in obtaining the one-step optimization problem in \eqref{eq:min_r_objective} and establishing its equivalence to the maximin problem, as this term induces strict concavity in the objective \eqref{eq:weightedMaxEnt}. Moreover, using KL-regularization leads to rates that grow with the divergence of competing policy and initial policy \cite{liu2024provably}, which can be large or even unbounded. However, weighted entropy ensures bounded rates and thus mitigates underoptimization. Theorem \ref{thm:suboptimality} suggests that a particularly appealing choice for weights is the inverse response length $w(y) = 1/|y|$, which intuitively discourages learning policies that generate lengthy responses. Theoretically, while using Shannon entropy ($w(y) = 1$) results in a convergence rate that grows linearly with response length, with weights $w(y) = 1/|y|$ convergence rate only depends on the logarithm of the vocabulary (token) size and logarithm of response length. Other choices for weights include response preference scores and per-sample importance weights.

\textbf{Choice of the baseline policy.} The rate in Theorem \ref{thm:suboptimality} is influenced by the concentrability coefficient $C^\pi_{\mu}(\cR, \pi')$, which is impacted by the baseline policy $\pi'$. Inspecting the definition of concentrability coefficient in Definition \ref{def:single_policy_concentrability}, a reasonable choice to make this coefficient small is selecting the distribution of chosen responses in the dataset \cite{zhan2023provable, liu2024provably}.

With the above choices applied to the objective \eqref{eq:min_r_objective}, a practically appealing version of POWER becomes:
\begin{tcolorbox}[
  colback=citeColor!10,
  colframe=black,
  boxrule=0.3pt,
  arc=3mm,
  boxsep=1pt,
  top=1pt,
  left=1pt,
  right=1pt
]
{
 \begin{align}\label{eq:POWER_objective}
          \max_\theta  \mathbb{E}_{\mathcal{D}} \bigg[ \log \sigma \bigg[ \beta \left[ \frac{\log \pi_\theta(y^+|x)}{|y^+|} - \frac{\log \pi_\theta(y^-|x)}{|y^-|} + \frac{1}{|y^+|}  - \frac{1}{|y^-|} \right] \bigg] \bigg] + \eta \beta \mathbb{E}_{\mathcal{D}} \left[ \frac{\log \pi_\theta(y^+|x)}{|y^+|} \right]
 \end{align}
}
\end{tcolorbox}
\medskip
\medskip

\begin{remark} Objective \eqref{eq:POWER_objective} shares similarities to SimPO \cite{meng2024simpo} but has important differences. First, objective \eqref{eq:POWER_objective} includes a length-normalized SFT term, which is key in mitigating Type I Reward Hacking (Theorem \ref{thm:suboptimality}, Proposition \ref{prop:power_typeI_typeII}), from which SimPO suffers (Proposition \ref{prop:po_overoptimization}). Second, our approach analytically leads to the margin $1/|y^+| - 1/|y^-|$ while SimPO uses a fixed hyperparameter. Lastly, our objective is rooted in theory and enjoys finite-sample guarantees.
\end{remark}

\subsection{POWER Faced with Hard Instances and the Role of Partition Function} 

In the following proposition, we analyze the POWER objective \eqref{eq:POWER_objective} in the hard reward hacking instances of Proposition \ref{prop:po_overoptimization} and Proposition \ref{prop:po_overoptimization_highC}. Proof is presented in Appendix \ref{app:power_facedwith_typeItypeII}.

\begin{proposition}\label{prop:power_typeI_typeII} (I) Consider the three-armed bandit instance in Proposition \ref{prop:po_overoptimization}. Then, for any $\beta > 0$ and $\eta > \frac{(2+e)}{N - (2+e)}$, POWER policy $\hat \pi_{\text{POWER}}$ that solves the objective \eqref{eq:POWER_objective} is the best-in-class policy: $\hat \pi_{\text{POWER}} = \pi_{\theta^\star}$. (II) Consider the three-armed bandit instance in Proposition \ref{prop:po_overoptimization_highC}. Then, for any $\beta > 0, \eta \geq 0$, policy $\hat \pi_{\text{POWER}}$ that solves the objective \eqref{eq:POWER_objective} suffers from a constant suboptimality $J(\pi_{\theta^\star}) - J(\hat \pi_{\text{POWER}}) > 0.2.$

\end{proposition}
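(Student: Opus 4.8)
The plan is to analyze the POWER objective \eqref{eq:POWER_objective} directly on the two explicit three-armed bandit instances, exploiting the fact that with the softmax class $\Pi_\theta$ over three arms $\{A,B,C\}$ the objective is a smooth concave function of the parameter differences, so its optimizer is characterized by first-order stationarity. First I would recall (from the proofs of Propositions \ref{prop:po_overoptimization} and \ref{prop:po_overoptimization_highC}, which I may assume) the precise construction of each instance: the true rewards $r^\star(A), r^\star(B), r^\star(C)$, the reference policy $\pi_{\theta_0}$, the data distribution $\mu$, and — crucially — the ``bad event'' of probability at least $(e(1+e))^{-1}$ on which a poorly covered arm is mislabeled. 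Since POWER is length-normalized and here every ``response'' is a single arm with $|y|=1$, the margin term $1/|y^+|-1/|y^-|$ vanishes and the weighted-entropy weights $w(y)=1$, so \eqref{eq:POWER_objective} reduces to a DPO-like Bradley–Terry loss on the gap $\beta(\theta(y^+)-\theta(y^-))$ plus an SFT term $\eta\beta\,\mathbb{E}_{\mathcal D}[\theta(y^+)]$ (up to the partition-function / normalization bookkeeping, which I would handle carefully since $Z_\theta$ couples the three coordinates).

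For part (I), I would condition on the worst-case realization of the dataset consistent with Proposition \ref{prop:po_overoptimization}: a single mislabeled sample on the low-reward arm $C$ (so $C$ appears preferred over $A$) together with the remaining $N-1$ samples favoring the high-reward arm $A$. I would then write the gradient of \eqref{eq:POWER_objective} with respect to each $\theta(y)$, restricted to the box $[0,1]^3$, and show that for $\eta > (2+e)/(N-(2+e))$ the SFT term's pull toward increasing $\theta$ on the (correctly labeled, well-covered) high-reward arm dominates the Bradley–Terry term's pull toward the mislabeled arm, forcing the maximizer to the corner that realizes the best-in-class policy $\pi_{\theta^\star}$. The key quantitative step is bounding the Bradley–Terry gradient contribution: $\sigma'$ is at most $1/4$ and the single bad sample contributes weight $O(1/N)$, while the SFT gradient contributes $\eta\beta$ times the fraction of chosen mass on the good arm; comparing these, together with the explicit constant $2+e$ coming from the BT probability $\sigma(1)=e/(1+e)$ on the reward gap of size $1$, yields exactly the stated threshold on $\eta$. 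I would then check that at that corner the constraint $\theta(y)\in[0,1]$ is active consistently and that the resulting policy indeed equals $\pi_{\theta^\star}$, so $J(\pi_{\theta^\star})-J(\hat\pi_{\text{POWER}})=0$.

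For part (II), the argument is structurally simpler because the SFT term cannot help: in the Proposition \ref{prop:po_overoptimization_highC} instance the high-reward arm $C$ is poorly covered and, on the bad event, is labeled \emph{rejected}, so $C$ never appears as $y^+$ and the SFT regularizer $\eta\beta\,\mathbb{E}_{\mathcal D}[\log\pi_\theta(y^+|x)/|y^+|]$ places \emph{zero} weight on $\theta(C)$ for any $\eta\ge 0$. Meanwhile the Bradley–Terry term actively decreases $\theta(C)$ relative to the arm it lost to. I would show the maximizer therefore drives $\theta(C)$ to the bottom of the box (or at least far below its initial value), so $\hat\pi_{\text{POWER}}$ puts small mass on $C$ despite $\pi_{\theta_0}=\pi_{\theta^\star}$ being concentrated there, and compute the induced drop in $J$ to be at least the stated $0.2$ using the explicit rewards of the instance. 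The main obstacle I anticipate is part (I): handling the coupling through the partition function $Z_\theta$ and verifying that the optimum of the constrained concave program sits exactly at the desired corner (rather than in the interior or at a different face) for \emph{all} $\beta>0$ uniformly — this requires a careful sign analysis of all three partial derivatives on each face of $[0,1]^3$, and pinning down that the threshold $(2+e)/(N-(2+e))$ is the exact break-even point rather than merely a sufficient condition.
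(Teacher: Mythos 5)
Your overall strategy matches the paper's: reduce the POWER objective on each explicit instance to a one-dimensional monotonicity statement about the free parameter and locate the optimum at a corner of the box. But three concrete points in your described computation are off. First, in the paper's instances the oracle already reveals the best-in-class values of $\theta(1)$ and $\theta(2)$, so only $\theta(3)$ is optimized; the ``careful sign analysis of all three partial derivatives on each face of $[0,1]^3$'' you anticipate is not needed (and proving the optimum sits exactly at $(1,0,0)$ without the oracle for all $\beta>0$ would be a genuinely harder claim than the proposition makes). Second, your accounting of the threshold is wrong in two places: the constant $2+e$ does not come from the Bradley--Terry probability $\sigma(1)=e/(1+e)$ but from the partition function --- the SFT term contributes $-\eta\,\partial_{\theta(3)}\log Z_\theta = -\eta\exp(\theta(3))/(\exp(\theta(3))+e+1)\le -\eta/(2+e)$ --- and the Bradley--Terry gradient is bounded by the derivative of $\log\sigma$, namely $\sigma(-\cdot)\le 1$, not by $\sigma'\le 1/4$. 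Balancing $\tfrac{1}{N}(1+\eta)$ against $\tfrac{\eta}{2+e}$ is what yields $\eta>\tfrac{2+e}{N-(2+e)}$; with your constants you would land on a different threshold.

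Third, in Part (II) your claim that the SFT regularizer ``places zero weight on $\theta(C)$'' because $C$ never appears as a chosen response is false: $\log\pi_\theta(y^+)=\theta(y^+)-\log Z_\theta$ depends on $\theta(C)$ through the partition function, so the SFT term actively \emph{decreases} $\theta(C)$ (its $\theta(C)$-derivative is $-\eta\beta\exp(\theta(C))/Z_\theta<0$). This error happens to be benign --- the true SFT contribution reinforces, rather than opposes, the downward pull of the Bradley--Terry term, so the conclusion $\theta(C)=0$ and the suboptimality $\tfrac{e}{2+e}-\tfrac13>0.2$ still follow --- but the mechanism you describe is not the one operating, and the same partition-function coupling is precisely what makes Part (I) work, so it cannot be ignored in either part.
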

Proposition \ref{prop:power_typeI_typeII} confirms that POWER robustly (for any $\beta >0$ and $\eta \gtrsim 1/N$) prevents Type I Reward Hacking in the hard instance of Proposition \ref{prop:po_overoptimization}, where other preference optimization algorithms DPO, SimPO, IPO, and $\chi$PO fail. Yet, the above proposition shows that POWER suffers from Type II Reward Hacking, which the design dynamic labels in the following section.

\section{Against Type II Reward Hacking: Dynamic Labels}\label{sec:POWER_DL}
We now turn our focus to mitigating Type II Reward Hacking, based on the following intuition: keeping the model's internal preferences close to initialization in the low-coverage regions (trust the preference labels less) while learning from the data in the high-coverage regions (trust the preference labels more). 

For this purpose, we analyze the learning dynamics of preference optimization in the bandits setting with softmax parameterization $\pi_\theta(y) \propto \exp(\theta(y))$. We denote the dataset by $\mathcal{D} = \{(y^0, y^1, l)\}$ with labels $l \sim \prob_{r^\star}(\cdot|y^0, y^1)$. We use $\hat \mu_{0,1}$ to indicate the empirical probability of comparing $y^0$ with $y^1$, and $\hat \mu_{1 \succ 0}$ the empirical probability of preferring $y^1$ over $y^0$. To simplify presentation, we consider POWER with $w(y) = 1, \eta = 0, \beta = 1$; a similar analysis can be extended to other objectives. 

\textbf{Reverse engineering label updates based on learning dynamics.} Rather than using static preference labels, we allow the labels $l_t$ to evolve across gradient updates. Denote the parameter gap corresponding to two actions $y^0, y^1$ by $d_{\theta_t}(y^1, y^0) \coloneqq \theta_t(y^1) - \theta_t(y^0)$. We show in Appendix \ref{app:learning_dynamics_derivation} that
isolated (batch) gradient updates on $y^0$ and $y^1$ is:
\begin{align}\label{eq:theta_gap_ld}
\begin{split}
    & d_{\theta_{t+1}}(y^1, y^0) = d_{\theta_t}(y^1, y^0) + \alpha \hat \mu_{0,1} \Big[(\hat \mu_{1 \succ 0}  - \hat \mu_{0 \succ 1}) l_t - \big( \sigma \left( d_{\theta_t}(y^1, y^0) \right) - \hat \mu_{0 \succ 1}\big) \Big]
\end{split}
\end{align}
We design labels $l_t$ so that gradient updates are rapidly diminished for poorly covered preference pairs, ensuring that preferences for such pairs remain close to initialization. To achieve this, we first directly set the gradient in \eqref{eq:theta_gap_ld} to zero and derive a ``stationary'' label $\bar l_t$:
\begin{align}\label{eq:stationary^-abel_eta0}
    & (\hat \mu_{1 \succ 0}  - \hat \mu_{0 \succ 1}) l_t - \big(  \sigma (d_{\theta_t}(y^1, y^0)) - \hat \mu_{0 \succ 1}\big) = 0 \quad
    \Rightarrow \quad \bar{l}_t = \frac{\sigma \left( d_{\theta_t}(y^1, y^0)\right) - \hat \mu_{0 \succ 1}}{\hat \mu_{1 \succ 0}  - \hat \mu_{0 \succ 1}}.
\end{align}
$\bar{l}_t$ represents the ratio between a \textit{learned} preference gap and the \textit{empirical} preference gap, and we have $\bar{l}_t = 1$ when learned and empirical preferences are equal. To implement dynamic preference labels, we employ the following update rule for $l_t$, where $\gamma$ ranges between 0 and 1:
\begin{align}\label{eq:dynamic_labels}
    l_{t+1} = (1-\gamma)l_t + \gamma \bar{l}_t, \quad l_0 = 1.
\end{align}
In the following theorem, we analyze the coupled dynamical systems described by equations \eqref{eq:dynamic_labels} and \eqref{eq:theta_gap_ld}; see Appendix \ref{app:proof_learning_dynamics} for the proof.

\begin{theorem}[\textbf{Learning Dynamics with Label Updates}]\label{thm:learning_dynamics}
Consider the following set of differential equations with initial values $l_0 = 1$ and any $d_0$:
\begin{align}\label{eq:two_ode}
\begin{split}
\dot{d}_t 
& = \alpha \hat \mu_{0,1} \Big( (\hat \mu_{1\succ 0} - \hat \mu_{0 \succ 1}) l_t - (\sigma(d_t) - \hat \mu_{0 \succ 1}) \Big)\\
\dot{l}_t & 
= -\frac{\gamma}{\hat \mu_{1 \succ 0} - \hat \mu_{0 \succ 1}}  \Big( (\hat \mu_{1\succ 0} - \hat \mu_{0 \succ 1}) l_t - (\sigma(d_t) - \hat \mu_{0 \succ 1}) \Big)
\end{split}
\end{align}
Assume $\hat \mu_{1 \succ 0} > 1/2$ and let $c = \min \{\sigma(d_0) (1-\sigma(d_0), \hat \mu_{1 \succ 0} \hat \mu_{0 \succ 1}\} $. For any $\epsilon_l \ll 1$, fix $\mu_l, \mu_h, T, \gamma, \hat \mu_{0,1}, \alpha$ such that $\alpha \mu_l/\epsilon_l \leq \gamma \leq 1/2 \exp(-1/4) \alpha \mu_h \leq 1$ and $\alpha \hat\mu_{1 \succ 0} T \geq 1$.
\begin{enumerate}
    \item (Low Coverage Case) 
    When $\hat \mu_{0,1} \leq \mu_l$, we have $|\sigma(d_T) - \sigma(d_0) | \leq |d_T - d_0 |\leq \epsilon_l$.
    \item (High Coverage Case) When $\hat \mu_{0,1} \geq \mu_h$, we have $(\sigma(d_T) - \hat\mu_{1 \succ 0})^2 \leq \exp \big(-\alpha c \hat \mu_{0,1} T \big)$.  
\end{enumerate}
\end{theorem}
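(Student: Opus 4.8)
The plan is to collapse the coupled system \eqref{eq:two_ode} into a single scalar ODE for $d_t$ by exploiting an exact linear conservation law between $l_t$ and $d_t$, and then handle the two coverage regimes with elementary comparison (Gr\"onwall) arguments.

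\emph{Reduction to one dimension.} Write $A \coloneqq \hat\mu_{1\succ 0}-\hat\mu_{0\succ 1}$ (so $A>0$ since $\hat\mu_{1\succ 0}>1/2$) and let $G_t \coloneqq A\,l_t - \bigl(\sigma(d_t)-\hat\mu_{0\succ 1}\bigr)$ be the common bracket in \eqref{eq:two_ode}, so that $\dot d_t = \alpha\hat\mu_{0,1}\,G_t$ and $\dot l_t = -(\gamma/A)\,G_t$. Then $\tfrac{d}{dt}\bigl(A\,l_t + \tfrac{\gamma}{\alpha\hat\mu_{0,1}}\,d_t\bigr)=0$, so with $l_0=1$ one has the conservation law $A(l_t-1) = -\tfrac{\gamma}{\alpha\hat\mu_{0,1}}(d_t-d_0)$. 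Substituting this back and using the algebraic identity $A+\hat\mu_{0\succ 1}=\hat\mu_{1\succ 0}$ gives
\[
  \dot d_t \;=\; \alpha\hat\mu_{0,1}\bigl(\hat\mu_{1\succ 0}-\sigma(d_t)\bigr)\;-\;\gamma\,(d_t-d_0),\qquad d_0\ \text{given},
\]
with $l_t$ recovered from the conservation law; moreover $G_0=\hat\mu_{1\succ 0}-\sigma(d_0)$ and, differentiating, $\dot G_t=-\bigl(\gamma+\alpha\hat\mu_{0,1}\sigma'(d_t)\bigr)G_t$.

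\emph{Low-coverage case.} Let $e_t\coloneqq d_t-d_0$, so $e_0=0$ and, using $|\hat\mu_{1\succ 0}-\sigma(\cdot)|\le 1$, $\tfrac{d}{dt}|e_t|\le\alpha\hat\mu_{0,1}-\gamma|e_t|$. Comparing with $\dot z=\alpha\hat\mu_{0,1}-\gamma z$, $z_0=0$ (solution $(\alpha\hat\mu_{0,1}/\gamma)(1-e^{-\gamma t})$), gives $|d_T-d_0|\le\alpha\hat\mu_{0,1}/\gamma$. When $\hat\mu_{0,1}\le\mu_l$ this is $\le\alpha\mu_l/\gamma\le\epsilon_l$ by the hypothesis $\alpha\mu_l/\epsilon_l\le\gamma$, and $|\sigma(d_T)-\sigma(d_0)|\le|d_T-d_0|\le\epsilon_l$ because $\sigma$ is $1$-Lipschitz. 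This proves Part 1.

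\emph{High-coverage case, and the main obstacle.} Put $d_\star\coloneqq\sigma^{-1}(\hat\mu_{1\succ 0})$; assume $d_0<d_\star$ (the case $d_0>d_\star$ is identical with the endpoints swapped, and $d_0=d_\star$ gives $d_t\equiv d_\star$). Since $\dot d_0=\alpha\hat\mu_{0,1}(\hat\mu_{1\succ 0}-\sigma(d_0))>0$ and, were $d_t$ to reach $d_\star$ at a first time $t_1$, the scalar ODE would force $\dot d_{t_1}=-\gamma(d_\star-d_0)<0$ (a contradiction), the trajectory stays in $[d_0,d_\star)$. On $[d_0,d_\star]$, $\sigma'(d)=\sigma(d)(1-\sigma(d))\ge\min\{\sigma'(d_0),\sigma'(d_\star)\}=\min\{\sigma(d_0)(1-\sigma(d_0)),\ \hat\mu_{1\succ 0}(1-\hat\mu_{1\succ 0})\}=c$ (recall $\hat\mu_{0\succ 1}=1-\hat\mu_{1\succ 0}$), since $\sigma'$ is unimodal in $d$ and hence minimized over an interval at an endpoint. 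Feeding $\sigma'(d_t)\ge c$ into $\dot G_t=-(\gamma+\alpha\hat\mu_{0,1}\sigma'(d_t))G_t$ yields $|G_T|\le|G_0|\,e^{-\alpha\hat\mu_{0,1}cT}\le e^{-\alpha\hat\mu_{0,1}cT}$; combining with $\sigma(d_T)-\hat\mu_{1\succ 0}=-G_T-\tfrac{\gamma}{\alpha\hat\mu_{0,1}}(d_T-d_0)$ and $|d_T-d_0|\le d_\star-d_0\le|\hat\mu_{1\succ 0}-\sigma(d_0)|/c$ (integrate $\sigma'\ge c$ along the path), squaring gives the claimed estimate, \emph{provided} one shows the drift contribution $\tfrac{\gamma}{\alpha\hat\mu_{0,1}}(d_T-d_0)$ is dominated by $e^{-\alpha\hat\mu_{0,1}cT/2}$. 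This last step is the crux: the perturbation $-\gamma(d_t-d_0)$ shifts the stationary point of the scalar ODE off $d_\star$, so $\sigma(d_t)$ relaxes to $\hat\mu_{1\succ 0}$ only up to an $O(\gamma/(\alpha\hat\mu_{0,1}))$ residual, and making this residual — together with the geometric term — fit under $e^{-\alpha\hat\mu_{0,1}cT}$ is precisely where the hypotheses $\gamma\le\tfrac12 e^{-1/4}\alpha\mu_h\le\tfrac12 e^{-1/4}\alpha\hat\mu_{0,1}$ and $\alpha\hat\mu_{1\succ 0}T\ge 1$ enter. A convenient alternative to the $G_t$-route is the Lyapunov function $V_t=(\sigma(d_t)-\hat\mu_{1\succ 0})^2$, for which the invariant interval gives $\dot V_t\le-\alpha\hat\mu_{0,1}c\,V_t$ as long as $V_t$ exceeds an $O((\gamma/(\alpha\hat\mu_{0,1}))^2)$ threshold, after which $V_t$ cannot climb back out; the same smallness hypotheses then close the bound.
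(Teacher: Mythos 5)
Your reduction via the first integral $A\,l_t + \tfrac{\gamma}{\alpha\hat\mu_{0,1}}\,d_t = \text{const}$ is exactly the conservation law at the heart of the paper's proof (written there as $\tfrac{\gamma}{2\hat\mu_{1\succ 0}-1}\dot d_t + \alpha\hat\mu_{0,1}\dot l_t = 0$), and your low-coverage argument is sound: the paper instead proves a Gr\"onwall lower bound $l_t\ge -\kappa+(\kappa+1)e^{-\gamma t}$ with $\kappa=\hat\mu_{0\succ1}/(2\hat\mu_{1\succ0}-1)$ and feeds it into the integrated conservation law to get $|d_T-d_0|\le\alpha\hat\mu_{0,1}\hat\mu_{1\succ0}/\gamma$, while your direct comparison on $|d_t-d_0|$ gives $\alpha\hat\mu_{0,1}/\gamma$; both are $\le\epsilon_l$ under $\alpha\mu_l/\epsilon_l\le\gamma$. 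Part~1 is therefore complete, by an essentially equivalent route.

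Part~2 is not a proof. You have correctly assembled the ingredients --- the invariant interval between $d_0$ and $\sigma^{-1}(\hat\mu_{1\succ0})$, the lower bound $\sigma'(d_t)\ge c$ there, and the contraction $\dot G_t=-(\gamma+\alpha\hat\mu_{0,1}\sigma'(d_t))G_t$ --- but the step you yourself flag as the crux is genuinely missing, and neither of your sketched routes closes it as written. In the $G_t$ route, the residual $\tfrac{\gamma}{\alpha\hat\mu_{0,1}}|d_T-d_0|$ does not decay in $T$, and your proposed control $|d_T-d_0|\le|\hat\mu_{1\succ0}-\sigma(d_0)|/c$ blows up as $c\to0$ (e.g.\ $\sigma(d_0)$ near $0$ or $1$), so it does not even give an $O(1)$ bound on the residual, let alone place it under $e^{-\alpha c\hat\mu_{0,1}T/2}$. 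In the Lyapunov route, ``the same smallness hypotheses then close the bound'' is precisely the nontrivial content: the paper (i) derives $\dot V_t\le 2\alpha\hat\mu_{0,1}\sigma(d_t)\sigma(-d_t)\bigl(-V_t+(\kappa+1)(1-e^{-\gamma t})\bigr)$ from the Gr\"onwall bound on $l_t$, (ii) splits into the case where $V_t$ never drops below the threshold $2(\kappa+1)(1-e^{-\epsilon})$ (there $V_t$ is non-increasing, the invariant-interval argument yields $\sigma(d_t)\sigma(-d_t)\ge c$, and $V_T\le e^{-c\alpha\hat\mu_{0,1}T}$) and the case where it does (there $V_T$ remains below the threshold), and (iii) verifies, using $c\le1/4$, $e^{-x}\ge1-x$, the lower bound on $\alpha T$, and $\gamma\le\tfrac12e^{-1/4}\alpha\mu_h$, that the threshold itself lies below $e^{-c\alpha\hat\mu_{0,1}T}$. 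Step (iii) is where the otherwise mysterious constant $\tfrac12e^{-1/4}$ is actually consumed; without carrying it out the claimed bound does not follow, because --- as you observe --- the label drift shifts the stationary point of the reduced ODE away from $\sigma^{-1}(\hat\mu_{1\succ0})$, so the error cannot be driven to zero by the contraction alone.
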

The above theorem shows that for poorly covered pairs (small $\hat \mu_{1,0}$), learned preferences $\sigma(d_T)$ remain close to initialization, while for high coverage pairs (large $\hat \mu_{1,0}$), learned preferences converge to empirical preferences. In a sense, $\gamma$ determines the level of \textit{conservatism}, adjusting the threshold of what considered poor coverage. Moreover, the convergence rate in the high-coverage case is impacted by empirical preferences through $c$. In the case of nearly equal preferences $\hat \mu_{1 \succ 0} \approx 1/2$, the convergence rate is faster, whereas in the case of strong preference with $\hat \mu_{1 \succ 0} \rightarrow 1$, the convergence rate is slower suggesting that more updates are required to further distinguish the two choices.

\begin{remark}[Related work on soft labels in RLHF] In preference optimization, \citet{mitchel2023cdpo} considers noisy preference labels and incorporates constant soft labels through linear interpolation. Concurrent work by \citet{furuta2024geometric} develop a geometric averaging approach, in which samples are weighted according to the preference gap using scores from a reward model. 
In the context of reward learning, \citet{zhu2024iterative} propose iterative data smoothing that updates labels toward learned preferences. In contrast to these methods, our approach is rooted in updating labels to shrink gradients of poorly covered pairs via a general recipe whereby dynamic labels are smoothly updated toward labels that set the gradient to zero. This approach goes beyond constant soft labels and does not require scores from an extra reward or preference model. Moreover, label updates in prior works do not guarantee remaining close to a (non-uniform) initial model in the low coverage areas, which aims at mitigating Type II Reward Hacking in the offline alignment setting.
\end{remark}

\textbf{POWER with Dynamic Labels.} Our final algorithm POWER-DL (Algorithm \ref{alg:POWER-DL}) integrates the POWER objective with dynamic labels against reward hacking. In untrustworthy regions, POWER-DL interpolates between the initial model and robust rewards, allowing to trade off the two types of reward hacking through adjusting conservatism parameters $\eta$ and $\gamma$, reflecting relative quality of the initial model compared to preference data and up to removing conservatism completely by setting $\eta = \gamma = 0$. We highlight the fact that divergence-based methods aim at keeping the learned model close to the initial model wherever the learned model has a decent probability, regardless of data coverage. In contrast, the dynamic label procedure aims at keeping the learned model close to the initialization only in the untrustworthy regions while learning from the data in high coverage region, which can alleviate potential over-pessimism. All these factors can lead to a better performance, as supported by our empirical evaluations in Section \ref{sec:experiments}. See Appendix \ref{app:reward_hacking_types} for further discussion.

\vspace{-0.05cm}
\section{Experiments}\label{sec:experiments}

\subsection{Experimental Setup}\label{sec:experiments_setup}

We conduct experiments to assess different preference optimization methods on aligning LLMs across four settings, varying in dataset size and level of distribution shift between the initial model and data. We follow two pipelines: Helpsteer2 \cite{wang2024helpsteer2}, which employs smaller datasets, and Zephyr \cite{tunstall2023zephyr} with significantly larger datasets. We implement two distinct setups similar to \citet{meng2024simpo}: the \textit{base} setup that uses an existing preference dataset and the \textit{instruct} setup that constructs a preference dataset by sampling from the initial model. These two setups allow evaluating across different levels of \textit{distribution shift} between the initial model and preference data.

\textbf{Helpsteer2 setups.} In the {base setup}, we train \href{https://huggingface.co/meta-llama/Meta-Llama-3-8B}{Llama-3-8B} on the \href{https://huggingface.co/datasets/OpenAssistant/oasst2}{OpenAssistant2 dataset} \cite{kopf2024openassistant} to create the initial model. We conduct preference optimization using the \href{https://huggingface.co/datasets/nvidia/HelpSteer2}{Helpsteer2 dataset} \cite{wang2024helpsteer2}, selecting responses based on helpfulness scores and discarding ties, yielding about 7K samples.
In the {instruct setup}, we use \href{https://huggingface.co/meta-llama/Meta-Llama-3-8B-Instruct }{Llama-3-8B-Instruct} as the initial model and generate a preference dataset from Helpsteer2 prompts. Following \citet{wang2024helpsteer2}, we generate 10 responses per prompt with temperature 0.7. We then score them with \href{https://huggingface.co/RLHFlow/ArmoRM-Llama3-8B-v0.1}{Armo reward model} \cite{wang2024interpretable} and select the highest and lowest score responses as $y^+$ and $y^-$, respectively. 

\textbf{Zephyr setups.} In the base setup, we obtain the initial model by training \href{https://huggingface.co/meta-llama/Meta-Llama-3-8B}{Llama-3-8B} base model on the \href{https://huggingface.co/datasets/HuggingFaceH4/ultrachat_200k}{UltraChat-200K dataset} \cite{ding2023enhancing}. We then perform preference optimization on the \href{https://huggingface.co/datasets/HuggingFaceH4/ultrafeedback_binarized}{UltraFeedback dataset} \cite{cui2023ultrafeedback}, comprising approximately 61K samples. In the instruct setup and following \citet{meng2024simpo}, we start from \href{https://huggingface.co/meta-llama/Meta-Llama-3-8B-Instruct }{Llama-3-8B-Instruct} and generate 5 responses with temperature 0.8 per prompt in the UltraFeedback dataset. As before, the highest and lowest score responses are selected as preference response pairs.

\textbf{Evaluation benchmarks.} We primarily assess preference methods by evaluating the trained models on standard instruction-following benchmarks: AlpacaEval 2.0 \cite{li2023alpacaeval,dubois2024length} and Arena-Hard \cite{li2024live}, which evaluate the quality of the model responses. Following standard guidelines, for Arena-Hard, we report the win rate (WR) of the model's responses against responses from GPT-4-Turbo. For AlpacaEval 2.0, in addition to the WR against GPT-4-Turbo, we report the length-controlled (LC) win rate, designed to mitigate bias toward verbosity. We further evaluate the performance of models on MT-Bench \cite{zheng2023judging} and downstream tasks such as mathematics, reasoning, truthfulness, and instruction-following \cite{beeching2023open}.

\textbf{Preference optimization methods.} We compare POWER-DL against various baselines; see Appendix \ref{app:PO_objectives} for details. These include divergence-base methods DPO \cite{rafailov2024direct}, IPO \cite{azar2024general}, offline SPPO \cite{wu2024self}, and $\chi$PO \cite{huang2024correcting}, along with robust variants such as conservative DPO (cDPO) \cite{mitchel2023cdpo}, robust preference optimization (ROPO) \cite{liang2024robust}, R-DPO \cite{park2024disentangling}, and DPO+SFT \cite{pal2024smaug, liu2024provably}. We also evaluate against reference-free methods CPO \cite{xu2024contrastive}, SLiC-HF \cite{zhao2023slic}, RRHF \cite{yuan2024rrhf}, ORPO \cite{hong2024orpo}, and SimPO \cite{meng2024simpo}.

\subsection{Benchmark Results}

\textbf{POWER-DL outperforms SoTA methods on alignment benchmarks.}
Table \ref{tab:consolidated_benchmarks_models} presents the results on alignment benchmarks. POWER-DL consistently outperforms other methods in both Helpsteer2 and Zephyr pipelines and across base and instruct settings. These improvements can largely be attributed to the integration of weighted entropy, which effectively counters underoptimization, and mitigation of reward hacking. Notably, POWER-DL surpasses other robust methods such as cDPO and ROPO demonstrating its efficacy in handling poorly covered samples. Additionally, POWER-DL improvements are more pronounced in the base setting, which is more susceptible to reward hacking due to higher levels of distribution shift. Comparing POWER-DL with POWER shows that incorporating dynamic labels further improves performance. In Appendix \ref{app:additional_experiments} and Appendix \ref{app:mistral}, we provide additional experimental results on the MT-Bench, Mistral family, iterative preference optimization, sample responses, and hyperparameter robustness analysis.

\begin{table}[t!]
\centering
\caption{AlpacaEval 2 and Arena-Hard results on Helpsteer2 and Zephyr settings.}
\label{tab:consolidated_benchmarks_models}
\resizebox{\textwidth}{!}{
\setlength{\tabcolsep}{1pt}
\begin{tabular}{lcccccccccccc}
\toprule
\textbf{Method} & \multicolumn{6}{c}{\textbf{Helpsteer2}} & \multicolumn{6}{c}{\textbf{Zephyr}} \\
\cmidrule(lr){2-7} \cmidrule(lr){8-13}
 & \multicolumn{3}{c}{\textbf{Llama3-8B-Base}} & \multicolumn{3}{c}{\textbf{Llama3-8B-Instruct}} & \multicolumn{3}{c}{\textbf{Llama3-8B-Base}} & \multicolumn{3}{c}{\textbf{Llama3-8B-Instruct}} \\
& \multicolumn{2}{c}{AlpacaEval} & \multicolumn{1}{c}{Arena-Hard} & \multicolumn{2}{c}{ AlpacaEval} & {Arena-Hard} & \multicolumn{2}{c}{AlpacaEval} & \multicolumn{1}{c}{Arena-Hard} & \multicolumn{2}{c}{AlpacaEval} & {Arena-Hard} \\
\midrule 
 & { LC(\%)} & { WR(\%)} & {WR(\%)} & {LC(\%)} & {WR(\%)} & {WR(\%)} & {LC(\%)} & {WR(\%)} & {WR(\%)} & {LC(\%)} & {WR(\%)} & {WR(\%)} \\
\midrule
Initial Model & 8.02 & 5.42 & 2.4 & 33.41 & 32.40 & 23.0 & 4.76 & 2.83 & 2.0 & 33.41 & 32.40 & 23.0 \\
\midrule 
DPO & 18.52 & 14.99 & 10.0 & 40.87 & 39.05 & 29.6 & 22.53 & 17.84 & 13.3 & 44.20 & 43.63 & 38.4 \\
DPO+SFT & 18.33 & 12.93 & 7.9 & 39.85 & 37.51 & 27.0 & 19.11 & 14.69 & 9.5 & 45.98 & 44.07 & 39.0 \\
cDPO & 19.06 & 14.65 & 8.5 & 42.27 & 40.36 & 34.4 & 21.06 & 16.33 & 11.4 & 44.96 & 44.37 & 39.5 \\
R-DPO & 11.03 & 15.20 & 8.3 & 33.67 & 33.89 & 25.7 & 18.66 & 17.88 & 9.5 & 44.13 & {44.94} & 37.5 \\
IPO & 20.11 & 14.60 & 9.4 & 42.95 & 40.76 & 30.8 & 10.55 & 8.04 & 7.2 & 36.63 & 35.30 & {24.5} \\
$\chi$PO & 11.06 & 7.67 & 5.1 & 42.10 & 39.65 & \textbf{35.8} & 13.16 & 10.87 & 8.9 & 44.25 & 42.41 & 34.7 \\
SPPO & 26.23 & 18.12 & 11.8 & 42.01 & 39.46 & 29.5 & 16.08 & 15.52 & 9.1 &  42.64 & 39.68  & 35.9 \\
CPO & 15.07 & 16.78 & 8.3 & 35.90 & 35.20 & 26.8 & 7.01 & 6.84 & 3.0 & 36.39 & 35.40 & 22.8 \\
RRHF & 8.25 & 7.15 & 5.8 & 35.15 & 34.07 & 25.7 & 6.61 & 6.39 & 3.0 & 35.56 & 34.56 & 23.1 \\
SLiC-HF & 15.19 & 18.77 & 10.1 & 37.76 & 39.68 & 32.2 & 19.35 & 21.81 & 11.2 & 41.74 & 45.05 & 38.2 \\
ORPO & 23.99 & 16.91 & 11.2 & 43.01 & 35.68 & 27.1 & 23.20 & 19.43 & 14.7 & 45.51 & 40.95 & {33.3} \\
SimPO & 25.35 & 19.30 & 13.7 & 43.23 & 36.89 & 32.6 & 24.38 & 21.21 & 16.4 & 43.24 & 37.34 & 26.8 \\
ROPO & 21.24 & 17.66 & 9.5 & 41.03 & 36.32 & 31.5 & 22.91 & 19.67 & 10.9 & 45.55 & \textbf{45.58} & 33.7 \\
\midrule 
POWER-DL & \textbf{31.52} & \textbf{31.44} & \textbf{21.5} & \textbf{47.16} & \textbf{43.08 }& 34.8 & \textbf{27.00} &\textbf{ 22.57} & \textbf{17.3} & \textbf{48.97} & 43.75 & \textbf{41.5} \\
POWER & 29.57 &  30.00 & 19.0 & 43.52   &  40.19 &  31.5 & 23.72 &21.26 & 16.0 & 46.93  & 42.02 & 38.0 \\ 
\bottomrule
\end{tabular}
} \vspace{-0.2cm}
\end{table}

\textbf{POWER-DL improves or maintains performance on downstream tasks.} One of the challenges of the alignment step is possible degradation of performance on downstream tasks, which can be attributed to reward hacking \cite{xu2024perfect}. We evaluate the trained models on the LLM Leaderboard \cite{beeching2023open}, which encompass a variety of tasks, including language understanding and knowledge benchmarks MMLU \cite{hendrycks2020measuring}, MMLU-PRO \cite{wang2024mmlu}, and ARC-Challenge \cite{clark2018think}, commonsense reasoning assessments like HellaSwag \cite{zellers2019hellaswag} and Winogrande \cite{sakaguchi2021winogrande}, factual accuracy evaluations on TruthfulQA \cite{lin2022truthfulqa}, instruction-following capabilities measured on IFEval \cite{zhou2023instruction}, and mathematical reasoning evaluated on the GSM8K dataset \cite{cobbe2021training}.

The downstream tasks results are presented in Tables \ref{tab:tasks_helpsteer2} and \ref{tab:tasks_zephyr} in Appendix \ref{app:additional_experiments}. POWER-DL consistently improves or maintains performance across all tasks and effective mitigates reward hacking. Notably, while preference optimization methods vary in results on the GSM8K benchmark, with some like SimPO significantly degrading the initial model, POWER-DL consistently maintains or enhances performance, achieving up to a \textbf{7.0} point gain. Other tasks with notable variation include IFEval and TruthfulQA benchmarks. In TruthfulQA, POWER-DL significantly outperforms DPO, with up to a \textbf{12.8} point improvement over the initial model. In the IFEval, methods like DPO and SLiC-HF sometimes degrade performance of the initial model, whereas POWER-DL consistently maintains or improves it by up to \textbf{11.7} points.

\section{Discussion}
We studied reward hacking in offline preference optimization. We identified two types of reward hacking stemming from statistical fluctuations in preference data. We demonstrated that many existing methods are vulnerable to both types of reward hacking, despite maintaining a small divergence from the initial model. To mitigate reward hacking, we introduced POWER-DL, a practical algorithm based on a weighted entropy robust reward framework augmented with dynamic preference labels. POWER-DL enjoys theoretical guarantees and achieves strong empirical performance. Future research directions include applications of dynamic labels to out-of-distribution robustness and investigating the interplay between statistical errors and reward misspecification in reward hacking.

\bibliographystyle{assets/plainnat}
\bibliography{paper}

\clearpage

\newpage 

\appendix

\section{Additional Notation}\label{app:notation}

We use calligraphy letters to denote sets, e.g., $\cX, \cY$. Given a response $y$, we write $|y|$ to denote the length of the response in the number of tokens. We denote by $\cV$ the vocabulary set and write $|\cV|$ to denote the cardinality of the token space. We write $x \lesssim y$ when there exists a constant $c$ such that $x \leq cy$ and similarly, write $x \asymp y$ when there exists a constant $c$ such that $x = c y$. We write $y^1 \succ y^0$ denoting that $y^1$ is preferred over $y^0$ in the dataset. For any two discrete probability distributions $\pi$ and $\pi'$ over $\cY$, we define the KL divergence $D_{\text{KL}}(\pi \| \pi') \coloneqq \mathbb{E}_{y \sim \pi}[\log \frac{\pi(y)}{\pi'(y)}]$. The probability simplex over a set $\cX$ is denoted by $\Delta(\cX)$. We write $\mathbb{1}\{x=c\}$ to denote the indicator function, which is equal to $1$ when $x = c$ and zero otherwise. We write $\mathbb{E}_{\mathcal{D}}$ to denote the empirical average over data.

\section{Related Work}

\subsection{RLHF and Preference Optimization} {Earlier works on reinforcement learning from human preferences mainly focused on the continuous control domain \cite{wirth2017survey} such as Atari games \cite{christiano2017deep}. Recently, RLHF has been extensively applied in the natural language domain \cite{ziegler2019fine} to improve alignment of LLMs with human preferences in various areas such as summarization \cite{stiennon2020learning, wu2021recursively}, information accuracy \cite{menick2022teaching}, and instruction following \cite{ouyang2022training}.}

Classical RLHF pipeline includes two steps of reward learning and policy optimization using RL, commonly using variants of proximal policy optimization (PPO) algorithm \cite{schulman2017proximal} that involves on-policy sampling. Direct preference optimization \cite{rafailov2024direct} simplifies the two-step process into a single-step {offline} optimization of the policy, reducing computational burden and training instabilities of PPO. DPO has inspired development of new preference optimization objectives from a practical perspective such as IPO \cite{azar2024general}, RRHF \cite{yuan2024rrhf}, SLiC-HF \cite{zhao2023slic}, CPO \cite{xu2024contrastive}, ORPO \cite{hong2024orpo}, R-DPO \cite{park2024disentangling}, SimPO \cite{meng2024simpo}, and general preference optimization \cite{tang2024generalized} and theoretical perspective such as $\chi$PO \cite{huang2024correcting} and RPO (DPO+SFT) \cite{liu2024provably}. Our approach also falls under the category of offline preference optimization. We theoretically analyzed several of the mentioned methods and demonstrated theoretical benefits offered by our approach POWER-DL. We also showed that POWER-DL outperforms prior methods empirically across a variety of settings.

Going beyond the Bradley-Terry model of human preferences, some works consider general preference models \cite{munos2023nash,swamyminimaximalist,rosset2024direct,choi2024self,wu2024self}, and develop algorithms aiming at finding the Nash equilibrium. Recently \citet{huang2024correcting} showed that this generalization comes at a cost of an information-theoretic limit, where no statistically efficient algorithm exists to solve RLHF with general preferences under single-policy concentrability. Another line of work focuses on iterative, online improvement of language models through self-play \cite{chen2024self, wu2024self, xu2023some, yuanself}.

\subsection{Understanding Reward Hacking}\label{app:understanding_reward_hacking}

{The phenomenon of reward hacking in training AI models has been observed in a variety of domains, ranging from games \cite{ibarz2018reward} to natural language \cite{paulus2018deep} to autonomous driving \cite{knox2023reward}. In the language modeling domain, existing RLHF algorithms are observed to be susceptible to reward hacking \cite{gao2023scaling, casper2023open, amodei2016concrete, lambert2023alignment}. Reward hacking in LLMs manifests in different ways such as verbosity \cite{shen2023loose,singhal2023long, wang2023far}, refusing to follow instructions \cite{rottger2024xstest}, lazy generations \cite{lambert2023alignment}, emergence of language \cite{lewis2017deal}, degradation of performance on downstream tasks such as reasoning \cite{xu2024perfect}, and other problems such as hedging and self-doubt \cite{Schulman2023proxy}.}

\textbf{Origins of reward hacking.} {  In RL/RLHF, reward hacking can originate from various factors such as reward misspecification \cite{amodei2016concrete, hadfield2017inverse, knox2023reward}, diversity and inconsistencies in human preferences \cite{chakraborty2024maxmin}, labeling noise \cite{wang2024secrets}, human labeler bias \cite{bansal2024peering}, and statistical errors \cite{liu2024provably}. \citet{pan2022effects} study reward hacking due to human misspecification of the reward model and empirically assess the impact of model size, optimization, and training on reward hacking, given synthetic misspecified reward models. \citet{wei2024jailbroken} attribute failure modes of LLM safety training to conflicts between model's capabilities and safety goals. \citet{bansal2024peering} study mismatch arising from annotator bias in different types of human rating data. \citet{peng2023stabilizing} explain that variations of reward distribution across different tasks can lead to reward hacking. \citet{rame2024warm} attribute reward hacking in classical RLHF to distribution shift and human preference inconsistencies. \citet{tien2023causal} conduct an empirical study, revealing that non-causal distractor features, human bias and noise, and partial observability exacerbate reward misidentification. 

\citet{lambert2023alignment} argue that objective mismatch in RLHF originates from learning reward model, policy training, and evaluation, and links between each pair, and suggest further research is needed to understand objective mismatch in preference optimization due to entanglement of policy and reward. \citet{rafailov2024scaling} conduct an empirical study of reward hacking in direct preference optimization methods, showing that in larger KL regimes, preference optimization methods suffer from degradations reminiscent of overoptimization in RLHF. In contrast to the above works, we focus on reward hacking in offline preference optimization that originates from statistical fluctuations due to partial data coverage. Furthermore, the mentioned works conduct empirical studies, whereas here we present statistical learning theory characterizations of reward hacking.

\subsection{Reward Hacking Types and Comparison with Pessimism in Offline RL}\label{app:reward_hacking_types}

We now highlight the differences between the setting considered in this paper and conventional offline RL, explaining usefulness of defining two types of reward hacking. In the practice of RLHF fine-tuning of LLMs, we typically have access an initial model, which already has decent performance on many downstream tasks, and a previously-collected preference data, which may not have been sampled from initial model \cite{xu2024perfect, wang2024helpsteer2}. In this setting, we face two sources of distribution shift: one between the final model and data distribution, and the other between the initial model and data distribution. This setting is different from conventional offline RL, which considers access to an offline dataset (with possibly known data collection policy) and is only concerned with distribution shift between final model and data distribution \cite{levine2020offline}. 

Due to the existence of two sources of distribution shift, we find it useful to define Type I and Type II Reward Hacking. These definitions motivate the design of our algorithm that achieves strong empirical performance. Furthermore, our empirical results removing dynamic labels (POWER vs. POWER-DL) as well as removing the SFT term (Appendix \ref{sec:hyperparameter_robustness}) show that the two components contribute achieving the best empirical performance. \citet{liu2024provably} and \citet{huang2024correcting} also consider reward hacking due to partial data coverage. However, \citet{huang2024correcting} assume preference data are collected from the initial model, which eliminates the distribution shift between initial model and data distribution. \citet{liu2024provably} propose DPO+SFT to handle the distribution shift between the final model and data distribution, yet; reward hacking due to degradation of the initial model is not considered. In this paper, we present separate analysis for POWER (Theorem \ref{thm:suboptimality}) and dynamic labels (Theorem \ref{thm:learning_dynamics}). Combining these two results into a unified analysis of POWER-DL is challenging due to extending the analysis of dynamic labels to general function approximation, which we leave for future work.
} 

\subsection{Mitigating Reward Hacking}\label{app:mitigating_reward_hacking}

Various approaches have been proposed to mitigate reward hacking from applied and theoretical perspectives. {\citet{michaud2020understanding} propose using interpretability techniques for probing whether learned rewards are aligned with human preferences. To mitigate reward hacking, several methods leverage multiple reward models. \citet{moskovitz2024confronting, xu2024perfect} develop constrained policy optimization frameworks that leverage multiple reward models and assign weights to each of the reward models to mitigate reward hacking. Reward model ensembles \cite{coste2024reward, zhai2023uncertainty} aim to characterize uncertainty and can alleviate reward hacking; however, empirical investigations observe that they are not sufficient \cite{eisenstein2023helping}. \citet{rame2024warm} propose averaging the weights of multiple trained reward models instead of ensembles to improve efficiency and performance. In contrast to these methods, our approach does not require access to or training multiple reward models. To reduce the computational costs of ensemble methods, \citet{zhang2024overcoming} construct lightweight uncertainty estimation via linear approximation that yields a pessimistic reward model. However, such uncertainty quantification requires restrictive neural tangent kernel and approximately linear assumptions while the guarantees for our approach hold under general function approximation.

Other works use additional data to reduce reward hacking. \citet{rita2024countering} leverage additional human demonstrations to calibrate the reward model and \citet{shen2024trickle} propose methods that leverage data augmentation to improve consistency of the reward model. RaFT \cite{dong2023raft} reduces instabilities of RLHF through iterative supervised finetuning that only keeps the highest ranked responses from a reward model. \citet{peng2023stabilizing} propose using advantage models instead of values. We propose theoretically-founded methods to mitigate reward hacking, focusing on the offline setting. Our approaches are implemented with simple modifications to objective of DPO and directly leverage previously-collected dataset; without requiring training any additional models, generating or augmenting data, or computationally expensive operation.}

Among preference optimization methods, the most common approach is divergence regularization that aims at keeping the learned model close to the initial model, through metrics such as KL divergence \cite{rafailov2024direct}, $f$-divergence \cite{wang2024beyond}, or Chi-squared divergence \cite{huang2024correcting}. {  $\beta$DPO \cite{wu2024beta} calibrates $\beta$, which is the strength of divergence minimization, based on the implicit reward gap and dynamically subsamples each batch to increase robustness with respect to outliers.} Other methods such as conservative DPO \cite{mitchel2023cdpo} and ROPO \cite{liang2024robust} design robust variants of DPO. We proved that many divergence-based methods still suffer from reward hacking and showed that our proposed methods outperform divergence-based and prior robust methods empirically.  

\subsection{Mitigating Specific Manifestations of Reward Hacking}
Several works focus on mitigating specific artifacts of reward hacking such as verbosity through various designs. Design of length-controlled winrate Alpaca-Eval \cite{dubois2024length} aims at making the evaluation more robust against length exploitation, through estimation of controlled direct effect \cite{vanderweele2011controlled}. ODIN \cite{chen2024odin} enforces disentanglement of preference estimation and response length by using two linear heads. In preference optimization, length-normalization \cite{meng2024simpo, grinsztajn2024averaging, yuan2024rrhf} and length regularization \cite{park2024disentangling} are used to mitigate length exploitation. 

In this paper, we consider reward hacking due to partial data coverage that can manifest in many different ways and not just length. Our weighted entropy approach provides a general framework for handling specific manifestations of reward hacking by selecting weights $w(y)$ that are smaller for undesirable response properties, such as inverse response length or preference scores. Furthermore, our approach provides a theoretically-sound way of incorporating such weights into preference optimization objective (Proposition \ref{prop:power_objective_derivation}), that is different from previous methods. For example, compared to SLiC-HF and SimPO, our approach results in a weight gap in the preference optimization objective and a weighted SFT term. In our practical implementation, we used inverse response length as weights, which we show in Theorem \ref{thm:suboptimality} prevents sample complexity to depend linearly on the response length.

Another potential benefit of our weighted entropy approach is disentangling entropy (controlled through $\beta$) and conservatism components (controlled through $\eta$ and $\gamma$). Adjusting the level of stochasticity of final learned model through $\beta$ may result in alleviating the notorious \textit{overconfidence} challenge, in which RLHF-finetuned models become overconfident and have sharpened output probability \cite{leng2024taming, kadavath2022language}. In contrast, in DPO, $\beta$ is the coefficient of the KL divergence, which impacts both pessimism and stochasticity of the learned model. 

\subsection{Theory of RLHF and Preference Optimization}
A series of works study theoretical foundations for RLHF and preference optimization under different settings \cite{zhu2023principled,xiong2023gibbs,zhu2024iterative,liu2024provably,huang2024correcting,song2024importance,fisch2024robust}. \citet{xiong2023gibbs, zhu2023principled} propose provable pessimistic offline RLHF algorithms either through confidence regions or lower confidence bounds, but are restricted to the linear family of models. \citet{zhu2023principled} show that maximum entropy IRL is similar to the maximum likelihood under the Plackett-Luce models; however, entropy in maximum entropy IRL is different from our use of weighted entropy in objective \eqref{eq:maxmin_problem}, which goes beyond optimizing the Bradley-Terry loss. Other works that develop provable algorithms with general function approximation \cite{chen2022human, zhan2023provable, zhan2023query,wang2023rlhf, li2023reinforcement} involve intractable computation. Exceptions include $\chi$PO \cite{huang2024correcting} and DPO+SFT \cite{liu2024provably, cen2024value}, which we have compared with POWER-DL from theoretical and empirical fronts. 

\section{Proofs for Reward Hacking}

\subsection{Type I Reward Hacking: Proof of Proposition \ref{prop:po_overoptimization}}\label{app:po_overoptimization}

We first construct two multi-armed bandit (MAB) instances and then analyze each algorithm. 

\subsubsection{MAB Instances for Type I Reward Hacking}\label{sec:hard_mab_overoptimization_typeI}

We construct a three-armed bandit problem, with true rewards $r^\star(1)=1 $, $r^\star(3) = 0$, and all actions having length one $|y| = 1$. We consider a preference data distribution that has high coverage on the high reward arms, where the probability of comparing arms 1 and 2 is $\mu_{1,2} = 1 - 1/N$ and the probability of comparing arms 1 and 3 is $\mu_{1,3} = 1/N$. In this scenario, there is a constant probability that arm 3 is compared with arm 1 exactly once. To demonstrate this, let $N(i,j)$ denote the number of comparisons between arms $i$ and $j$. We have $\prob(N(1,3) = 1) = N (1-\mu_{1,3})^{N-1} \mu_{1,3} = (1-1/N)^{N-1}.$ For any $N \geq 2$, the above probability is bounded below according to
\begin{align}\label{prop_proof:eq:lower_bound_one_sample}
    \prob(N(1,3) = 1) = (1-1/N)^{N-1} \geq 1/e.
\end{align}
Conditioned on the event $N(1,3) = 1$, there is a constant probability that arm 3 is preferred over arm 1 according to the Bradley-Terry model:
\begin{align}\label{prop_proof:eq:BT_probability}
    \prob_{r^\star}(3 \succ 1) = \sigma(r^\star(3) - r^\star(1)) = \sigma(-1) = 1/(1+e).
\end{align}
Throughout the rest of the proof, we condition on the event $\cE = \{N(1,3) = 1 \text{ and } 3 \succ 1\}$ which occurs with a probability of at least $1/e(1+e)$. We further consider two special instances of the above MAB problem, with the following specifications for the initial model parameters and the reward of the second arm:
\begin{itemize}
    \item \textbf{Instance 1:} True reward of the second arm is $r^\star(2) = 0$ and initial parameters are $\theta_0(1) = \theta_0(2) = \theta_0(3) = 1$. In this case, the best-in-class softmax policy has parameters $\theta^\star(1) = 1, \theta^\star(2) = \theta^\star(3) = 0$. 
    \item \textbf{Instance 2:} True reward of the second arm is $r^\star(2) = 1$ and initial parameters are $\theta_0(1) = \theta_0(2) = 1, \theta_0(3) = 0$. In this case, the best-in-class softmax policy has parameters $\theta^\star(1) = \theta^\star(2) = 1, \theta^\star(3) = 0$. 
\end{itemize}
We additionally consider a favorable scenario, in which an oracle reveals the best-in-class values of the first two arms $\theta^\star(1), \theta^\star(2)$. This simplifies the preference optimization objectives as it remains for the preference optimization algorithm to find $\theta(3)$.

\subsubsection{Analysis of $\star$PO Methods in the MAB Instances in Section  \ref{sec:hard_mab_overoptimization_typeI}}
We first record the following expression for the difference of the log probabilities of any two arms in the softmax policy class:
\begin{align}\label{prop_proof:eq:logit_gap_softmax}
\begin{split}
    \log \pi_\theta(y) - \log \pi_\theta(y') & = \log \exp(\theta(y)) - \log Z_\theta - \log \exp(\theta(y')) + \log Z_\theta \\
    & = \theta(y) - \theta(y').
\end{split}
\end{align}

\textbf{Suboptimality of DPO.} We show that DPO fails in both instances constructed in Section \ref{sec:hard_mab_overoptimization_typeI}. Since parameters $\theta(1)$ and $\theta(2)$ are revealed by the oracle, the optimization problem solved by DPO simplifies to 
\begin{align*}
    & \max_{\theta(3) \in [0,1]} \frac{1}{N} \left[\log \sigma \left(\beta \left(\log \frac{\pi_\theta(3)}{\pi_{\theta_0}(3)} - \log \frac{\pi_\theta(1)}{\pi_{\theta_0}(1)}\right) \right)\right]\\
    = &  \max_{\theta(3) \in [0,1]} \log \sigma \left(\beta \left(\theta(3) - \theta_0(3) - \theta(1) - \theta_0(1)\right) \right)\\
    = &  \max_{\theta(3) \in [0,1]} \log \sigma \left(\beta \left(\theta(3) - \theta_0(3)\right) \right)
\end{align*}
The first equality is due to \eqref{prop_proof:eq:logit_gap_softmax} and the second equality is because $\theta_0(1) = \theta(1) = 1$. For $\beta > 0$ and regardless of $\theta_0(3)$, the above function is increasing in $\theta(3)$, and thus the maximum occurs at $\theta(3) = 1$. As a result, DPO fails in both instances constructed in Section \ref{sec:hard_mab_overoptimization_typeI}, learning a policy with constant suboptimality:
\begin{align*}
    \text{Instance 1:} \quad & J(\pi^\star) - J(\hat \pi_{\text{DPO}}) = \mathbb{E}_{y \sim \pi_{\theta^\star}}[r^\star(y)] - \mathbb{E}_{y \sim \hat \pi_{\text{DPO}} }[r^\star(y)] = \frac{e}{2+e} - \frac{e}{1+2e} > 0.15\\
    \text{Instance 2:} \quad & J(\pi^\star) - J(\hat \pi_{\text{DPO}}) = \mathbb{E}_{y \sim \pi_{\theta^\star}}[r^\star(y)] - \mathbb{E}_{y \sim \hat \pi_{\text{DPO}} }[r^\star(y)] = \frac{2e}{1+2e} - \frac{2}{3} > 0.15
\end{align*}
\textbf{Suboptimality of IPO.} We show that IPO fails in Instance 1 constructed in Section \ref{sec:hard_mab_overoptimization_typeI}. Leveraging uniform initialization and the logit gap expression \eqref{prop_proof:eq:logit_gap_softmax}, the IPO objective can be simplified as follows:
\begin{align*}
    & \min_\theta \mathbb{E}_{\mathcal{D}} \left[\left(\log \frac{\pi_\theta(y^+)}{\pi_{\theta_0}(y^+)} - \log \frac{\pi_\theta(y^-)}{\pi_{\theta_0}(y^-)} - \frac{1}{2\tau} \right)^2 \right] = \min_\theta \mathbb{E}_{\mathcal{D}} \left[\left( \theta(y^+) - \theta(y^-) - \frac{1}{2 \tau}\right)^2 \right]
\end{align*}
Since $\theta(1) = \theta(2)$ are revealed by an oracle, the IPO objective can be further simplified to 
\begin{align*}
    & \min_{\theta(3) \in [0,1]}\frac{1}{N}\left( \theta(3) - \theta(1) - \frac{1}{2 \tau}\right)^2 = \min_{\theta(3) \in [0,1]} \left( \theta(3) - 1 - \frac{1}{2 \tau}\right)^2
\end{align*}
Over the interval of $\theta(3) \in [0,1]$ and for any $\tau > 0$, this objective is decreasing in $\theta(3)$ and therefore
the optimum is found at $\theta(3) = 1$. Thus, the policy found by IPO suffers from the following subpoptimality:
\begin{align*}
    J(\pi_{\theta^\star}) - J(\hat \pi_{\text{IPO}}) & = \mathbb{E}_{y \sim \pi_{\theta^\star}}[r^\star(y)] - \mathbb{E}_{y \sim \hat \pi_{\text{IPO}} }[r^\star(y)] = \frac{e}{2+e} - \frac{e}{1+2e} > 0.15\\
\end{align*}
\textbf{Suboptimality of SimPO.} We show that SimPO suffers from Type I Reward Hacking in both instances detailed in Section \ref{sec:hard_mab_overoptimization_typeI}. Following the same steps as in our analysis of DPO and since $\theta(1) = \theta(2) = 1$ are assumed to be revealed by an oracle, SimPO objective simplifies to
\begin{align*}
    \max_{\theta(3) \in [0,1]} \frac{1}{N} \log \sigma \left(\beta \left(\theta(3) - \theta(1) - \gamma \right) \right).
\end{align*}
Regardless of the value of $\gamma$ and for any $\beta > 0$, the function $\log \sigma \left(\beta \left(\theta(3) - \theta(1) - \gamma \right) \right)$ is increasing in $\theta(3)$ and thus optimizing this objective over $\theta(3) \in [0,1]$ finds $\theta(3) = 1$. Therefore, policies found by SimPO in both instances in Section  \ref{sec:hard_mab_overoptimization_typeI} suffer from constant suboptimality:
\begin{align*}
    \text{Instance 1:} \quad & J(\pi^\star) - J(\hat \pi_{\text{SimPO}}) = \mathbb{E}_{y \sim \pi_{\theta^\star}}[r^\star(y)] - \mathbb{E}_{y \sim \hat \pi_{\text{SimPO}} }[r^\star(y)] = \frac{e}{2+e} - \frac{e}{1+2e} > 0.15\\
    \text{Instance 2:} \quad & J(\pi^\star) - J(\hat \pi_{\text{SimPO}}) = \mathbb{E}_{y \sim \pi_{\theta^\star}}[r^\star(y)] - \mathbb{E}_{y \sim \hat \pi_{\text{SimPO}} }[r^\star(y)] = \frac{2e}{1+2e} - \frac{2}{3} > 0.15
\end{align*}
\textbf{Suboptimality of $\chi$PO.} We analyze $\chi$PO for Instance 2 constructed in Section \ref{sec:hard_mab_overoptimization_typeI}. The $\chi$PO objective is given by
\begin{align}\label{eq:chiPO_typeIrewardhacking_obj}
\begin{split}
    \max_{\theta(3) \in [0,1]} \left(1 - \frac{1}{N} \right) \Bigg[ & \hat \mu_{1 \succ 2} \log \left( \sigma \left( \mathsf{clip}_{2} \left[ \beta  \left( \log \frac{\pi_\theta(1)}{\pi_{\theta_0}(1)} - \log \frac{\pi_\theta(2)}{\pi_{\theta_0}(2)} + \frac{\pi_\theta(1)}{\pi_{\theta_0}(1)} - \frac{\pi_\theta(2)}{\pi_{\theta_0}(2)} \right) \right] \right)  \right)  \\
    & + \hat \mu_{2 \succ 1} \log \left( \sigma \left( \mathsf{clip}_{2} \left[ \beta \left( \log \frac{\pi_\theta(2)}{\pi_{\theta_0}(2)} - \log \frac{\pi_\theta(1)}{\pi_{\theta_0}(1)} + \frac{\pi_\theta(2)}{\pi_{\theta_0}(2)} - \frac{\pi_\theta(1)}{\pi_{\theta_0}(1)}  \right) \right] \right)  \right) \Bigg]\\
    + \frac{1}{N} & \left[ \log \left( \sigma \left( \mathsf{clip}_{2} \left[ \beta \left( \log \frac{\pi_\theta(3)}{\pi_{\theta_0}(3)} - \log \frac{\pi_\theta(1)}{\pi_{\theta_0}(1)} + \frac{\pi_\theta(3)}{\pi_{\theta_0}(3)} - \frac{\pi_\theta(1)}{\pi_{\theta_0}(1)}   \right) \right] \right)  \right) \right] 
\end{split}
\end{align}
By construction, $\theta_0(1) = \theta_0(2), \pi_{\theta_0}(1) = \pi_{\theta_0}(2)$, and $\theta(1) = \theta(2) = 1$ are known, and therefore the first two terms in \eqref{eq:chiPO_typeIrewardhacking_obj} are equal to zero:
\begin{align*}
    & \log \frac{\pi_\theta(1)}{\pi_{\theta_0}(1)} - \log \frac{\pi_\theta(2)}{\pi_{\theta_0}(2)} + \frac{\pi_\theta(1)}{\pi_{\theta_0}(1)} - \frac{\pi_\theta(2)}{\pi_{\theta_0}(2)}\\
    & \quad = \theta(1) - \theta_0(1) - \theta(2) + \theta_0(2) + \frac{1}{\pi_{\theta_0}(1)} \left( \frac{\exp(\theta(1))}{Z_\theta} - \frac{\exp(\theta(2))}{Z_\theta}\right) = 0.
\end{align*}
Thus the maximization problem in \eqref{eq:chiPO_typeIrewardhacking_obj} is simplified to
\begin{align*}
    & \max_{\theta(3) \in [0,1]} \log \sigma \mathsf{clip}_{2} \beta \left( \log \frac{\pi_\theta(3)}{\pi_{\theta_0}(3)} - \log \frac{\pi_\theta(1)}{\pi_{\theta_0}(1)} + \frac{\pi_\theta(3)}{\pi_{\theta_0}(3)} - \frac{\pi_\theta(1)}{\pi_{\theta_0}(1)}  \right)\\
    & \quad = \max_{\theta(3) \in [0,1]} \log \sigma \mathsf{clip}_{2} \beta \left( \theta(3) - \theta_0(3) - \theta(1) + \theta_0(1) + (1+2e) \left( \frac{\exp(\theta(3))}{Z_\theta} - \frac{\exp(\theta(1))}{e Z_\theta} \right)  \right)\\
    & \quad = \max_{\theta(3) \in [0,1]} \log \sigma \mathsf{clip}_{2} \beta \left( \theta(3) + (1+2e) \left( \frac{\exp(\theta(3)) - 1/e}{\exp(\theta(3)) + 2e}\right)  \right)
\end{align*}
It is straightforward to check that the above function is strictly increasing over $\theta(3) \in [0,1]$ and the maximization step finds $\theta(3) = 1$ when $0 < \beta \leq 1/3$. This results in $\chi$PO finding the uniform policy $\theta(1) = \theta(2) = \theta(3) = 1$ and thus suffering from the following suboptimality:
\begin{align*}
    J(\pi_{\theta^\star}) - J(\hat \pi_{\chi \text{PO}}) & = \mathbb{E}_{y \sim \pi_{\theta^\star}}[r^\star(y)] - \mathbb{E}_{y \sim \hat \pi_{\chi \text{PO}} }[r^\star(y)] = \frac{2e}{2e+1} - \frac{2}{3} > 0.15.
\end{align*}

\subsection{Type II Reward Hacking: Proof of Proposition \ref{prop:po_overoptimization_highC}}\label{sec:proof_po_overopt_highC}

\subsubsection{MAB Instance for Type II Reward Hacking}\label{app:mab_instance_typeII}

We construct a three-armed bandit problem with the following true reward structure: $r^\star(1) = r^\star(2) = 0, r^\star(3) = 1$. This reward function implies the following parameters for the best-in-class policy:  $\theta^\star(1) = \theta^\star(2) = 0, \theta^\star(3) = 1$, leading to the following policy:
\begin{align*}
    \pi_{\theta^\star}(1) = \pi_{\theta^\star}(2) = \frac{1}{2+e}, \pi_{\theta^\star}(3) = \frac{e}{2+e}.
\end{align*}
The performance of the above policy is $J(\pi_{\theta^\star}) = r^\star(3) \pi_{\theta^\star}(3) = e/(2+e)$. We further consider the following initialization: $\theta_0(1) = \theta_0(2) = 0, \theta_0(3) = 1$. Suppose that the comparison probabilities between the arms are $ \mu_{1,2} = 1 - 1/N$ and $\mu_{1,3} = 1/N$. Following the same argument as in Section \ref{sec:hard_mab_overoptimization_typeI}, there is a constant probability that arm 3 is compared with arm 1 exactly once and that arm 1 is preferred to arm 3. Throughout the rest of the proof, we condition on the event $\{N(1,3)  = 1, 1 \succ 3\}$. We further consider a favorable case where the optimal parameters corresponding to arms 1 and 2 are revealed by an oracle $\theta(1) = \theta(2) = 0$, leaving only $\theta(3)$ to be estimated.

\subsubsection{Analysis of $\star$PO Methods in the MAB Instance \ref{app:mab_instance_typeII}}

\textbf{Suboptimality of the DPO+SFT policy.} We show that DPO+SFT suffers from reward hacking for any $\eta \geq 0$, and hence the argument also shows reward hacking in DPO as a special case. The objective of DPO+SFT is given by
\begin{align*}
    \max_\theta \mathbb{E} \left[\log \sigma \left( \beta \left( \log  \frac{\pi_\theta(y^+)}{\pi_{\theta_0}(y^+)} - \log \frac{\pi_\theta(y^-)}{\pi_{\theta_0}(y^-)} \right) \right)  \right] + \eta \beta \mathbb{E} \left[ \log \pi_\theta(y^+)\right]
\end{align*}
Since $\theta(1) = \theta(2) = 0$ are revealed by an oracle, we focus on the terms that involve $\theta(3)$ in the objective:
\begin{align*}
    & \max_{\theta(3) \in [0,1]} \frac{1}{N} \left[ \log \sigma \left(\beta \left( \log  \frac{\pi_\theta(1)}{\pi_{\theta_0}(1)} - \log \frac{\pi_\theta(3)}{\pi_{\theta_0}(3)} \right) \right) \right] \coloneqq T_1 \\
    & \quad + \eta \beta \left(1-\frac{1}{N}\right)\left[ \hat \mu_{1 \succ 2} \log \pi_\theta(1) + \hat \mu_{2 \succ 1} \log \pi_\theta(2) \right] + \eta \frac{1}{N} \log \pi_\theta(3) \coloneqq T_2
\end{align*}
Applying the softmax policy parameterization and using the fact that $\theta(1) = \theta_0(1) = 0$ and $\theta_0(3) = 1$, term $T_1$ simplifies to
\begin{align*}
    T_1 & = \frac{1}{N} \log \left( \sigma \left( \beta \left( \theta(1) - \theta_0(1) - \theta(3) + \theta_0(3) \right) \right) \right) = \frac{1}{N} \log \left( \sigma \left( \beta \left( 1 - \theta(3) \right) \right) \right)
\end{align*}
The term $T_2$ can also be simplified by substituting values $\theta(1) = \theta(2) = 0$:
\begin{align*}
    T_2 & = \eta \beta \left(1-\frac{1}{N}\right)\left[ \hat \mu_{1 \succ 2} (\theta(1) - \log Z_\theta) + \hat \mu_{2 \succ 1} (\theta(2) - \log Z_\theta) \right] + \eta \beta \frac{1}{N} (\theta(3) - \log Z_\theta)  \\
    & = \eta \beta \left( \frac{1}{N} \theta(3) - \log Z_\theta \right)\\
    & = \eta \beta \left( \frac{1}{N} \theta(3) - \log (\exp(\theta(1)) + \exp(\theta(2)) + \exp(\theta(3)))  \right)\\
    & = \eta \beta \left( \frac{1}{N} \theta(3) - \log (\exp(\theta(3))+2) \right)
\end{align*}
Combining terms $T_1$ and $T_2$, the objective becomes
\begin{align*}
    \max_{\theta(3) \in [0,1]} \frac{1}{N} \log \left( \sigma \left( \beta \left( 1 - \theta(3) \right) \right) \right) + \eta \beta \left( \frac{1}{N} \theta(3) - \log (\exp(\theta(3))+2) \right)
\end{align*}
We show that for any $N > 3$ the above function is decreasing in $\theta(3)$ for any $\beta, \eta$ and $\theta(3) \in [0,1]$. Since the first function $\frac{1}{N} \log \left( \sigma \left( \beta \left( 1 - \theta(3) \right) \right) \right)$ is decreasing in $\theta(3)$, it is sufficient to show $\frac{1}{N} \theta(3) - \log (\exp(\theta(3))+2)$ is decreasing in $\theta(3)$. Derivative of this function is over $\theta(3) \in [0,1]$ and $N \geq 3$ is bounded by
\begin{align*}
    \frac{1}{N} - \frac{\exp(\theta(3))}{\exp(\theta(3)) + 2} \leq  \frac{1}{N} - \frac{1}{3} < 0.
\end{align*}
Therefore, optimizing over $\theta(3) \in [0,1]$ finds $\theta(3) = 0$. This leads DPO+SFT to find a uniform policy, which suffers from a constant suboptimality:
\begin{align*}
    J(\pi_{\theta^\star}) - J(\hat \pi_{\text{DPO+SFT}}) = \frac{e}{2+e} - \frac{1}{3} > 0.2.
\end{align*}

\textbf{Suboptimality of the IPO policy.} Similar to the analysis of DPO+SFT, for the IPO objective, we only focus on the terms that include $\theta(3)$:
\begin{align*}
    & \min_{\theta(3) \in [0,1]} \frac{1}{N} \left( \log \frac{\pi_\theta(1)}{\pi_{\theta_0}(1)} - \log \frac{\pi_\theta(3)}{\pi_{\theta_0}(3)} - \frac{1}{2\tau}\right)^2 = \min_{\theta(3) \in [0,1]} \left( 1 - \frac{1}{2 \tau} - \theta(3) \right)^2
\end{align*}
Since $\tau > 0$, solution to the above minimization is
\begin{align*}
    \theta(3) = \max \left\{0, 1 - \frac{1}{2\tau} \right\}
\end{align*}
Therefore, the suboptimality of the IPO policy is given by
\begin{align*}
    J(\pi_{\theta^\star}) - J(\hat \pi_{\text{IPO}}) = \frac{e}{2+e} - \frac{\exp(\max\{0,1-1/(2\tau)\})}{2+\exp(\max\{0,1-1/(2\tau)\})} 
\end{align*}
And for the regime with $\tau < 1$, we have $J(\pi_{\theta^\star}) - J(\hat \pi_{\text{IPO}}) > 0.1$.

\textbf{Suboptimality of the SimPO policy.} The objective optimized by SimPO over the term that involve $\theta(3)$ is given by
\begin{align*}
    & \max_{\theta(3) \in [0,1]} \frac{1}{N} \log \sigma \left(\beta \left( \log  {\pi_\theta(1)} - \log {\pi_\theta(3)} - \gamma \right) \right) \\
    & = \max_{\theta(3) \in [0,1]} \log \sigma \left(\beta \left( -\theta(3) - \gamma \right) \right)
\end{align*}
The above function is decreasing in $\theta(3)$ therefore SimPO finds $\theta(3) = 0$ and suffers from the followingg suboptimality 
\begin{align*}
    J(\pi_{\theta^\star}) - J(\hat \pi_{\text{SimPO}}) = \frac{e}{2+e} - \frac{1}{3} > 0.2.
\end{align*}
\textbf{Suboptimality of the $\chi$PO policy.} The objective optimized by $\chi$PO is given by
\begin{align}\label{eq:chipo_typeII_obj}
\begin{split}
    \max_{\theta(3) \in [0,1]} \left(1 - \frac{1}{N} \right) \Bigg[ & \hat \mu_{1 \succ 2} \log \left( \sigma \left( \mathsf{clip}_{2} \left[ \beta  \left( \log \frac{\pi_\theta(1)}{\pi_{\theta_0}(1)} - \log \frac{\pi_\theta(2)}{\pi_{\theta_0}(2)} + \frac{\pi_\theta(1)}{\pi_{\theta_0}(1)} - \frac{\pi_\theta(2)}{\pi_{\theta_0}(2)} \right) \right] \right)  \right)  \\
    & + \hat \mu_{2 \succ 1} \log \left( \sigma \left( \mathsf{clip}_{2} \left[ \beta \left( \log \frac{\pi_\theta(2)}{\pi_{\theta_0}(2)} - \log \frac{\pi_\theta(1)}{\pi_{\theta_0}(1)} + \frac{\pi_\theta(2)}{\pi_{\theta_0}(2)} - \frac{\pi_\theta(1)}{\pi_{\theta_0}(1)}  \right) \right] \right)  \right) \Bigg]\\
    + \frac{1}{N} & \left[ \log \left( \sigma \left( \mathsf{clip}_{2} \left[ \beta \left( \log \frac{\pi_\theta(1)}{\pi_{\theta_0}(1)} - \log \frac{\pi_\theta(3)}{\pi_{\theta_0}(3)} + \frac{\pi_\theta(1)}{\pi_{\theta_0}(1)} - \frac{\pi_\theta(3)}{\pi_{\theta_0}(3)}   \right) \right] \right)  \right) \right] 
\end{split}    
\end{align}
By construction, $\pi_{\theta_0}(1) = \pi_{\theta_0}(2) = 1/(2+e)$, and $\theta(1) = \theta(2) = 0$ revealed by an oracle, and therefore the first two lines in \eqref{eq:chiPO_typeIrewardhacking_obj} are equal to zero:
\begin{align*}
    & \log \frac{\pi_\theta(1)}{\pi_{\theta_0}(1)} - \log \frac{\pi_\theta(2)}{\pi_{\theta_0}(2)} + \frac{\pi_\theta(1)}{\pi_{\theta_0}(1)} - \frac{\pi_\theta(2)}{\pi_{\theta_0}(2)}\\
    & \quad = \theta(1) - \theta_0(1) - \theta(2) + \theta_0(2) + (2+e) \left( \frac{\exp(\theta(1))}{Z_\theta} - \frac{\exp(\theta(2))}{Z_\theta}\right) = 0.
\end{align*}
The objective \eqref{eq:chipo_typeII_obj} can therefore be simplified to
\begin{align*}
    & \max_{\theta(3) \in [0,1]} \frac{1}{N} \left[ \log \left( \sigma \left( \mathsf{clip}_{2} \left[ \beta \left( \log \frac{\pi_\theta(1)}{\pi_{\theta_0}(1)} - \log \frac{\pi_\theta(3)}{\pi_{\theta_0}(3)} + \frac{\pi_\theta(1)}{\pi_{\theta_0}(1)} - \frac{\pi_\theta(3)}{\pi_{\theta_0}(3)}   \right) \right] \right)  \right) \right] \\
    & = \max_{\theta(3) \in [0,1]} \log \left( \sigma \left( \mathsf{clip}_{2}  \left[ \beta \left( \theta(1) - \theta_0(1) - \theta(3) + \theta_0(3) + \frac{\pi_\theta(1)}{\pi_{\theta_0}(1)} - \frac{\pi_\theta(3)}{\pi_{\theta_0}(3)}\right) \right] \right) \right) \\
    & = \max_{\theta(3) \in [0,1]} \log \left( \sigma \left( \mathsf{clip}_{2}  \left[ \beta \left( 1 - \theta(3) + \frac{\pi_\theta(1)}{\pi_{\theta_0}(1)} - \frac{\pi_\theta(3)}{\pi_{\theta_0}(3)}\right) \right] \right) \right) \\
    & = \max_{\theta(3) \in [0,1]} \log \sigma \left( \mathsf{clip}_{2}  \left[ \beta \left( 1 - \theta(3) + \frac{2+e}{2+\exp(\theta(3))} \left( 1 - \exp(\theta(3) - 1)\right) \right) \right] \right)
\end{align*}
The last equation applies the definition of $\pi_\theta$ and substitutes values for $\theta_0(1), \theta_0(3), \theta(1)$. The function
\begin{align*}
    \beta \left( 1 - \theta(3) + \frac{2+e}{2+\exp(\theta(3))} \left( 1 - \exp(\theta(3) - 1)\right) \right)
\end{align*}
is decreasing in $\theta(3)$ for $0 < \beta \leq 1$ and that for $\theta(3) \in [0,1]$, the above function remains between 0 and 2. As a result, the maximization problem leads to $\theta(3) = 0$. Thus, $\chi$PO finds the uniform policy $\hat \pi_{\chi\text{PO}} = \mathsf{Unif}(\{1,2,3\})$, which suffers from a constant suboptimality:
\begin{align*}
    J(\pi_{\theta^\star}) - J(\hat \pi_{\chi\text{PO}}) = \frac{e}{2+e} - \frac{1}{3} > 0.2.
\end{align*}
\subsection{Proof of Proposition \ref{prop:power_typeI_typeII}}\label{app:power_facedwith_typeItypeII}

\textbf{Policy learned by POWER in the MAB instances in \ref{sec:hard_mab_overoptimization_typeI}.} We show that when faced with the Type I Reward Hacking MAB instances of \ref{sec:hard_mab_overoptimization_typeI}, a more general variant of POWER mitigates reward hacking. Let $g(x)$ be an increasing function with bounded derivative: $g'(x - 1) \leq B_g$ for any $x \in [0,1]$. We consider the following objective:
\begin{align*}
    \max_\theta \mathbb{E}_{\mathcal{D}} \left[ g \left( w(y^+) \log \pi_\theta(y^+|x) - w(y^-) \log \pi_\theta(y^-|x) + w(y^+) - w(y^-)\right) \right] +  \eta\beta \mathbb{E}_{\mathcal{D}} \left[ w(y) \log \pi_\theta(y) \right]
\end{align*}
The POWER objective is a special case of the above objective with $g(\cdot) = \log \sigma(\cdot)$. Note that we have $g'(x-1) = \sigma (-(x-1)) \leq 1$, and therefore the bounded derivative assumption is satisfied.

For the MAB instances in \ref{sec:hard_mab_overoptimization_typeI}, the optimization problem simplifies to
\begin{align*}
 \max_{\theta(3) \in [0,1]} \frac{1}{N} \left( g(\theta(3) - 1) + \eta \theta(3) \right) - \eta \log \left(\exp(\theta(3)) + e + 1\right). 
\end{align*}
We show that for any $\eta > \frac{B_g(2+e)}{N - (2+e)}$ the derivative of above function is negative. This is because for any $\theta(3) \in [0,1]$, we have 
\begin{align*}
    \frac{1}{N} (g'(\theta(3) - 1) +\eta)  - \eta \frac{\exp(\theta(3))}{\exp(\theta(3))+e+1} & \leq \frac{1}{N} (B_g +\eta) - \frac{\eta}{2+e} \\
    & = \frac{B_g}{N} - \eta \left(\frac{1}{2+e} - N\right)\\
    & < \frac{B_g}{N} - \frac{B_g(2+e)}{N - (2+e)} \left(\frac{1}{2+e} - N\right)\\
    & \leq 0.
\end{align*}
Because the function is decreasing in $\theta(3)$ the optimum is at $\theta(3) = 0$ and thus the algorithm finds $\hat \pi = \pi_{\theta^\star}$. Finally, the conclusion also holds for POWER as a special case and thus $\hat \pi_{\text{POWER}} = \pi_{\theta^\star}$. 

\textbf{Policy learned by POWER in the MAB instance in \ref{app:mab_instance_typeII}.} The POWER objective with response lengths equal to one is given by 
\begin{align*}
    \max_\theta \mathbb{E} \left[\log \sigma \left( \beta \left( \log  {\pi_\theta(y^+)} - \log {\pi_\theta(y^-)} \right) \right)  \right] + \eta \beta \mathbb{E} \left[ \log \pi_\theta(y^+)\right]
\end{align*}
With a similar argument as in our analysis of DPO+SFT, we obtain the following objective:
\begin{align*}
    \max_{\theta(3) \in [0,1]} \frac{1}{N} \log \left( \sigma \left( \beta \left(- \theta(3) \right) \right) \right) + \eta \beta \left( \frac{1}{N} \theta(3) - \log (\exp(\theta(3))+2) \right)
\end{align*}
It is easy to check that the above function is decreasing in $\theta(3)$. Therefore, optimization leads to $\theta(3)=0$ and the policy learned by POWER suffers from a constant suboptimality
\begin{align*}
    J(\pi_{\theta^\star}) - J(\hat \pi_{\text{POWER}}) = \frac{e}{2+e} - \frac{1}{3} > 0.2.
\end{align*}

\section{POWER Objective Derivation}

This section is organized as follows. In Section \ref{app:WER_policy} we record a useful proposition that captures properties of the optimal policy to the WER objective. This result comes in handy for deriving our preference optimization objective---which relies on the equivalence between minimax and maximin objectives as well as finding a closed-form solution for the inner maximization problem. In Section \ref{app:minimax_maximin_equiv}, we prove that under certain regularity conditions on reward class $\cR$, the maximization and minimization steps in objective \eqref{eq:maxmin_problem} can be interchanged. With these two results at hand, we prove Proposition \ref{prop:inner_solution} in Section \ref{app:power_objective_derivation}, which gives the POWER objective.

\subsection{Optimal Policy for Weighted Entropy Reward Maximization}\label{app:WER_policy}
For the WER objective, we have the following proposition which shows the uniqueness of the optimal WER policy on the support of prompt distribution, and connects this policy to the reward gap. 
\begin{proposition}[\textbf{WER Policy}]\label{prop:inner_solution}
For any $\beta > 0$, reward function $r$, any $x \in \cX$ with $\rho(x) > 0$, and any action pairs $y, y' \in \cY$, the policy $\pi_r$ that maximizes the WER objective \eqref{eq:weightedMaxEnt} satisfies the following statements:
\begin{enumerate}
    \item For any $\beta > 0$, policy $\pi_r$ is unique on the support of $\rho$.
    \item Policy $\pi_r$ satisfies the following equation:
\begin{align}\label{eq:reward_gap_to_optimal_maxwent_policy}
   r(x, y) - r(x, y') = \beta \Big( {w(y)} \log \pi_r(y|x) - {w(y')} \log \pi_r(y'|x) +  \left(w(y) - w(y')\right) \Big)
\end{align}
\end{enumerate}
\end{proposition}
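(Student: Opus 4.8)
The plan is to reduce the WER optimization to a single per-context concave program and then read off both claims from first-order optimality. First I would observe that the objective \eqref{eq:weightedMaxEnt} decomposes over contexts: both the expected reward and $H_w(\pi)=\mathbb{E}_{x\sim\rho}[H_w(\pi(\cdot|x))]$ are $\rho$-averages of quantities depending on $\pi(\cdot|x)$ alone, so for each $x$ with $\rho(x)>0$ the conditional policy $\pi(\cdot|x)$ can be optimized independently, while for $x$ with $\rho(x)=0$ it is immaterial (this is exactly why the statement only asserts uniqueness on the support of $\rho$). Fixing such an $x$ and writing $p=\pi(\cdot|x)\in\Delta(\cY)$, the task becomes
\begin{align*}
    \max_{p\in\Delta(\cY)} \; F(p)\coloneqq \sum_y r(x,y)\,p(y)\;-\;\beta\sum_y w(y)\,p(y)\log p(y).
\end{align*}

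For claim 1 I would invoke strict concavity: $t\mapsto -t\log t$ is strictly concave on $[0,1]$ and every $w(y)>0$, so $p\mapsto-\beta\sum_y w(y)p(y)\log p(y)$ is strictly concave, while the reward term is linear; hence $F$ is strictly concave on the compact convex set $\Delta(\cY)$ and attains its maximum at a unique point, which is $\pi_r(\cdot|x)$. This gives uniqueness of $\pi_r$ on the support of $\rho$ (and, in the finite-$\cY$ setting of Definition \ref{def:weighted_entropy}, existence as well, by continuity under the convention $0\log 0=0$).

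For claim 2 I would first show the maximizer is interior to the simplex: since $\partial F/\partial p(y)=r(x,y)-\beta w(y)\bigl(1+\log p(y)\bigr)\to+\infty$ as $p(y)\downarrow 0$ (using $\beta,w(y)>0$), shifting an arbitrarily small amount of mass onto any coordinate with $p(y)=0$ strictly increases $F$, so every optimal $p$ has full support and the nonnegativity constraints are inactive. Introducing a Lagrange multiplier $\lambda$ for the constraint $\sum_y p(y)=1$, stationarity then holds for every $y$ in the form
\begin{align*}
    r(x,y)=\beta\bigl(w(y)\log\pi_r(y|x)+w(y)\bigr)+\lambda.
\end{align*}
Writing this identity for two actions $y$ and $y'$ and subtracting cancels $\lambda$, yielding exactly \eqref{eq:reward_gap_to_optimal_maxwent_policy}.

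The step I expect to be the main obstacle is the interior-point argument: one must rule out maximizers lying on lower-dimensional faces of the simplex before a single unconstrained stationarity condition can be applied uniformly over all $y$, and this is precisely where strict positivity of the weights $w(y)$ is indispensable — with a vanishing weight the entropy gradient would stay bounded at the boundary and the clean closed form could fail. A secondary technicality, relevant only if $\cY$ is taken countably infinite, is justifying existence of the maximizer and differentiating under the sum; this follows from boundedness of $r$ together with weak compactness of $\Delta(\cY)$ and lower semicontinuity of the negative weighted entropy, but in the discrete setting assumed here it is routine.
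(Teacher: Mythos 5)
Your proposal is correct and follows essentially the same route as the paper's proof: decompose the WER objective per context, use strict concavity of the weighted entropy (from $\beta, w(y)>0$) for uniqueness, and derive \eqref{eq:reward_gap_to_optimal_maxwent_policy} from the Lagrangian stationarity condition by subtracting the identities for $y$ and $y'$ to cancel the multiplier $\lambda_x$. The one place you go beyond the paper is the explicit interior-point argument (the entropy gradient diverging as $p(y)\downarrow 0$), which the paper's proof implicitly assumes when it treats the stationary point of the Lagrangian as the maximizer; this is a worthwhile addition rather than a divergence in approach.
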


\begin{proof}[Proof of Proposition \ref{prop:inner_solution}]
The WER objective solves the following optimization problem:
\begin{align}
\begin{split}
      & \max_{\pi} \mathbb{E}_{x \sim \rho} \left[ \sum_y \pi(y|x) \left[ r(x,y) - \beta w(y) \log \pi(y|x)\right] \right]\\
      & \sum_y \pi(y|x) = 1 \qquad \forall x \in \cX  
\end{split}
\end{align}
We find the optimal policy for each context in the support of $\rho$ independently. For any such $x$, we rewrite the constrained optimization problem using Lagrange multipliers:
\begin{align*}
    \sum_y \pi(y|x) \left[ r(x,y) - \beta w(y) \log \pi(y|x)\right] - \lambda_x \left( \sum_y \pi(y|x) - 1 \right)
\end{align*}
Notice that the above function is concave in $\pi(y|x)$ due to $w(y), \beta > 0$ and thus the solution is unique on the support of $\rho$ and the stationary point is the maximizer. Taking the derivative with respect to $\pi(y|x)$ and setting it to zero finds an equation governing the optimal policy $\pi_r$ 
\begin{align*}
    & r(x,y) = \beta {w(y)} \log \pi_r(y|x) + \beta {w(y)} + \lambda_x 
\end{align*}
Thus for any $y, y'$, one has
\begin{align*}
    r(x,y) - r(x, y') = \beta \Big( w(y) \log \pi_r(y|x) + w(y) - w(y') \log \pi_r(y'|x) -  w(y') \Big),
\end{align*}
which concludes the proof.
\end{proof}

\subsection{Minimax Objective Equivalence to Maximin Objective}\label{app:minimax_maximin_equiv}
In this section, we show that that the maximin objective \eqref{eq:maxmin_problem} can be written as a minimax objective under certain regularity conditions. Define the following notation to denote the weighted-entropy robust reward objective for any $\pi \in \Pi$ and $r \in \cR$:
\begin{align}\label{eq:phi_definition}
    \phi(\pi, r) \coloneqq {L_{\text{BT}}(r)}  + \eta {\Big( \mathbb{E}_{x \sim \rho, y \sim \pi} \left[r(x, y) \right] - \mathbb{E}_{x \sim \rho, y' \sim \pi'} \left[r(x, y') \right]  + \beta H_w(\pi) \Big)}
\end{align}
We follow the approach of \citet{liu2024provably} and impose regularity conditions on the class $\cR$, which is contingent upon our definition of $\phi$, to show the maximin and minimax equivalence. Formally, we make the following assumption.
\begin{assumption}[\textbf{Regularity of the Reward Class}]\label{assump:regularity_reward} We assume that class $\cR$ satisfies the following:
\begin{enumerate}
    \item The space $\cR$ is a non-empty compact topological space;
    \item The function $\phi$ defined in \eqref{eq:phi_definition} is convex-like in $\cR$; that is, for any $r_1, r_2 \in \cR, \pi \in \Pi$, and $\alpha \in [0,1]$, there exists $r_3 \in \cR$ such that
    \begin{align*}
        \phi(\pi, r_3) \leq \alpha \phi(\pi, r_1) + (1-\alpha) \phi(\pi, r_2).
    \end{align*}
\end{enumerate}
\end{assumption}
The above condition is satisfied in several special cases. For example, it is satisfied when $\cR$ is convex such as a linear class \cite{xiong2023gibbs, fisch2024robust}. As a more general case, if $\cR$ is a Lipschitz continuous class, we can conclude function $\phi(\pi, \cdot)$ to be convex over $\cR$ as $\phi(\pi, \cdot)$ is a sum of a linear term in $r$ and a convex term $L_{\text{BT}}(r)$.

Under this assumption, we have the following proposition showing the equivalence between maximin and minimax objectives.
\begin{proposition}[\textbf{Equivalence of Maximin and Minimax Algorithms}]\label{prop:maximin_minimax_equiv_algs} For the policy class $\Pi = \{ \pi: \cX \rightarrow \Delta(\cY)\}$ and reward class $\cR$ satisfying Assumption  \ref{assump:regularity_reward}, define policy $\pi_{\hat r}$ to be the optimal policy corresponding to the minimax reward function, i.e.,
\begin{align*}
\begin{split}
    \pi_{\hat r} & \in \argmax_{\pi \in \Pi} \phi(\pi, \hat r) \quad \text{where} \quad \hat r \in \argmin_{r \in \cR} \max_{\pi \in \Pi} \phi(\pi, r)
\end{split}
\end{align*}
Then, policy $\pi_{\hat r}$ is also the optimal solution to the maximin objective, i.e.
\begin{align*}
    \pi_{\hat r}  \in \argmax_{\pi \in \Pi} \min_{r \in \cR} \phi(\pi, r).
\end{align*}
\end{proposition}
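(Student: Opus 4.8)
The plan is to recognize Proposition~\ref{prop:maximin_minimax_equiv_algs} as an instance of the classical principle ``value of the game attained on both sides $\Rightarrow$ a minimax-optimal response is maximin-optimal'', and to supply the ingredients a minimax theorem requires. First I would record the convexity structure of $\phi$ defined in \eqref{eq:phi_definition}. For each fixed $r$, the map $\pi\mapsto\phi(\pi,r)$ is concave on $\Pi$: the term $L_{\text{BT}}(r)$ and the baseline $\mathbb{E}_{x\sim\rho,y'\sim\pi'}[r(x,y')]$ do not depend on $\pi$, the reward term $\mathbb{E}_{x\sim\rho,y\sim\pi}[r(x,y)]$ is linear in $\pi$, and the weighted entropy $H_w(\pi)=\mathbb{E}_{x\sim\rho}[-\sum_y w(y)\pi(y|x)\log\pi(y|x)]$ is concave (and, since $\beta>0$ and $w(y)>0$, strictly concave on the simplex interior) because $t\mapsto -w(y)t\log t$ is concave on $[0,1]$; moreover $\Pi=\{\pi:\cX\to\Delta(\cY)\}$ is convex and $\phi(\pi,\cdot)$ is continuous in $r$ (a sum of the continuous map $r\mapsto L_{\text{BT}}(r)$ and terms affine in $r$). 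For each fixed $\pi$, $r\mapsto\phi(\pi,r)$ is convex-like and $\cR$ is compact, both by Assumption~\ref{assump:regularity_reward}.

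With this in hand I would invoke a minimax theorem of Fan/Sion type for concave / convex-like payoffs to get
\[
  v \;:=\; \max_{\pi\in\Pi}\min_{r\in\cR}\phi(\pi,r) \;=\; \min_{r\in\cR}\max_{\pi\in\Pi}\phi(\pi,r),
\]
with all four optimizations attained: the inner $\min_r$ because $\cR$ is compact and $\phi(\pi,\cdot)$ continuous; the inner $\max_\pi$ because maximizing $\phi(\cdot,r)$ over $\Pi$ is (since $L_{\text{BT}}$ and the baseline are $\pi$-free) exactly the WER problem \eqref{eq:weightedMaxEnt}, whose maximizer $\pi_r$ exists and is unique on $\mathrm{supp}(\rho)$ by Proposition~\ref{prop:inner_solution}; the outer $\min_r$ because $r\mapsto\max_\pi\phi(\pi,r)$ is lower semicontinuous (a supremum of continuous functions) on the compact $\cR$; and the outer $\max_\pi$ because $\pi\mapsto\min_r\phi(\pi,r)$ is concave and upper semicontinuous (an infimum of continuous functions), and the optimization decouples over contexts into simplex problems (so attainment is immediate when $\cY$ is finite, as in the language-model setting; in general one restricts to the compact set of policies that do no worse than the initial policy). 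Let $\hat r$ attain the outer minimum, $\pi_{\hat r}\in\argmax_\pi\phi(\pi,\hat r)$ be the WER policy for $\hat r$ (unique on $\mathrm{supp}(\rho)$), and $\pi^\star\in\argmax_\pi\min_r\phi(\pi,r)$.

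Next I would run the standard saddle-point argument. By definition of $\pi^\star$ and the minimax identity, $\min_r\phi(\pi^\star,r)=v$; by definition of $\hat r$, $\max_\pi\phi(\pi,\hat r)=v$. Sandwiching, $v=\min_r\phi(\pi^\star,r)\le\phi(\pi^\star,\hat r)\le\max_\pi\phi(\pi,\hat r)=v$, so $\phi(\pi^\star,\hat r)=v$, whence $\phi(\pi,\hat r)\le v=\phi(\pi^\star,\hat r)\le\phi(\pi^\star,r)$ for all $\pi\in\Pi$, $r\in\cR$, i.e. $(\pi^\star,\hat r)$ is a saddle point. In particular $\pi^\star$ maximizes $\phi(\cdot,\hat r)$, so by the uniqueness-on-$\mathrm{supp}(\rho)$ clause of Proposition~\ref{prop:inner_solution}, $\pi^\star$ and $\pi_{\hat r}$ agree on $\mathrm{supp}(\rho)$. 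Since $\phi(\cdot,r)$ depends on the policy only through $\mathbb{E}_{x\sim\rho}[\cdot]$-expectations, $\phi(\pi_{\hat r},r)=\phi(\pi^\star,r)$ for every $r$, hence $\min_r\phi(\pi_{\hat r},r)=\min_r\phi(\pi^\star,r)=v=\max_\pi\min_r\phi(\pi,r)$, which is exactly $\pi_{\hat r}\in\argmax_\pi\min_r\phi(\pi,r)$.

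The main obstacle I anticipate is not the saddle-point bookkeeping but discharging the minimax-theorem hypotheses cleanly: confirming that ``$\cR$ compact $+$ convex-like'' together with concavity of $\phi(\cdot,r)$ suffices for a version of the theorem that imposes no linear/topological structure on $\Pi$, and justifying attainment of the outer $\max$ over $\Pi$ — the strict concavity coming from $\beta H_w$ yields uniqueness of inner maximizers but one still needs a compactness argument (or the finite-$\cY$ reduction) for the outer maximum. Convex-likeness is assumed outright, and continuity of $L_{\text{BT}}$ in $r$ plus concavity of $H_w$ are routine to verify.
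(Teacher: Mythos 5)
Your proposal is correct and follows essentially the same route as the paper: verify the hypotheses of Fan's minimax theorem (concavity of $\phi(\cdot,r)$ from linearity plus weighted entropy, compactness and convex-likeness on $\cR$ from Assumption \ref{assump:regularity_reward}) to get the minimax identity, then use the zero-duality-gap/sandwich argument to show the maximin-optimal policy maximizes $\phi(\cdot,\hat r)$, and conclude via the uniqueness of the WER maximizer on $\mathrm{supp}(\rho)$ together with the fact that $\phi$ depends on $\pi$ only through its values there. The extra care you take about attainment of the outer optima is a reasonable addition but does not change the argument.
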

\begin{proof}
We begin by recording the following lemma that shows the equivalence of the maximin and minimax objectives under the assumptions on $\cR$. Proof of this lemma is deferred to the end of this section.
\begin{lemma}[Equivalence of Maximin and Minimax Objectives]\label{lemma:minimax-maximin-equiv-obj} Given the policy class $\Pi: \{\pi: \cX \rightarrow \Delta(\cY)\}$ and reward class $\cR$ satisfying Assumption \ref{assump:regularity_reward}, the following statement holds for $\phi$ defined in \eqref{eq:phi_definition}:
\begin{align*}
    \max_{\pi \in \Pi} \min_{r \in \cR} \phi(\pi, r) = \min_{r \in \cR}\max_{\pi \in \Pi} \phi(\pi, r).
\end{align*}
\end{lemma}
Denote the policy solving the maximin problem by $\hat \pi \in \argmax_{\pi \in \Pi} \min_{r \in \cR} \phi(\pi, r)$. The duality gap of $\hat r, \hat \pi$ is given by
\begin{align}\label{eq:duality_gap_1}
\begin{split}
    \text{dual}(\hat r , \hat \pi) & \coloneqq \max_{\pi \in \Pi} \phi(\pi, \hat r) - \min_{r \in \cR}\phi(\hat \pi, r)\\
    & = \max_{\pi \in \Pi} \phi(\pi, \hat r) - \min_{r \in \cR} \max_{\pi \in \Pi} \phi(\pi, r) + \min_{r \in \cR} \max_{\pi \in \Pi} \phi(\pi, r) - \min_{r \in \cR}\phi(\hat \pi, r)\\
    & = \max_{\pi \in \Pi} \phi(\pi, \hat r) - \min_{r \in \cR} \max_{\pi \in \Pi} \phi(\pi, r) +  \max_{\pi \in \Pi} \min_{r \in \cR} \phi(\pi, r) - \min_{r \in \cR}\phi(\hat \pi, r)\\
    & = 0
\end{split}
\end{align}
In the penultimate equation, we applied Lemma \ref{lemma:minimax-maximin-equiv-obj} and the last equation uses the definition of $\hat r$ and $\hat \pi$. The duality gap is also equal to
\begin{align}\label{eq:duality_gap_2}
    \text{dual}(\hat r , \hat \pi) = \max_{\pi \in \Pi} \phi(\pi, \hat r) - \phi(\hat \pi, \hat r) + \phi(\hat \pi, \hat r) - \min_{r \in \cR}\phi(\hat \pi, r)
\end{align}
Comparing \eqref{eq:duality_gap_1} and \eqref{eq:duality_gap_2}, we conclude that $\max_\pi \phi(\pi, \hat r) = \phi(\hat \pi, \hat r)$ which means $\hat \pi \in \argmax_{\pi \in \Pi} \phi(\hat r, \pi)$. Recall that by definition, we also have $\pi_{\hat r} \in \argmax_{\pi \in \Pi} \phi(\hat r, \pi)$. By the uniqueness of the WER optimal policy over $\rho$ as established in Proposition \ref{prop:inner_solution}, we conclude that $\pi_{\hat r}(\cdot| x) = \hat \pi(\cdot| x)$ for any $x$ with $\rho(x) > 0$. Since $\phi(\pi, r)$ depends on $\pi$ only through its value on the support of $\rho$, we conclude that $\pi_{\hat r} \in \argmax_{\pi \in \Pi} \min_{r \in \cR} \phi(\pi, r)$, which completes the proof.
\end{proof}
\begin{proof}[Proof of Lemma \ref{lemma:minimax-maximin-equiv-obj}]
    This result relies on a minimax theorem by \citet{fan1953minimax} presented in Lemma \ref{lemma:fan_minimax}. We prove that all the requirements of this theorem are satisfied.  
    First, by definition, policy class $\Pi$ is a non-empty convex set and by Assumption \ref{assump:regularity_reward}, the reward class $\cR$ is a non-empty compact topological space. Second, function $\phi(\pi, r)$ is concave on $\Pi$ because it is a sum of a linear function in $\pi$ and (weighted) entropy of $\pi$. Lastly, by Assumption \ref{assump:regularity_reward}, function $\phi(\pi, r)$ is continuous and convex-like on $\cR$. Therefore, we apply Lemma \ref{lemma:fan_minimax} to conclude the equivalence of maximin and minimax problems on $\phi$.
\end{proof}

\subsection{Proof of Proposition \ref{prop:power_objective_derivation}}\label{app:power_objective_derivation}

We start by deriving the objective \eqref{eq:POWER_objective} by changing the order of maximization and minimization in the maximin objective \eqref{eq:maxmin_problem}, which is valid on the account of Proposition \ref{prop:maximin_minimax_equiv_algs}. Writing the minimax objective and rearranging some terms yields
\begin{align}\label{eq:minimax_original}
    \min_r L_{\text{BT}}(r) + \eta \max_\pi \left( \mathbb{E}_{x \sim \rho, y \sim \pi, y' \sim \pi'} \left[r(x, y) - r(x, y') \right] + \beta H_w(\pi) \right)
\end{align}
The inner maximization problem over $\pi$ is the same as the weighted entropy reward maximization objective \eqref{eq:weightedMaxEnt} minus a baseline term, which is independent of $\pi$. 

We apply the reward gap expression provided by Proposition \ref{prop:inner_solution} that governs the maximizer policy $\pi_r$ as well as the definition of weighted entropy in Definition \ref{def:weighted_entropy} to find the maximum value of the inner optimization problem:
\begin{align*}
\begin{split}
   & \max_\pi \mathbb{E}_{x \sim \rho, y \sim \pi, y' \sim \pi'} \left[r(x, y) - r(x, y') \right] + \beta H_w(\pi)\\
    & \quad = \beta \mathbb{E}_{x \sim \rho, y \sim \pi, y' \sim \pi'} \left[{w(y)} \log \pi_r(y|x) - {w(y')} \log \pi_r(y'|x)+  \left(w(y) - w(y')\right)  - w(y) \log \pi_r(y|x)\right] \\
    & \quad = - \beta \mathbb{E}_{x \sim \rho, y \sim \pi, y' \sim \pi'} \left[ {w(y')} \log \pi_r(y'|x) -  \left(w(y) - w(y')\right) \right] 
\end{split}
\end{align*}
We substitute the above expression back in the minimax objective \eqref{eq:minimax_original}:
\begin{align}
    & \min_r L_{\text{BT}}(r) - \eta \beta \mathbb{E}_{x \sim \rho, y \sim \pi, y' \sim \pi'} \left[ {w(y')} \log \pi_r(y'|x) - \left(w(y) - w(y')\right) \right]\\
    & \quad = \min_r L_{\text{BT}}(r) - \eta \beta \mathbb{E}_{x \sim \rho, y' \sim \pi'} \left[ {w(y')} \log \pi_r(y'|x) \right]
\end{align}
The above equation uses the fact that $\left(w(y) - w(y')\right)$ is independent of $r$. To obtain the final objective, we replace the reward gap expression from Proposition \ref{prop:inner_solution} in $L_{\text{BT}}(r)$, which cocludes the proof.

\subsection{Auxiliary Lemmas}

\begin{lemma}[\textbf{Minimax Theorem; \citet{fan1953minimax}}]\label{lemma:fan_minimax}
Let $\cX$ be a nonempty (not necessarily topologized) set and $\cY$ be a nonempty compact topological space. Let $f : \cX \times \cY \rightarrow \mathbb{R}$ be lower semicontinuous on $Y$. Suppose that $f$ is concave-like on $\cX$ and convex-like on $\cY$, i.e., for any $x_1, x_2 \in \cX, y \in \cY$, $\alpha \in [0, 1]$, there exists $x_3 \in \cX$ such that
\begin{align*}
f(x_3, y) \geq \alpha \cdot f(x_1, y) + (1 - \alpha) \cdot f(x_2, y),
\end{align*}
and for any $y_1, y_2 \in \cY, x \in \cX$, $\beta \in [0,1]$, there exists $y_3 \in Y$ such that
\begin{align*}
f(x, y_3) \leq \beta \cdot f(x, y_1) + (1 - \beta) \cdot f(x, y_2).
\end{align*}
Then the following holds:
\begin{align*}
\max_{x \in \cX} \min_{y \in \cY} f(x,y) = \min_{y \in \cY} \max_{x \in \cX} f(x,y).
\end{align*}
\end{lemma}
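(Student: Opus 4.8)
The statement is Ky Fan's classical minimax theorem, so one option is simply to cite \citet{fan1953minimax}; for completeness, here is the route I would take. One inequality is free: for every $x\in\cX$ and $y\in\cY$ we have $\min_{y'\in\cY} f(x,y')\leq f(x,y)\leq \max_{x'\in\cX} f(x',y)$, and taking $\sup$ over $x$ on the left and then $\inf$ over $y$ on the right gives $\max_{\cX}\min_{\cY} f\leq \min_{\cY}\max_{\cX} f$. All four extrema are attained: for fixed $x$ the function $f(x,\cdot)$ is lower semicontinuous on the compact $\cY$, so $\min_{\cY} f(x,\cdot)$ exists, and $y\mapsto\sup_{x} f(x,y)$ is a supremum of lower semicontinuous functions, hence lower semicontinuous, and attains its minimum on $\cY$ (attainment of the remaining outer $\max$ follows once equality is proved).

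For the reverse inequality I would argue by contradiction: suppose there is a real $c$ with $\sup_{x}\inf_{y} f(x,y) < c < \inf_{y}\sup_{x} f(x,y)$. The first move is a compactness reduction. For each $x$ the set $U_x := \{\,y\in\cY : f(x,y)>c\,\}$ is open by lower semicontinuity, and the family $\{U_x\}_{x\in\cX}$ covers $\cY$ because $\sup_{x} f(x,y)>c$ for every $y$; extracting a finite subcover yields $x_1,\dots,x_m$ with $\max_{1\leq i\leq m} f(x_i,y)>c$ for all $y\in\cY$. It therefore suffices to produce a single $\bar x\in\cX$ with $\inf_{y} f(\bar x,y)\geq c$, which contradicts $\sup_{x}\inf_{y} f<c$.

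To build $\bar x$ I would pass to $\mathbb{R}^m$ via $F(y):=(f(x_1,y),\dots,f(x_m,y))$ and form the ``free-disposal'' set $K := F(\cY)+\mathbb{R}^m_{\geq 0}$. The crucial point — and the only place the convex-like hypothesis on $\cY$ enters — is that $K$ is convex: given $y_1,y_2$ and $s\in[0,1]$, convex-likeness furnishes a $y_3$ with $f(x_i,y_3)\leq s f(x_i,y_1)+(1-s) f(x_i,y_2)$ simultaneously for every $i$, i.e. $sF(y_1)+(1-s)F(y_2)\in F(y_3)+\mathbb{R}^m_{\geq 0}\subseteq K$, and closure under adding nonnegative vectors then gives convexity of $K$. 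Moreover $K$ is disjoint from the orthant $O := \{\,v\in\mathbb{R}^m : v_i\leq c \text{ for all } i\,\}$, since every $F(y)$ has a coordinate strictly above $c$. Separating the disjoint nonempty convex sets $K$ and $O$ by a hyperplane in $\mathbb{R}^m$ gives a nonzero $\lambda$ and a scalar $\beta$ with $\langle\lambda,w\rangle\geq\beta$ on $K$ and $\langle\lambda,v\rangle\leq\beta$ on $O$; since $\langle\lambda,\cdot\rangle$ is bounded above on $O$ while $O$ contains a ray in every $-e_i$ direction, we must have $\lambda\geq 0$, so after normalizing $\sum_i\lambda_i=1$ and evaluating at the apex $c\mathbf{1}\in O$ we get $c\leq\beta$ and hence $\sum_i\lambda_i f(x_i,y)=\langle\lambda,F(y)\rangle\geq\beta\geq c$ for all $y\in\cY$. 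Finally, iterating the concave-like hypothesis on $\cX$ (a routine induction on the number of nonzero weights $\lambda_i$, the binary step being exactly the hypothesis) produces $\bar x\in\cX$ with $f(\bar x,y)\geq\sum_i\lambda_i f(x_i,y)\geq c$ for all $y$, so $\inf_{y} f(\bar x,y)\geq c$ — the contradiction we wanted.

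The genuinely delicate ingredient is the choice of the convexification: it is the upward closure $K=F(\cY)+\mathbb{R}^m_{\geq 0}$, not $F(\cY)$ itself, that the convex-like condition forces to be convex, and recognizing this is what makes the separation argument available. Everything else is routine: the trivial direction, the attainment statements from lower semicontinuity, the open-cover reduction to finitely many $x_i$, the finite-dimensional separating-hyperplane theorem, and the inductive assembly of $\bar x$ from the binary concave-like operation.
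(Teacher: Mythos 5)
The paper does not actually prove this lemma; it imports it wholesale as a citation to Fan (1953), so there is no internal argument to compare against. Your proposal supplies the standard self-contained proof of Fan's minimax theorem, and it is correct: the trivial inequality, the lower-semicontinuity/compactness attainment facts, the open-cover reduction to finitely many $x_1,\dots,x_m$, the convexity of the upward closure $K=F(\cY)+\mathbb{R}^m_{\geq 0}$ (correctly identified as the place where convex-likeness enters — $F(\cY)$ itself need not be convex), the separation from the orthant $\{v: v_i\leq c\}$ forcing $\lambda\geq 0$, and the inductive assembly of $\bar x$ from the binary concave-like step all go through. What you buy over the paper's citation is a complete argument; what the citation buys is brevity for a classical result that plays only an auxiliary role.

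One point worth flagging explicitly: your proof needs the convex-like and concave-like conditions in Fan's original, \emph{uniform} form — the point $y_3$ must satisfy $f(x,y_3)\leq \beta f(x,y_1)+(1-\beta)f(x,y_2)$ for \emph{all} $x\in\cX$ simultaneously (and symmetrically $x_3$ must work for all $y\in\cY$). You use this when you ask a single $y_3$ to dominate all $m$ coordinates of $F$ at once, and again in the induction producing $\bar x$. The lemma as transcribed in the paper places ``$x\in\cX$'' inside the ``for any'' list, which if read literally would let $y_3$ depend on $x$ and would break both steps. Since the intended reading is Fan's, this is a transcription ambiguity rather than a gap in your argument, but your write-up should state the uniform quantification explicitly so the two uses of it are licensed.
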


\section{Finite-Sample Analysis of POWER}

This section is organized as follows. We begin by presenting the definition of single-policy concentrability for offline preference optimization, which characterizes the coverage of the competing policy in the dataset, in Section \ref{app:concentrability}. In Section \ref{app:suboptimality}, we prove the finite-sample guarantees for POWER.

\subsection{Single-Policy Concentrability in Preference Optimization}\label{app:concentrability}

\begin{definition}[\textbf{{Single-Policy Concentrability}; \citet{zhan2023provable}}]\label{def:single_policy_concentrability} 
Given a policy $\pi$ and ground truth reward $r^\star$, the concentrability coefficient of offline data distribution $\mu$ with respect to the reward model class $\cR$ and the baseline policy $\pi'$ is defined as 
{\small 
\begin{align}\label{eq:single_policy_concentrability}
    C^\pi_{\mu}(\cR, \pi') \coloneqq \max \left\{0, \sup_{r \in \cR} \frac{\mathbb{E}_{x \sim \rho, y \sim \pi, y' \sim \pi'} \left[  r^\star(x, y) - r^\star(x,y') - (r(x, y) - r(x,y')) \right]}{\sqrt{\mathbb{E}_{x, y, y' \sim \mu } \left[ \left( r^\star(x, y) - r^\star(x,y') - (r(x, y) - r(x,y'))\right)^2 \right]}} \right\}.
\end{align}
}
\end{definition}
Single-policy concentrability coefficient in offline RL quantifies the extent to which a target competing policy $\pi$ is covered by an offline data collection distribution $\mu$. In the offline RLHF setting, single-policy concentrability as defined in the work \citet{zhan2023provable} also depends on a baseline policy $\pi'$.

\subsection{Proof of Theorem \ref{thm:suboptimality}}\label{app:suboptimality}
To prove finite-sample guarantees, we use a similar argument to \citet{liu2024provably}, adapted to the weighted-entropy objective and combined with the bounds on weighted entropy and the special weights as inverse response lengths. Suboptimality of the learned policy $\hat \pi \coloneqq \hat \pi_{\text{POWER}}$ with respect to a competing policy $\pi$ can be decomposed into three terms:
\begin{align*}
    J(\pi) - J(\hat \pi) = \mathbb{E}_{x \sim \rho, y \sim \pi} \left[r^\star(x, y) \right] - \mathbb{E}_{x \sim \rho, y \sim \hat \pi} \left[r^\star(x, y) \right] = T_1 + T_2 + T_3.
\end{align*}
where $T_1$ is defined as
\begin{align}\label{eq:T1}
\begin{split}
    T_1 & \coloneqq \mathbb{E}_{x \sim \rho, y \sim \pi, y' \sim \pi'} \left[ r^\star(x,y) - r^\star(x,y') - \beta H_w(\pi)\right]\\
    & \quad - {\eta^{-1}} \min_{r \in \cR} \Big\{\eta \mathbb{E}_{x \sim \rho, y \sim \hat \pi, y' \sim \pi'} \left[ r(x,y) - r(x,y') - \beta  H_w(\pi) \right] + L_{\text{BT}}(r) \Big\},
\end{split}
\end{align}
$T_2$ is defined as
\begin{align}\label{eq:T2}
\begin{split}
    T_2 & \coloneqq  {\eta^{-1}} \min_{r \in \cR} \Big\{\eta \mathbb{E}_{x \sim \rho, y \sim \hat \pi, y' \sim \pi'} \left[ r(x,y) - r(x,y') - \beta  H_w(\pi) \right] + L_{\text{BT}}(r) \Big\}\\
    & \quad -  \mathbb{E}_{x \sim \rho, y \sim \hat \pi, y' \sim \pi'} \left[ r^\star(x,y) - r^\star(x,y') - \beta  H_w(\pi) \right],
\end{split}
\end{align}
and $T_3$ is defined as
\begin{align}\label{eq:T3}
\begin{split}
    T_3 & \coloneqq \beta \left[ H_w(\pi) - H_w(\hat \pi) \right].
\end{split}
\end{align}
We will prove in the subsequent section that for weighted entropy $H_w(\pi)$ with general weights, terms $T_1+ T_2$ and $T_3$ are bounded according to:
\begin{align} \label{eq:T1_T2_bound}
    T_1 + T_2& \lesssim \frac{( C^\pi_{\mu}(\cR, \pi')+1) \tilde{R}^2 \iota}{\sqrt{N}}, \\ \label{eq:T3bound}
    T_3 & \lesssim \frac{H_w(\pi)}{\sqrt{N}}.
\end{align}
Summing the above bounds, we conclude the first claim:
\begin{align*}
    J(\pi) - J(\hat \pi) \lesssim \frac{1}{\sqrt{N}} \left( \left( \big[C^\pi_{\mu}(\cR, \pi')\big]^2 + 1\right) \tilde{R}^2 \iota + H_w(\pi)\right). 
\end{align*}
Furthermore, in the special case of $w(y) = 1/|y|$, we have the following bound on $T_3$:
\begin{align}\label{eq:T3_bound_special}
    T_3 \lesssim \frac{\log |\cV|}{\sqrt{N}},
\end{align}
The above bound combined with \eqref{eq:T1_T2_bound} leads to the following rate
\begin{align*}
    J(\pi) - J(\hat \pi) \lesssim \frac{1}{\sqrt{N}} \left( \left( \big[C^\pi_{\mu}(\cR, \pi')\big]^2 + 1\right) \tilde{R}^2 \iota + \log |\cV|\right). 
\end{align*}

\subsubsection{Proof of the Bound \eqref{eq:T1_T2_bound} on $T_1+T_2$}

\textbf{Bounding $T_1$.} $\hat \pi$ is the maximizer to the following objective
\begin{align*}
    \hat \pi\in \argmax_{\pi \in \Pi} \min_{r \in \cR} \eta \mathbb{E}_{x \sim \rho, y \sim \hat \pi, y' \sim \pi'} \left[ r(x,y) - r(x,y') - \beta  H_w(\pi) \right] + L_{\text{BT}}(r) 
\end{align*}
We use this fact to bound the term $T_1$ according to
\begin{align}\nonumber
    T_1 & \leq \mathbb{E}_{x \sim \rho, y \sim \pi, y' \sim \pi'} \left[ r^\star(x,y) - r^\star(x,y') - \beta H_w(\pi)\right]\\  \nonumber
    & \quad - {\eta^{-1}} \min_{r \in \cR} \Big\{\eta \mathbb{E}_{x \sim \rho, y \sim \pi, y' \sim \pi'} \left[ r(x,y) - r(x,y') - \beta  H_w(\pi) \right] + L_{\text{BT}}(r) \Big\},\\ \label{eq:T1_init_bound}
    & = \max_{r \in \cR} \Big\{ \mathbb{E}_{x \sim \rho, y \sim \pi, y' \sim \pi'} \left[ r^\star(x,y) - r^\star(x,y') - (r(x,y) - r(x,y')) \right] - \eta^{-1} L_{\text{BT}}(r) \Big\}
\end{align}
\textbf{Bounding $T_2$.} By realizability of the true reward function $r^\star \in \cR$ we bound the term $T_2$:
\begin{align} \nonumber
    T_2 & \leq \mathbb{E}_{x \sim \rho, y \sim \pi, y' \sim \pi'} \left[ r^\star(x,y) - r^\star(x,y') - \beta  H_w(\pi) \right] +  {\eta^{-1}} L_{\text{BT}}(r^\star) \\ \nonumber 
    & \quad -  \mathbb{E}_{x \sim \rho, y \sim \hat \pi, y' \sim \pi'} \left[ r^\star(x,y) - r^\star(x,y') - \beta  H_w(\pi) \right]\\ \label{eq:T2_init_bound}
    & = \eta^{-1} L_{\text{BT}}(r^\star)
\end{align}
\textbf{Bounding $T_1 + T_2$.} Combing the bound \eqref{eq:T1_init_bound} on $T_1$ and to bound \eqref{eq:T2_init_bound} on $T_2$, it remains the bound the following:
\begin{align}
\begin{split}
    T_1 + T_2 \leq \max_{r \in \cR} \Big\{ & \mathbb{E}_{x \sim \rho, y \sim \pi, y' \sim \pi'} \left[ r^\star(x,y) - r^\star(x,y') - (r(x,y) - r(x,y')) \right] \coloneqq T_{1,1} \\
    & \quad + \eta^{-1} \Big( L_{\text{BT}}(r^\star) - L_{\text{BT}}(r) \Big) \coloneqq T_{1,2} \Big\}
\end{split}
\end{align}
Define the following notation:
\begin{align}\label{eq:def_Deltar}
    \Delta_r \coloneqq \sqrt{\mathbb{E}_{x, y, y' \sim \mu} \left[ \left(r^\star(x,y) - r^\star(x,y')\big) - \big(r(x,y) - r(x,y') \right)^2\right]} 
\end{align}
Term $T_{1,1}$ is directly bounded by $C^\pi_{\mu}(\cR, \pi') \Delta_r$ based on the Definition \ref{def:single_policy_concentrability} of single-policy concentrability, and we subsequently prove a bound on $T_{1,2}$ according to:
\begin{align}\label{eq:T11_bound}
    T_{1,1} & \leq  C^\pi_{\mu}(\cR, \pi') \Delta_r\\ \label{eq:T12_bound}
    T_{1,2} & \leq - 2 \frac{\Delta^2_r}{\eta \tilde{R}^2} + \frac{3 \iota}{\eta N},
\end{align}
where $\tilde{R} = 1+\exp(R)$ and $\iota = \sqrt{\log (\cN_\epsilon) /\delta}$. Adding the bounds on $T_{1,1}$ and $T_{1,2}$ and taking the maximum over $r$, the bound on $T_1+T_2$:
\begin{align}
    T_1 + T_2 & \leq \max_r \left\{  C^\pi_{\mu}(\cR, \pi') \Delta_r - 2 \frac{\Delta_r^2}{\eta \tilde R^2 } \right\} + \frac{3\iota}{\eta N}\\
    & \leq \frac{ \left[C^\pi_{\mu}(\cR, \pi')\right]^2 \eta \tilde{R}^2}{8} + \frac{3\iota}{\eta N}
\end{align}
The last inequality uses the fact that $az -bz^2 \leq a^2/4b$ for any $z \in \mathbb{R}$. By the choice of $\eta = \sqrt{6}\iota/(\tilde{R}^2\sqrt{N})$, the above bound becomes:
\begin{align*}
    T_1 + T_2 \lesssim  \frac{(\left[C^\pi_{\mu}(\mathcal{R}, \pi')\right]^2+1) \tilde{R}^2 \iota}{\sqrt{N}}.
\end{align*}

\begin{proof}[Proof of the bound \eqref{eq:T12_bound} on $T_{1,2}$.] In the view of the uniform concentration result in \citet[Lemma A.1]{liu2024provably}, with probability at least $1-\delta$ 
    setting $\epsilon = (6 \tilde{R} N)^{-1}$, the following bound holds for any $r \in \cR$
\begin{align}\label{eq:BT_hellinger}
    L_{\text{BT}}(r^\star) - L_{\text{BT}}(r) & \leq - 2 \mathbb{E}_{x, y, y' \sim \mu} \left[D^2_{\text{Hellinger}} \left( \prob_{r^\star}(\cdot| x, y, y') \| \prob_{r}(\cdot| x, y, y') \right)\right] + \frac{3\iota}{N},
\end{align}
where $\prob_{r}(\cdot| x, y, y')$ is the Bradley-Terry preference probability given a reward model $r$ as defined in \eqref{eq:BTmodel}. The Hellinger distance can be bounded by total variation (TV) distance according to
\begin{align}\nonumber
    & D^2_{\text{Hellinger}} \left( \prob_{r^\star}(\cdot| x, y, y') \| \prob_{r}(\cdot| x, y, y') \right)\\\nonumber
    & \geq D^2_{\text{TV}} \left( \prob_{r^\star}(\cdot| x, y, y') \| \prob_{r}(\cdot| x, y, y') \right)\\\nonumber
    & = \frac{1}{2} \Big|\sigma\big(r^\star(x,y) - r^\star(x,y')\big) - \sigma\big(r(x,y) - r(x,y')\big)\Big| \\\nonumber
    & \quad + \frac{1}{2} \Big|\sigma\big(r^\star(x,y') - r^\star(x,y)\big) - \sigma\big(r(x,y') - r(x,y)\big)\Big|\\\nonumber
    & = \Big|\sigma\big(r^\star(x,y) - r^\star(x,y')\big) - \sigma\big(r(x,y) - r(x,y')\big)\Big|\\ \label{eq:helliger_bound}
    & \geq \frac{1}{\tilde R^2} \Big|\big(r^\star(x,y) - r^\star(x,y')\big) - \big(r(x,y) - r(x,y')\big)\Big| 
\end{align}
The penultimate equation uses the fact that $\sigma(-x) = 1-\sigma(x)$ and the last inequality is due to bi-Lipschitz continuity of the sigmoid function over $[-R, R]$; see e.g., \citet[Lemma A.2]{liu2024provably}. Applying the bound in \eqref{eq:helliger_bound} to \eqref{eq:BT_hellinger}, we have
\begin{align*}
     L_{\text{BT}}(r^\star) - L_{\text{BT}}(r) & \leq - 2 \mathbb{E}_{x, y, y' \sim \mu} \left[\Big|\big(r^\star(x,y) - r^\star(x,y')\big) - \big(r(x,y) - r(x,y')\big)\Big|^2\right] + \frac{3\iota}{N}\\
     & = - \frac{2 \Delta_r}{\tilde{R}^2} + \frac{3\iota}{N}.
\end{align*}
where the last equation uses the definition of $\Delta_r$ provided in \eqref{eq:def_Deltar}, completing the proof.
\end{proof}

\subsubsection{Proof of the Bounds \eqref{eq:T3bound} and \eqref{eq:T3_bound_special} on $T_3$}
In this section, we prove the bounds on $T_3$ as delineated in inequalities \eqref{eq:T3bound} and \eqref{eq:T3_bound_special} through bounding weighted entropy. The key bounds are encapsulated in the following lemma which asserts that weighted entropy is non-negative and for special case of weights $w(y) = 1/|y|$ it can be bounded from above. The proof of this lemma is presented at the end of this section.
\begin{lemma}[Bounds on Weighted Entropy]\label{lemma:weighted_ent_bounds} For any weight function $w(y) \geq 0$ and any probability distribution $p(y)$, the weighted entropy satisfies $H_w(p) \geq 0$. Furthermore, when weights are assigned according to $w(y) = 1/|y|$, with $|y|$ denoting the response length, the weighted entropy is bounded by $H_w(p) \leq \log |\cV|$, where $|\cV|$ is the size of the vocabulary.
\end{lemma}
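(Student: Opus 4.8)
The non-negativity claim is immediate and I would dispatch it first. For any $y$ we have $p(y)\in[0,1]$, hence $\log p(y)\le 0$, so each summand $-w(y)p(y)\log p(y)$ is the product of the nonnegative factor $w(y)\ge 0$ and the nonnegative factor $-p(y)\log p(y)\ge 0$ (with the convention $0\log 0=0$). Summing over $y$ gives $H_w(p)=-\sum_y w(y)p(y)\log p(y)\ge 0$.

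For the upper bound with $w(y)=1/|y|$ I would first isolate the clean prototype: if every response shared a common length $\ell$, then $w$ is the constant $1/\ell$, so $H_w(p)=\tfrac{1}{\ell}H(p)\le\tfrac{1}{\ell}\log|\cV^{\ell}|=\log|\cV|$, using that the uniform law maximizes Shannon entropy on $\cV^{\ell}$. The plan for genuinely variable lengths is to reduce to this prototype through the token-level description of a response. Writing $y=(y_1,\dots,y_{|y|})$ with tokens in $\cV$ and factoring the law autoregressively, $-\log p(y)=\sum_{i=1}^{|y|}\bigl(-\log p(y_i\mid y_{<i})\bigr)$, the definition of weighted entropy becomes
\begin{align*}
H_w(p)=\mathbb{E}_{y\sim p}\left[\frac{1}{|y|}\sum_{i=1}^{|y|}\bigl(-\log p(y_i\mid y_{<i})\bigr)\right],
\end{align*}
so $H_w(p)$ is the expected \emph{length-normalized} surprise. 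I would then partition responses by their length, aiming to bound the length-$\ell$ block by $\log|\cV|$ via the maximum-entropy estimate $H(\cdot)\le\ell\log|\cV|$ on $\cV^{\ell}$, with the factor $1/\ell$ normalizing the block; averaging over the length distribution, whose weights sum to one, would then give $H_w(p)\le\log|\cV|$. The key structural input is that each next-token law $p(\cdot\mid y_{<i})$ is supported on $\cV$ and so has conditional entropy $H\bigl(p(\cdot\mid y_{<i})\bigr)\le\log|\cV|$.

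The main obstacle is exactly the passage from the fixed-length prototype to variable lengths. The integrand involves the \emph{realized} token surprises $-\log p(y_i\mid y_{<i})$ rather than their conditional means $H\bigl(p(\cdot\mid y_{<i})\bigr)\le\log|\cV|$, so the per-token ceiling $\log|\cV|$ cannot be applied pointwise; moreover the normalizer $1/|y|$ is coupled to the random length through the position at which the stop token is emitted. A naive split by length produces an additional entropy-of-the-length-distribution term, and the crux is to show, through the autoregressive chain rule, that the reciprocal-length weighting absorbs rather than adds this term—i.e.\ that it cancels the growth of the accumulated surprise with depth so the expected length-normalized surprise never exceeds the uniform per-token value $\log|\cV|$. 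Carrying out this cancellation, and thereby certifying the tighter $\log|\cV|$ rather than the looser $\log(L|\cV|)$ obtained by crudely charging an extra $\log L$ for the length, is the step I expect to require the most care.
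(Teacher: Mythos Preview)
Your non-negativity argument is fine and matches the paper's.

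For the upper bound, the cancellation you are hoping for does not occur, and cannot: the bound $H_w(p)\le\log|\cV|$ as literally stated is false. Take $\cV=\{0,1\}$, allow responses of length $1$ and $2$, and let $p$ be uniform over the six strings $\{0,1,00,01,10,11\}$. Then
\[
H_w(p)=2\cdot\tfrac{1}{1}\cdot\tfrac{1}{6}\log 6+4\cdot\tfrac{1}{2}\cdot\tfrac{1}{6}\log 6=\tfrac{2}{3}\log 6\approx 1.19>\log 2\approx 0.69.
\]
(If you prefer a prefix-free instance compatible with an autoregressive model, take responses $\{a,ba,bb\}$ uniformly; then $H_w(p)=\tfrac{2}{3}\log 3>\log 2$.) So the extra entropy-of-length term is genuinely there and is not absorbed by the $1/|y|$ weighting.

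This is consistent with what the paper actually proves. Despite the lemma's stated $\log|\cV|$, the paper's own proof concludes with the bound $\log|\cV|+\log L=\log(L|\cV|)$, and it is $\log(L|\cV|)$ that is carried into Theorem~\ref{thm:suboptimality}. The argument there is precisely the ``crude'' one you describe: write $p(y)=p_z(|y|)\,p(y\mid|y|)$ and split $-\log p(y)$ accordingly. The conditional piece gives
\[
\sum_z p_z(z)\,\tfrac{1}{z}\,H\bigl(p(\cdot\mid z)\bigr)\le\sum_z p_z(z)\,\tfrac{1}{z}\cdot z\log|\cV|=\log|\cV|,
\]
and the length piece gives $\sum_z \tfrac{1}{z}\,p_z(z)\bigl(-\log p_z(z)\bigr)\le H(p_z)\le\log L$. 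There is no further cancellation to extract; your autoregressive rewriting hits exactly the same obstruction, because $1/|y|$ is not measurable with respect to the prefix $y_{<i}$ and so the per-token entropy bound cannot be applied inside the expectation. The right move is to accept the $\log L$ term and prove $H_w(p)\le\log(L|\cV|)$.
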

Based on Lemma \ref{lemma:weighted_ent_bounds}, weighted entropy is nonnegative. Setting $\beta \asymp 1/\sqrt{N}$ immediately gives the bound \eqref{eq:T3bound} on $T_3$:
\begin{align}
\begin{split}
    T_3 & = \beta \left[ H_w(\pi) - H_w(\hat \pi) \right] \leq \frac{H_w(\pi)}{\sqrt{N}}
\end{split}
\end{align}
Moreover, when $w(y) = 1/|y|$ by Lemma \ref{lemma:weighted_ent_bounds}, we have
\begin{align*}
    T_3 \leq \frac{H_w(\pi)}{\sqrt{N}} \leq  \frac{\log |\cV|}{\sqrt{N}}.
\end{align*}
\begin{proof}[Proof of Lemma \ref{lemma:weighted_ent_bounds}]
First consider the case for any general non-negative weight function $w(y) > 0$. This ensures that the weighted entropy is non-negative because:
\begin{align*}
    H_w(p) =  - \sum_y w(y) p(y) \log p(y) = \sum_y w(y) p(y) \log \frac{1}{p(y)} \geq 0.
\end{align*}
Next, we provide an upper bound on the weighted entropy when $w(y) = 1/|y|$. Define $z$ to be a random variable denoting the length of a response. The weighted entropy can be decomposed as follows
\begin{align*}
    - \sum_y \frac{1}{|y|} p(y) \log p(y) & = - \sum_y \frac{1}{|y|} p_z(z=|y|)p(y\mid|y|=z) \log p_z(z=|y|)p(y\mid|y|=z)\\
    & = - \sum_y \frac{1}{|y|} p_z(z=|y|)p(y\mid|y|=z) \Big[ \log p_z(z=|y|) + \log p(y\mid|y|=z) \Big] \\
    & = - \sum_z  \frac{1}{|y|} p_z(z=|y|) \sum_{y \text{ s.t. } |y|=z} p(y\mid|y|=z) \log p(y\mid|y|=z) \coloneqq T_{3,1}\\
    & \quad - \sum_z \frac{1}{|y|} p_z(z=|y|) \log p_z(z=|y|) \sum_{y \text{ s.t. } |y|=z} p(y\mid|y|=z) \coloneqq T_{3,2}
\end{align*}
For the term $T_{3,1}$, we have
\begin{align*}
    T_{3,1} = \sum_{z} p_z(z=|y|) \frac{1}{|y|}  \cdot - \sum_{y \text{ s.t. }|y|=z} p_{y|z}(y \mid z=|y|) \log p_{y|z}(y \mid z=|y|)
\end{align*}
The sum $- \sum_{y} p_{y|z}(y \mid z=|y|) \log p_{y|z}(y \mid z=|y|)$ is the Shannon entropy of a conditional distribution, which reaches its maximum when the distribution is uniform. Consequently, the maximum of this conditional entropy for a fixed length $|y|$ is given by $\log {|\cV|^{|y|}} = |y| \log |\cV|$. Substituting this bound back to $T_1$ gives:
\begin{align*}
    T_{3,1} \leq \sum_{z} p_z(z=|y|) \frac{1}{|y|} |y| \log |\cV| = \log |\cV| 
\end{align*}
For the term $T_{3,2}$, first note that $\sum_{y \mid |y|=z} p(y\mid|y|=z) = 1$. Therefore, with the maximum response length denoted by $L$ and since $|y| \geq 1$, we have 
\begin{align*}
    T_{3,2} = \sum_z \frac{1}{|y|} p_z(z=|y|) \log p_z(z=|y|) \leq \sum_z p_z(z=|y|) \log p_z(z=|y|)  \leq \log L.
\end{align*}
Combining the bounds on $T_{3,1}$ and $T_{3,2}$, the upper bound on weighted entropy when using inverse response length as weights is $\log |\cV| + \log L$, which concludes the proof.
\end{proof}

\section{Derivations and Proofs for Dynamic Labels}

\subsection{Derivation of the Learning Dynamics}\label{app:learning_dynamics_derivation}
In this section we compute the learning dynamics of POWER over preference dataset $\mathcal{D} = \{(x, y^0, y^1, l)\}$ with the label notation defined in \ref{sec:background_formulation}. Here, $l=0$ indicates that $y^0$ was preferred and $l=1$ indicates that $l=1$ was preferred. With this notation, the POWER objective from Proposition \eqref{prop:power_objective_derivation} is given by
\begin{align*}
\begin{split}
       \max_\pi \mathbb{E}_{\mathcal{D}} \bigg[ & l \log \sigma \bigg( \beta \left[ w(y^1) \log \pi_\theta(y^1|x) - w(y^0)  \log \pi_\theta(y^0|x) + \Big(w(y^1)  - w(y^0)\Big) \right] \bigg) \\
    & (1-l) \log \sigma \bigg( \beta \left[ w(y^0) \log \pi_\theta(y^0|x) - w(y^1)  \log \pi_\theta(y^1|x) + \Big(w(y^0)  - w(y^1)\Big) \right] \bigg)
    \bigg]\\
    + & \eta \beta \mathbb{E}_{\mathcal{D}} \bigg[ l w(y^1) \log \pi_\theta(y^1|x) + (1-l) w(y^0) \log \pi_\theta(y^0|x) \bigg]
\end{split}
\end{align*}
To simplify presentation, we consider the softmax MAB setting with $\beta = 1, w(y) = 1, \eta = 0$. In this case, the objective simplifies to
\begin{align*}
    & \min_\theta - \mathbb{E}_{\mathcal{D}} \Big[ l \log \sigma (\log \pi_\theta(y^1) - \log \pi_\theta(y^0)) + (1-l) \log \sigma (\log \pi_\theta(y^0) - \log \pi_\theta(y^1)) \Big]\\
    & \quad = \min_\theta -  \mathbb{E}_{\mathcal{D}} \Big[ l \log \sigma (\log \exp(\theta(y^1)) - \log Z_\theta - \log \exp(\theta(y^0)) + \log Z_\theta) \\
    & \qquad \quad \qquad + (1-l) \log \sigma (\log \exp(\theta(y^0)) - \log Z_\theta - \log \exp(\theta(y^1)) + \log Z_\theta \Big]\\
    & \quad = \min_\theta - \mathbb{E}_{\mathcal{D}} \left[ l \log \sigma (\theta(y^1) - \theta(y^0)) + (1-l) \log \sigma (\theta(y^0) - \theta(y^1)) \right].
\end{align*}
To understand the updates to the model parameters, consider the empirical probabilities derived from dataset comparisons: $\hat \mu_{0,1}$ is the empirical probability of comparing $y^0$ and $y^1$, and $\hat \mu_{1 \succ 0}$ is the empirical probability of preferring $y^1$ over $y^0$, conditioned on their comparison. Isolate the updates to the parameters $\theta(y^1)$ and $\theta(y^0)$ based on comparisons between $y^0$ and $y^1$ and through batch gradient descent. We allow the preference labels $l_t$ to change across gradient steps and thus updates to the parameter gap at step $t$ and with a learning rate $\alpha$ is given by
\begin{alignat*}{2}
  &  \theta_{t+1}(y^1) - \theta_{t+1}(y^0) && \\
  & \quad = \theta_t(y^1) - \theta_t(y^0) + \alpha \hat \mu_{0,1} \Bigg[  && \Big(\hat \mu_{1 \succ 0} l_t + \hat \mu_{0 \succ 1} (1 - l_t) \Big) \sigma(\theta_t(y^0) - \theta_t(y^1)) \\
& && \quad - \Big(\hat \mu_{0 \succ 1} l_t + \hat \mu_{1 \succ 0} (1 - l_t) \Big) \sigma(\theta_t(y^1) - \theta_t(y^0)) 
\Bigg].
\end{alignat*}
We used the fact that $\partial (\log \sigma(x))/ \partial x = \sigma(-x)$.
Since $\sigma(x) = 1 - \sigma(-x)$ and $ \hat \mu_{1 \succ 0} +  \hat \mu_{0 \succ 1} = 1$, the learning dynamics simplify to:
\begin{align*}
    & \theta_{t+1}(y^1) - \theta_{t+1}(y^0) \\
    & \quad = \theta_t(y^1) - \theta_t(y^0) +  \alpha \hat \mu_{0,1}  \Big[(\hat \mu_{1 \succ 0} -\hat \mu_{0 \succ 1})l - \Big( \sigma \left( \theta_t(y^1) - \theta_t(y^0)\right) - \hat \mu_{0 \succ 1} \Big) \Big].
\end{align*}
\subsection{Proof of Theorem \ref{thm:learning_dynamics}}\label{app:proof_learning_dynamics}

The proof is organized as follows. We start by establishing a lower bound on the dynamic labels. We subsequently use this lower bound to prove bounds on the parameter gap in low-coverage and high-coverage cases separately.

\textbf{Lower bound on dynamic labels.} The dynamics of labels are described by the following equation:
\begin{align}
    \dot l_t & = \gamma \Big( \frac{\sigma(d_{t}) - (1-\hat \mu_{1 \succ 0} )}{\hat \mu_{1 \succ 0} - (1- \hat \mu_{1 \succ 0} )} - l_t \Big) 
\end{align}
Without loss of generality, we assumed that $\hat \mu_{1 \succ 0} > 1/2$. This condition is easily met by appropriately ordering the responses. Define:
\begin{align}\label{eq:kappa_def}
    \kappa \coloneqq \frac{1-\hat \mu_{1 \succ 0}}{\hat \mu_{1 \succ 0} - (1-\hat \mu_{1 \succ 0})}
\end{align}
Given that $0 \leq \sigma(d_t) \leq 1$ and the assumption $\hat \mu_{1 \succ 0} - (1-\hat \mu_{1 \succ 0}) = 2\hat \mu_{1 \succ 0} - 1 > 0$, we find the following lower bound on $\dot{l}_t$:
\begin{align*}
    \dot l_t & = \gamma \Big( \frac{\sigma(d_{t}) - (1-\hat \mu_{1 \succ 0} )}{\hat \mu_{1 \succ 0} - (1- \hat \mu_{1 \succ 0} )} - l_t \Big) \geq \gamma \Big( \frac{- (1-\hat \mu_{1 \succ 0} )}{\hat \mu_{1 \succ 0} - (1- \hat \mu_{1 \succ 0} )} - l_t \Big) = \gamma \left( -\kappa - l_t\right)
\end{align*}
The subsequent lemma establishes a lower bound on $l_t$ using Grönwall's inequality, with its proof provided at the end of this section.

\begin{lemma}\label{lemma:gronwall}
Suppose that $l_t$ satisfies the following inequality $\dot l_t \geq \gamma (- \kappa - l_t)$ with initial value $l_0 = 1$. Then, we have the following lower bound $l_t \geq - \kappa + (\kappa + 1) \exp(-\gamma t).$
\end{lemma}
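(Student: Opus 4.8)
The plan is to reduce the differential inequality to a monotonicity statement via the standard integrating-factor trick, i.e. a one-sided Gr\"onwall argument. First I would introduce the shifted quantity $g_t \coloneqq l_t + \kappa$. Then the hypothesis $\dot l_t \geq \gamma(-\kappa - l_t)$ becomes simply $\dot g_t \geq -\gamma g_t$, with initial value $g_0 = l_0 + \kappa = 1 + \kappa$ (using $l_0 = 1$).

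Next I would multiply through by the strictly positive integrating factor $e^{\gamma t}$ and note that $\frac{d}{dt}\bigl(e^{\gamma t} g_t\bigr) = e^{\gamma t}\bigl(\dot g_t + \gamma g_t\bigr) \geq 0$, so $t \mapsto e^{\gamma t} g_t$ is non-decreasing on $[0,\infty)$. Consequently $e^{\gamma t} g_t \geq g_0 = 1 + \kappa$ for all $t \geq 0$, which after dividing by $e^{\gamma t}$ and substituting back $g_t = l_t + \kappa$ gives $l_t + \kappa \geq (1+\kappa)e^{-\gamma t}$, i.e.
\[
l_t \geq -\kappa + (\kappa + 1)\exp(-\gamma t),
\]
the claimed bound. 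Equivalently one may invoke Gr\"onwall's inequality directly to the lower estimate; the shifted-variable computation above is just its proof in this special case.

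There is essentially no obstacle: the only mild point to record is that the manipulation of $e^{\gamma t} g_t$ requires $l_t$ to be (absolutely) continuous in $t$, which holds because $l_t$ arises as a solution of the coupled ODE system \eqref{eq:two_ode}, so the fundamental theorem of calculus applies. The bound is moreover tight, with equality precisely when $\dot l_t = \gamma(-\kappa - l_t)$ for all $t$.
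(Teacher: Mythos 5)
Your proposal is correct and is essentially the same integrating-factor (one-sided Gr\"onwall) argument as the paper's proof; the only cosmetic difference is that you shift to $g_t = l_t + \kappa$ so the right-hand side vanishes, whereas the paper keeps the constant $-\gamma\kappa$ term and integrates $\int_0^t -\gamma\kappa e^{\gamma s}\,ds$ explicitly. Both yield the identical bound.
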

We proceed by separately analyzing the scenarios of low coverage and high coverage.

\textbf{Low coverage case.} The coupled dynamical system in \eqref{eq:two_ode} satisfies the following equation:
\begin{align*}
    \frac{\gamma}{2\hat \mu_{1 \succ 0} - 1} \dot d_t + \alpha \hat \mu_{0,1} \dot l_t = 0
\end{align*}
Upon integrating the equation above and considering the initial condition $l_0 = 1$, it follows that
\begin{align*}
    \frac{\gamma}{2\hat \mu_{1 \succ 0} - 1} \left(d_t - d_0\right) = \alpha \hat \mu_{0,1} \left( 1 - l_t\right) 
\end{align*}
Applying the lower bound from Lemma \ref{lemma:gronwall} yields:
\begin{align*}
    \frac{\gamma}{2\hat \mu_{1 \succ 0} - 1} \left(d_t - d_0\right) & = \alpha \hat \mu_{0,1} \left( 1 - l_t\right) \\
    & \leq \alpha \hat \mu_{0,1} (1+\kappa - (1+\kappa) \exp(-\gamma t))
\end{align*}
Consequently, we conclude that
\begin{align*}
    |d_t - d_0| & \leq \frac{\alpha \hat \mu_{0,1} (2\hat \mu_{1 \succ 0}-1)}{\gamma} \Big(1+\kappa - (1+\kappa)\exp(-\gamma t)\Big) \leq \frac{\alpha \hat \mu_{0,1} \hat \mu_{1 \succ 0}}{\gamma} \leq \frac{\alpha \mu_l}{\gamma} \leq \epsilon_l,
\end{align*}
where we used the definition of $\kappa$ and the fact that by assumption $\alpha \mu_l/\epsilon_l \leq \gamma$.

\textbf{High coverage case.} We extend the argument by \citet{zhu2024iterative} for a general initialization $d_0$ and establish the final convergence rate for proper choices of hyperparameters. Consider a Lyapunov function $V_t = \left(\sigma(d_t) - \hat \mu_{1 \succ 0} \right)^2$.
Derivative of $V_t$ is given by
\begin{align*}
    \dot{V}_t & = 2 \Big( \sigma(d_t) - \hat \mu_{1 \succ 0}\Big) \sigma(d_t) \sigma(-d_t) \dot{d}_t\\
    & = 2 \alpha \hat \mu_{0,1} \Big( \sigma(d_t) - \hat \mu_{1 \succ 0}\Big) \sigma(d_t) \sigma(-d_t) \Big( (2\hat \mu_{1 \succ 0} -1)l_t + 1 - \hat \mu_{1 \succ 0} - \sigma(d_t)\Big)\\
    & = - 2\alpha \hat \mu_{0,1} \sigma(d_t) \sigma(-d_t) \Big( \sigma(d_t) - \hat \mu_{1 \succ 0}\Big)^2 + 2 \alpha n \sigma(d_t) \sigma(-d_t) \Big( \sigma(d_t) - \hat \mu_{1 \succ 0}\Big) (2 \hat \mu_{1 \succ 0}-1)(l_t -1) \\
    & = 2\alpha \hat \mu_{0,1} \sigma(d_t) \sigma(-d_t) \Big( V_t - (\sigma(d_t) - \hat \mu_{1 \succ 0})(2 \hat \mu_{1 \succ 0}-1)(l_t -1) \Big)
\end{align*}
Let $\epsilon = \epsilon_0 \alpha \hat \mu_{0,1} T \geq \gamma T$. We find an upper bound on $\dot V_t$ by applying the bound on $l_t$ given in Lemma \ref{lemma:gronwall} and using the fact that $\hat \mu_{1 \succ 0}, \sigma \in [0,1]$:
\begin{align}
    \dot V_t & \leq \alpha \hat \mu_{0,1} \sigma(d_t) \sigma(-d_t)  \Big( - V_t + (\kappa + 1) (1- \exp(- \gamma t)) \Big)\\
    & \leq \alpha \hat \mu_{0,1} \sigma(d_t) \sigma(-d_t)  \Big( - V_t + (\kappa + 1) (1- \exp(- \epsilon)) \Big)
\end{align}

Now consider two cases. The first case is that for any $0 \leq t \leq T$, we have $V_t \geq 2(\kappa + 1) ( 1 - \exp(-\epsilon))$. In such a scenario, $V_t$ is a non-increasing function because
\begin{align}\label{ineq:vt_nonincreasing}
    \dot{V}_t & \leq 2 \alpha \hat \mu_{0,1} \sigma(d_t) \sigma(-d_t)  \Big( - V_t + (\kappa + 1) ( 1 - \exp(-\epsilon)) \Big) \leq - \alpha \hat \mu_{0,1} \sigma(d_t) \sigma(-d_t) V_t \leq 0.
\end{align}
Next, we analyze the term $\sigma(d_t) \sigma(-d_t)$. We establish a bound on $\sigma(d_t) \sigma(-d_t)$ for the case of $\sigma(d_0) \leq \hat \mu_{1 \succ 0}$; the case of $\sigma(d_0) \geq \hat \mu_{1 \succ 0}$ can be proved with a similar argument. 

We prove that when $\sigma(d_0) \leq \hat \mu_{1 \succ 0}$, then we must have $\sigma(d_t) \leq \hat \mu_{1 \succ 0}$ for any $t$. Assume otherwise that there exists some $t_0$ where $\sigma(d_{t_0})>\hat \mu_{1 \succ 0}$. By continuity of $\sigma(d_t)$ and since $\sigma(d_0) \leq \hat \mu_{1 \succ 0}$, there exists $t_1 \leq t_0$ such that $\sigma(d_{t_1}) = \hat \mu_{1 \succ 0}$. However, this results in $V_{t_1} = 0 < V_{t_0}$, which contradicts the fact that $V_t$ is non-increasing. Therefore, we have  $\sigma(d_t) \leq \hat \mu_{1 \succ 0}$. Moreover, since $V_t$ is non-increasing, we also have $\sigma(d_0) \leq \sigma(d_t)$. Thus, we have $\sigma(d_0) \leq \sigma(d_t) \leq \hat \mu_{1 \succ 0}$ and we use this fact to find the following bound:
\begin{align}\label{ineq:bound_sigma_one_minus_sigma}
    \sigma(d_t) \sigma(-d_t) = \sigma(d_t) (1-\sigma(d_t)) \geq \min \Big\{ \sigma(d_0) (1-\sigma(d_0)), \hat \mu_{1 \succ 0}(1-\hat \mu_{1 \succ 0}) \Big\} = c.
\end{align}
Applying bound \eqref{ineq:bound_sigma_one_minus_sigma} to \eqref{ineq:vt_nonincreasing} and integrating over $t$ on both sides, we have
\begin{align}\label{ineq:vt_expbound}
    V_t & \leq \exp(-c \alpha \hat \mu_{0,1}  t) V_0 \leq \exp(-c \alpha \hat \mu_{0,1}  t).
\end{align}

We now consider the case where for some $t_0$, we have $V_{t_0} < 2(\kappa + 1)(1-\exp(-\epsilon))$. We show that in this case we must have $V_T \leq 2 (\kappa + 1)(1-\exp(-\epsilon))$. Assume otherwise that $V_T > 2(\kappa + 1)(1-\exp(-\epsilon)$, then by continuity there must be $t_1$ such that $V_{t_1} = 2 (\kappa + 1)(1-\exp(-\epsilon))$. However, inequality \eqref{ineq:vt_nonincreasing} implies that for any $t \in [t_1, T]$, $V_t$ is non-increasing, leading to $V_T \leq 2 (\kappa + 1)(1-\exp(-\epsilon))$, which contradicts our assumption.  Therefore, we know that 
\begin{align}\label{ineq:vt_one_minus_exp_bound}
    V_T \leq 2 (\kappa + 1)(1-\exp(-\epsilon))
\end{align}
Combining the two bounds \eqref{ineq:vt_expbound} and \eqref{ineq:vt_one_minus_exp_bound} on $V_T$, we have:
\begin{align}
    V_T \leq \max \left\{2 (\kappa + 1)(1- \exp(-\epsilon)), \exp (- c \alpha \hat \mu_{0,1}  T ) \right\}.
\end{align}

Subsequently, we show that $2 (\kappa + 1)(1- \exp(-\epsilon)) \leq \exp (- c \alpha \hat \mu_{0,1}  T )$ provided that $\epsilon_0 \leq \frac{\exp(-1/4)}{2(\kappa +1)}$. Utilizing the inequality $\exp(-x) \geq 1-x$, we have 
\begin{align*}
    \exp(-\epsilon_0) \geq 1- \epsilon_0 \geq  1 - \frac{\exp(-1/4)}{2(\kappa +1)}\exp(-1/4)
\end{align*}
Using the fact that $c \leq 1/4, \kappa \geq 0$ and that $\hat \mu_{0,1} \alpha T \geq 1$, we conclude
\begin{align*}
    2(\kappa +1) & \leq 2(\kappa +1) \exp(-\epsilon_0) + \exp(-1/4) \\
     & \leq 2(\kappa +1) \exp(-\epsilon_0) + \exp(-c) \\
     & \leq 2(\kappa +1) \exp(-\epsilon_0 \alpha \hat \mu_{0,1} T) + \exp(-c \alpha \hat \mu_{0,1} T)\\
     & \leq 2(\kappa +1) \exp(-\epsilon_0 \alpha \hat \mu_{0,1} T) + \exp(-c \alpha \hat \mu_{0,1} T).
\end{align*}
This leads to the bound $V_T \leq \exp(- c \alpha \hat \mu_{0,1} T)$.

\begin{proof}[Proof of Lemma \ref{lemma:gronwall}]
The result can be shown using Grönwall's inequality. Rewriting the inequality as $\dot{l}_t + \gamma l_t \geq -\gamma \kappa$ and multiplying both sides by $\exp(\gamma t)$ gives:
\begin{align*}
    \exp(\gamma t) \dot{l}_t + \gamma \exp(\gamma t) l_t \geq -\gamma \kappa \exp(\gamma t).
\end{align*}
Notice that the left-hand side is the derivative of $l_t \exp(\gamma t)$. This yields
\begin{align}\label{eq:gronwall_original}
    \frac{\partial}{\partial t}(l_t \exp(\gamma t)) \geq -\gamma \kappa \exp(\gamma t) \Rightarrow \int_0^t \frac{\partial}{\partial t}(l_t \exp(\gamma t)) \geq \int_0^t  -\gamma \kappa \exp(\gamma t).
\end{align}
The left-hand side and using the fact that $l_0 = 1$ is given by
\begin{align*}
     \int_0^t \frac{\partial}{\partial t}(l_t \exp(\gamma t)) = l_t \exp(\gamma t) - l_0 \exp(\gamma \cdot 0) = l_t \exp(\gamma t) - 1.
\end{align*}
The right-hand side is
\begin{align}\label{eq:gronwall_rhs}
    \int_0^t  -\gamma \kappa \exp(\gamma t) = -\kappa(\exp(\gamma t) -1) .
\end{align}
Substituting equations \eqref{eq:gronwall_rhs} and \eqref{eq:gronwall_rhs} in \eqref{eq:gronwall_original}, we have
\begin{align}\label{eq:gronwall_lhs}
    l_t \exp(\gamma t) - 1  \geq -\kappa(\exp(\gamma t) -1) \Rightarrow l_t  \exp(\gamma t) \geq -\kappa \exp(\gamma t) + \kappa +1.
\end{align}
Multiplying both sides with $\exp(-\gamma t)$ gives the final lower bound on $l_t$.  
\end{proof}

\newpage 
\section{{ POWER-DL Pseudocode}}

\begin{algorithm}[h]
\caption{POWER with Dynamic Labels}
\label{alg:POWER-DL}
\begin{algorithmic}
\State \textbf{Inputs:} Dataset $\mathcal{D} = \{(x_i, y^+_i, y^-_i)\}_{i=1}^n$, hyperparameters $\eta, \beta, \gamma$, learning rate $\alpha$, weight function $w(y)$ (e.g., $w(y) = 1/|y|$), initial model $\pi_{\theta_0}$
\State Initialize $l^0_i = 0$ for all $(x_i, y^+_i, y^-_i)$.
\For{$t \in \{1, \dots, T\}$}
    \State Objective function:
    \begin{align*}
            L_{\mathsf{POWER-DL}}(l^t; \theta) & = \mathbb{E}_{\mathcal{D}} \left[ l^t_i L_{\mathsf{POWER}}(x, y^+_i, y^-_i) + (1-l^t_i) L_{\mathsf{POWER}}(x, y^-_i, y^+_i)\right]\\
            L_{\mathsf{POWER}}(x, y^+, y^-) & \coloneqq \log \Big( \sigma \Big( \beta \left[ w(y^+) \log \pi_\theta(y^+|x) - w(y^-) \log \pi_\theta(y^-|x) + w(y^+) - w(y^-) \right] \Big) \\
            & \qquad + \eta \beta w(y^+) \log \pi(y^+|x)
    \end{align*}
    \State Update policy with stop gradient on labels: $\theta_{t} \leftarrow \theta_{t-1} + \alpha \nabla_{\mathsf{sg}(l^t)} L_{\mathsf{POWER-DL}}(l^t; \theta)$
    \State Update dynamic labels:
    \begin{align*}
        l^t_i = (1-\gamma) l^{t-1}_i + \gamma \frac{\sigma\big(w(y^+_i) \log \pi_\theta(y^+|x) - w(y^-_i) \log \pi_\theta(y^-|x) + w(y^+_i) - w(y^-_i) \big) - \hat \mu_{y^-_i \succ y^+_i} }{\hat \mu_{y^+_i \succ y^-_i} - \hat \mu_{y^-_i \succ y^+_i}}
    \end{align*}
\EndFor
\State \textbf{Return:} $\pi_{\theta_T}$
\end{algorithmic}
\end{algorithm}

\section{Experimental Details}

\subsection{Preference Optimization Objectives}\label{app:PO_objectives}

We compare POWER with a variety of preference optimization methods as baselines, summarized in Table \ref{tab:PO_hypers}. Several of these methods include divergence minimization (typically KL divergence) against the reference models, such as DPO \cite{rafailov2024direct}, IPO \cite{azar2024general}, which considers a different loss function on preferences, and $\chi$PO \cite{huang2024correcting}, which combines Chi-Squared with KL divergence for stronger pessimism. We also implement an offline variant of SPPO \cite{wu2024self}, derived based on a self-play mechanism to improve upon the initial model.

We additionally consider several objectives that do not include the reference model. These include CPO \cite{xu2024contrastive}, which considers DPO with a uniform initial model; SLiC-HF \cite{zhao2023slic}, which uses a hinge loss; RRHF \cite{yuan2024rrhf}, which applies a hinge loss with length-normalization on the contrastive term; and ORPO \cite{hong2024orpo}, which proposes odd ratio terms without initial models to contrast the chosen and rejected responses. Finally, SimPO \cite{meng2024simpo} that removes the reference model from DPO and adds length-normalization and a margin. We implement POWER-DL by combining the objective in \eqref{eq:POWER_objective} with dynamic labels \eqref{eq:dynamic_labels}, where we estimate the empirical preferences with $\hat \mu_{1 \succ 0} = l$. We also compare against POWER, which corresponds to $\gamma = 0$, removing the dynamic labels.

\begin{table}[h!]
\centering
\captionsetup{justification=centering} 
\caption{Different preference optimization objectives and hyperparameter search range.}
\label{tab:PO_hypers}
\resizebox{\textwidth}{!}{
\setlength{\tabcolsep}{1pt} 
\renewcommand{\arraystretch}{1.5}
\begin{tabular}{>{\arraybackslash}m{2.5cm} >{\arraybackslash}m{9cm} >{\arraybackslash}m{5.5cm}}
\toprule
\textbf{Method} & \textbf{Objective (Min)} & \textbf{Hyperparameter Range} \\
\midrule
\addlinespace[0.1em]
DPO  & $-\log \sigma \left(\beta \log \frac{\pi_{\theta}(y^+ | x)}{\pi_{\theta_0}(y^+ | x)} - \beta \log \frac{\pi_{\theta}(y^- | x)}{\pi_{\theta_0}(y^- | x)}\right)$ & $\beta \in \{0.001, 0.005, 0.01, 0.05, 0.1\}$ \\
\addlinespace[0.1em]
\midrule
\addlinespace[0.1em]
\multirow{2}{*}{DPO+SFT} & \multirow{2}{*}{$-\log \sigma \left(\beta \log \frac{\pi_{\theta}(y^+ | x)}{\pi_{\theta_0}(y^+ | x)} - \beta \log \frac{\pi_{\theta}(y^- | x)}{\pi_{\theta_0}(y^- | x)}\right) - \lambda \log \pi_{\theta}(y^+ | x) $} & $\beta \in \{0.001, 0.005, 0.01, 0.05, 0.1\}$ \\
& & $\lambda \in \{0.0005, 0.001, 0.01, 0.1, 1\}$ \\
\addlinespace[0.1em]
\midrule
\addlinespace[0.1em]
cDPO & $ - (1-c) \log \sigma \left(\beta \log \frac{\pi_{\theta}(y^+ | x)}{\pi_{\theta_0}(y^+ | x)} - \beta \log \frac{\pi_{\theta}(y^- | x)}{\pi_{\theta_0}(y^- | x)}\right)$ & $\beta \in \{0.001, 0.005, 0.01, 0.05, 0.1\}$ \\ 
& $ \quad - c \log \sigma \left(\beta \log \frac{\pi_{\theta}(y^- | x)}{\pi_{\theta_0}(y^- | x)} - \beta \log \frac{\pi_{\theta}(y^+ | x)}{\pi_{\theta_0}(y^+ | x)}\right)$ & $c \in \{0.05, 0.1, 0.15, 0.2, 0.3\}$ \\
\addlinespace[0.1em]
\midrule
\addlinespace[0.1em]
\multirow{2}{*}{R-DPO }  & \multirow{2}{*}{$-\log \sigma \left(\beta \log \frac{\pi_{\theta}(y^+ | x)}{\pi_{\theta_0}(y^+ | x)} - \beta \log \frac{\pi_{\theta}(y^- | x)}{\pi_{\theta_0}(y^- | x)} - (\alpha |y^+| - \alpha |y^-|)\right)$} & $\alpha \in \{0.005, 0.01, 0.05, 0.1, 0.5\}$ \\ 
& & $\beta \in \{0.001, 0.005, 0.01, 0.05, 0.1\}$ \\
\addlinespace[0.1em]
\midrule
\addlinespace[0.1em]
IPO    & $\left(\log \frac{\pi_{\theta}(y^+ | x)}{\pi_{\theta_0}(y^+ | x)} - \log \frac{\pi_{\theta}(y^- | x)}{\pi_{\theta_0}(y^- | x)} - \frac{1}{2\tau}\right)^2$ & $\tau \in \{0.001, 0.005, 0.01, 0.1, 1.0\}$ \\
\addlinespace[0.1em]
\midrule
\addlinespace[0.1em]
$\chi$PO  & $-\log \left( \sigma \left( \mathsf{clip}_{2R} \left[ \beta \phi\left( \frac{\pi_{\theta}(y^+ | x)}{\pi_{\theta_0}(y^+ | x)}\right) - \beta \phi \left( \frac{\pi_{\theta}(y^- | x)}{\pi_{\theta_0}(y^- | x)}\right) \right] \right) \right) $ & $\beta \in \{0.001, 0.01, 0.1\}$ \\
 & \quad $\phi(z)  \coloneqq z + \log(z)$ & $R \in \{0.1, 0.5, 1, 5, 10 \}$\\
\addlinespace[0.1em]
\midrule
\addlinespace[0.1em]
SPPO (offline)  & $\; \left(\beta \log \frac{\pi_\theta(y^+|x)}{\pi_{\theta_0}(y^+|x)} - \frac{1}{2} \right)^2 + \left(\beta \log \frac{\pi_\theta(y^-|x)}{\pi_{\theta_0}(y^-|x)} + \frac{1}{2} \right)^2 $ & $\beta \in \{0.1, 1, 10, 100, 1000, 10000\}$\\
\addlinespace[0.1em]
\midrule
\addlinespace[0.1em]
\multirow{2}{*}{CPO} & \multirow{2}{*}{$-\log \sigma \left(\beta \log \pi_{\theta}(y^+ | x) - \beta \log \pi_{\theta}(y^- | x)\right) - \lambda \log \pi_{\theta}(y^+ | x)$} & $\lambda = 1.0$ \\ 
& & $\beta \in \{0.001, 0.01, 0.1, 1, 10\}$ \\
\addlinespace[0.1em]
\midrule
\addlinespace[0.1em]
RRHF & $\max \left\{0, -\frac{1}{|y^+|} \log \pi_{\theta}(y^+ | x) + \frac{1}{|y^-|} \log \pi_{\theta}(y^- | x)\right\}$ \newline $- \lambda \log \pi_{\theta}(y^+ | x)$ & $\lambda \in \{0.01, 0.05, 0.1, 0.5, 1, 10\}$ \\
\addlinespace[0.1em]
\midrule
\addlinespace[0.1em]
\multirow{2}{*}{SLiC-HF} & \multirow{2}{*}{$\max \left(0, \beta - \log \pi_{\theta}(y^+ | x) + \log \pi_{\theta}(y^- | x)\right) - \lambda \log \pi_{\theta}(y^+ | x)$} & $\lambda \in \{0.01, 0.05, 0.1, 0.5, 1, 10\}$ \\
& & $\beta \in \{0.1, 0.5, 1.0, 2.0\}$ \\
\addlinespace[0.1em]
\midrule
\addlinespace[0.1em]
\multirow{2}{*}{ORPO} & $-\log p_\theta(y^+ | x) - \lambda \log \sigma \left(\log \frac{p_\theta(y^- | x)}{1-p_\theta(y^- | x)} - \log \frac{p_\theta(y^+ | x)}{1-p_\theta(y^+ | x)}\right)$ & \multirow{2}{*}{$\lambda \in \{0.1, 0.5, 1.0, 2.0\}$} \\
 & $\quad p_\theta(y | x) \coloneqq \exp\left(\frac{1}{|y^-|} \log \pi_{\theta}(y^- | x)\right)$ & \\
\addlinespace[0.1em]
\midrule
\addlinespace[0.1em]
SimPO & $-\log \left(\frac{\beta}{|y^-|} \log \pi_{\theta}(y^+ | x) - \frac{\beta}{|y^-|} \log \pi_{\theta}(y^- | x) - \gamma\right)$ & \parbox[t]{4cm}{$\beta \in \{1, 2, 10, 20\}$ \\ $\gamma \in \{0.3,0.5,0.8,1.0\}$} \\
\addlinespace[0.1em]
\midrule
\addlinespace[0.1em]
ROPO & $-\alpha \log \sigma \left(\beta \log \frac{\pi_{\theta}(y^+ | x)}{\pi_{\theta_0}(y^+ | x)} - \beta \log \frac{\pi_{\theta}(y^- | x)}{\pi_{\theta_0}(y^- | x)}\right)$ & $\beta \in \{0.001, 0.005, 0.01, 0.05, 0.1\}$ \\
& $\quad + \gamma \sigma \left( \beta \log \frac{\pi_{\theta}(y^- | x)}{\pi_{\theta_0}(y^- | x)} - \beta \log \frac{\pi_{\theta}(y^+ | x)}{\pi_{\theta_0}(y^+ | x)}\right)$ & $\gamma = 0.1, \alpha \in \{0.2, 2, 20, 200, 2000 \}$ \\
\addlinespace[0.1em]
\bottomrule
\end{tabular}%
}
\end{table}

\subsection{Instruction-Following Benchmarks}

\textbf{Benchmark details.} We select the default choices in benchmark as baselines. In particular, for AlpacaEval 2.0 benchmark, we use GPT-4-Preview-1106 as comparison baseline model, and GPT-4 as the judge model. AlpacaEval 2.0 then compares responses generated by our PO trained models with responses from the baseline model, and length-controlled (LC) and raw winrate (WR) are computed as metrics. For Arena Hard benchmark, we use the GPT-4-0314 as the baseline model, and GPT-4 as the judge model, and the reported metric is winrate againts the baseline model.

\textbf{Decoding hyperparameters.} We follow \citet{meng2024simpo} and use a sampling decoding strategy with a temperature 0.9 on AlpacaEval 2.0 for all methods. For Arena Hard, we use the default approach of greedy generation.

\subsection{Training and Hyperparameter Details}\label{app:hyperparams}

\textbf{Hyperparameters for training reference models in the base setting.} We train initial reference models in the base setups. In the Helpsteer2 setting \cite{wang2024helpsteer2}, we train a model through supervised instruction finetuning on the (English only) \href{https://huggingface.co/datasets/OpenAssistant/oasst2}{OpenAssistant2 dataset} \cite{kopf2024openassistant}. In the Zephyr setting \cite{tunstall2023zephyr}, we conduct supervised finetuning on the \href{https://huggingface.co/datasets/HuggingFaceH4/ultrachat_200k}{UltraChat-200K dataset} \cite{ding2023enhancing}. We use the following hyperparameters for both cases: a train batch size of 256, learning rate of 2e-5 with a cosine learning rate schedule with 10\% warmup, right padding, and a max sequence length of 2048. We train the models with Adam optimizer for 1 epoch. 

\textbf{General hyperparameters for preference optimization.} We use a fixed batch size of 128 and a maximum sequence length of 2048 for all methods. For learning rate, we search over \{3e-7,5e-7\} separately for each method and use a cosine learning rate schedule with a 10\% warmup. We use Adam optimizer for all the approaches. In the Helpsteer2 setting and following \citet{wang2024helpsteer2}, we train the models for up to 7 epochs, and in the Zephyr setting, we train the models for up to 3 epochs, and select the best number of epochs for each method according to validation. We use right padding for preference optimization following the recommendation of \citet{hu2024openrlhf}. 

\textbf{Specific hyperparameters for preference optimization.} For the hyperparameters specific to each preference optimization objective, we conduct hyperparameter search according to the values in Table \ref{tab:PO_hypers}. In each setting, we select the best model according to the ranking performance on the validation set. For POWER-DL, we conduct hyperparameter search over $\beta \in \{1, 2, 10, 20\}$, $\eta \in \{0.0005, 0.001\}$, and $\gamma \in \{0.1, 0.3\}$. For Helpsteer2, we select $\beta = 10, \eta = 0.001, \gamma = 0.1$, 5 epochs, and learning rate of 5e-7 in the base setting and $\beta = 20, \eta = 0.0005, \gamma = 0.1$, 4 epochs and learning rate of 3e-7 in the instruct setting. For Zephyr, we select $\beta =1, \eta = 0.001, \gamma = 0.1$, 2 epochs, and learning rate of 3e-7 in the base setting, and $\beta = 2, \eta = 0.0005, \gamma = 0.3$, 2 epochs, and learning rate of 3e-7 in the instruct setting.

\textbf{Computation environment.} All experiments are conducted on 8$\times$A100 GPUs based on the OpenRLHF repository \cite{hu2024openrlhf}. 

\newpage

\section{Additional Experimental Results on the Llama Family}\label{app:additional_experiments}

\subsection{Performance on Academic Benchmarks}\label{app:llm_leaderboard_performance}

Tables \ref{tab:tasks_helpsteer2} and \ref{tab:tasks_zephyr} present the benchmark scores for the Helpsteer2 and Zephyr pipelines across a variety of downstream tasks. Considering the average benchmark score, POWER-DL consistently improves over the initial model and ranks within the top two or three methods across all four settings, despite significantly outperforming other methods in alignment benchmarks AlpacaEval 2.0 and Arena-Hard as detailed in Table \ref{tab:consolidated_benchmarks_models}. Achieving a high score on instruction-following benchmarks AlpacaEval 2.0 and Arena-Hard while maintaining a good performance on downstream tasks is considered key empirical evidence for mitigating reward hacking in practice \cite{xu2024perfect}.

\begin{table}[h!]
\centering
\caption{Downstream task evaluation results of the models trained on the Helpsteer2 pipeline. The arrows show improvement or degradation of performance with respect to the initial model. The top three average scores are shown in bold.}
\label{tab:tasks_helpsteer2}
\resizebox{\textwidth}{!}{%
\setlength{\tabcolsep}{3pt}
\begin{tabular}{lccccccccc}
\toprule
{Task} & {MMLU} & {ARC} & {HellaSwag} & {TruthfulQA} & {Winogrande} & {GSM8K} & {IFEval} & {MMLU-PRO} & {Average} \\
\midrule 
\multicolumn{10}{c}{\textbf{Helpsteer2 Llama3-8B-Base}} \\
\midrule 
Initial Model & 62.6 \textcolor{white}{\scriptsize$\downarrow$0.0}& 58.4 \textcolor{white}{\scriptsize$\downarrow$0.0}& 80.0 \textcolor{white}{\scriptsize$\downarrow$0.0}& 49.5 \textcolor{white}{\scriptsize$\downarrow$0.0}& 77.4 \textcolor{white}{\scriptsize$\downarrow$0.0}& 37.1 \textcolor{white}{\scriptsize$\downarrow$0.0}& 33.6 \textcolor{white}{\scriptsize$\downarrow$0.0}& 29.8 \textcolor{white}{\scriptsize$\downarrow$0.0} & 53.5 \textcolor{white}{\scriptsize$\downarrow$0.0}\\
\midrule 
DPO & 61.2 \textcolor{OrangeRed}{\scriptsize$\downarrow$1.4} & 57.5 \textcolor{OrangeRed}{\scriptsize$\downarrow$0.9} & 81.1 \textcolor{ForestGreen}{\scriptsize $\uparrow$1.1} & 51.0 \textcolor{ForestGreen}{\scriptsize$\uparrow$1.5} & 75.6 \textcolor{OrangeRed}{\scriptsize$\downarrow$1.8} & 32.5 \textcolor{OrangeRed}{\scriptsize$\downarrow$4.6} & 41.1 \textcolor{ForestGreen}{\scriptsize$\uparrow$7.5} & 29.8 \textcolor{white}{\scriptsize$\downarrow$0.0} & 53.7 \textcolor{ForestGreen}{\scriptsize$\uparrow$0.2} \\
DPO+SFT & 62.1 \textcolor{OrangeRed}{\scriptsize$\downarrow$0.5} & 60.5 \textcolor{ForestGreen}{\scriptsize$\uparrow$2.1} & 81.9 \textcolor{ForestGreen}{\scriptsize$\uparrow$1.9} & 52.9 \textcolor{ForestGreen}{\scriptsize$\uparrow$3.4} & 77.0 \textcolor{OrangeRed}{\scriptsize$\downarrow$0.4} & 42.1 \textcolor{ForestGreen}{\scriptsize$\uparrow$5.0} & 42.5 \textcolor{ForestGreen}{\scriptsize$\uparrow$8.9} & 30.5 \textcolor{ForestGreen}{\scriptsize$\uparrow$0.7} & \textbf{56.2 \textcolor{ForestGreen}{\scriptsize$\uparrow$2.7}} \\
cDPO & 62.0 \textcolor{OrangeRed}{\scriptsize$\downarrow$0.6} & 60.9 \textcolor{ForestGreen}{\scriptsize$\uparrow$2.5} & 82.9 \textcolor{ForestGreen}{\scriptsize$\uparrow$2.9} & 53.9 \textcolor{ForestGreen}{\scriptsize$\uparrow$4.4} & 76.6 \textcolor{OrangeRed}{\scriptsize$\downarrow$0.8} & 37.9 \textcolor{ForestGreen}{\scriptsize$\uparrow$0.8} & 42.6 \textcolor{ForestGreen}{\scriptsize$\uparrow$9.0} & 30.5 \textcolor{ForestGreen}{\scriptsize$\uparrow$0.7} & 55.9 \textcolor{ForestGreen}{\scriptsize$\uparrow$2.4} \\
R-DPO & 62.8 \textcolor{ForestGreen}{\scriptsize$\uparrow$0.2} & 58.2 \textcolor{OrangeRed}{\scriptsize$\downarrow$0.2} & 80.3 \textcolor{ForestGreen}{\scriptsize$\uparrow$0.3} & 52.9 \textcolor{ForestGreen}{\scriptsize$\uparrow$3.4} & 77.4 \textcolor{white}{\scriptsize$\downarrow$0.0}& 43.2 \textcolor{ForestGreen}{\scriptsize$\uparrow$6.1} & 38.7 \textcolor{ForestGreen}{\scriptsize$\uparrow$5.1} & 30.1 \textcolor{ForestGreen}{\scriptsize$\uparrow$0.3} & 55.5 \textcolor{ForestGreen}{\scriptsize$\uparrow$2.0} \\
IPO & 62.7 \textcolor{ForestGreen}{\scriptsize$\uparrow$0.1} & 60.6 \textcolor{ForestGreen}{\scriptsize$\uparrow$2.2} & 81.7 \textcolor{ForestGreen}{\scriptsize$\uparrow$1.7} & 52.5 \textcolor{ForestGreen}{\scriptsize$\uparrow$3.0} & 78.1 \textcolor{ForestGreen}{\scriptsize$\uparrow$0.7} & 43.8 \textcolor{ForestGreen}{\scriptsize$\uparrow$6.7} & 42.3 \textcolor{ForestGreen}{\scriptsize$\uparrow$8.7} & 30.3 \textcolor{ForestGreen}{\scriptsize$\uparrow$0.5} & \textbf{56.5 \textcolor{ForestGreen}{\scriptsize$\uparrow$3.0}} \\
$\chi$PO & 62.9 \textcolor{ForestGreen}{\scriptsize$\uparrow$0.3} & 59.0 \textcolor{ForestGreen}{\scriptsize$\uparrow$0.6} & 81.0 \textcolor{ForestGreen}{\scriptsize$\uparrow$1.0} & 51.7 \textcolor{ForestGreen}{\scriptsize$\uparrow$2.2} & 78.1 \textcolor{ForestGreen}{\scriptsize$\uparrow$0.7} & 41.7 \textcolor{ForestGreen}{\scriptsize$\uparrow$4.6} & 38.0 \textcolor{ForestGreen}{\scriptsize$\uparrow$4.4} & 30.1 \textcolor{ForestGreen}{\scriptsize$\uparrow$0.3} & 55.3 \textcolor{ForestGreen}{\scriptsize$\uparrow$1.8} \\
SPPO & 62.8 \textcolor{ForestGreen}{\scriptsize$\uparrow$0.2} & 60.9 \textcolor{ForestGreen}{\scriptsize$\uparrow$2.5} & 83.0 \textcolor{ForestGreen}{\scriptsize$\uparrow$3.0} & 54.2 \textcolor{ForestGreen}{\scriptsize$\uparrow$4.7} & 77.1 \textcolor{OrangeRed}{\scriptsize$\downarrow$0.3} & 38.8 \textcolor{ForestGreen}{\scriptsize$\uparrow$1.7} & 42.7 \textcolor{ForestGreen}{\scriptsize$\uparrow$9.1} & 30.6 \textcolor{ForestGreen}{\scriptsize$\uparrow$0.8} & \textbf{56.3 \textcolor{ForestGreen}{\scriptsize$\uparrow$2.8}} \\
CPO & 62.6 \textcolor{white}{\scriptsize$\downarrow$0.0}& 59.0 \textcolor{ForestGreen}{\scriptsize$\uparrow$0.6} & 80.2 \textcolor{ForestGreen}{\scriptsize$\uparrow$0.2} & 54.2 \textcolor{ForestGreen}{\scriptsize$\uparrow$4.7} & 77.0 \textcolor{OrangeRed}{\scriptsize$\downarrow$0.4} & 44.2 \textcolor{ForestGreen}{\scriptsize$\uparrow$7.1} & 41.4 \textcolor{ForestGreen}{\scriptsize$\uparrow$7.8} & 30.4 \textcolor{ForestGreen}{\scriptsize$\uparrow$0.6} & 56.1 \textcolor{ForestGreen}{\scriptsize$\uparrow$2.6} \\
RRHF & 62.6 \textcolor{white}{\scriptsize$\downarrow$0.0}& 57.7 \textcolor{OrangeRed}{\scriptsize$\downarrow$0.7} & 79.0 \textcolor{OrangeRed}{\scriptsize$\downarrow$1.0} & 51.1 \textcolor{ForestGreen}{\scriptsize$\uparrow$1.6} & 77.2 \textcolor{OrangeRed}{\scriptsize$\downarrow$0.2} & 37.2 \textcolor{ForestGreen}{\scriptsize$\uparrow$0.1} & 34.2 \textcolor{ForestGreen}{\scriptsize$\uparrow$0.6} & 29.8 \textcolor{white}{\scriptsize$\downarrow$0.0} & 53.6 \textcolor{OrangeRed}{\scriptsize$\downarrow$0.1} \\
SLiCHF & 62.6 \textcolor{white}{\scriptsize$\downarrow$0.0}& 59.0 \textcolor{ForestGreen}{\scriptsize$\uparrow$0.6} & 79.9 \textcolor{OrangeRed}{\scriptsize$\downarrow$0.1} & 55.0 \textcolor{ForestGreen}{\scriptsize$\uparrow$5.5} & 76.8 \textcolor{OrangeRed}{\scriptsize$\downarrow$0.6} & 43.7 \textcolor{ForestGreen}{\scriptsize$\uparrow$6.6} & 40.1 \textcolor{ForestGreen}{\scriptsize$\uparrow$6.5} & 30.2 \textcolor{ForestGreen}{\scriptsize$\uparrow$0.4} & 55.9 \textcolor{ForestGreen}{\scriptsize$\uparrow$2.4} \\
ORPO & 61.5 \textcolor{OrangeRed}{\scriptsize$\downarrow$1.1} & 57.6 \textcolor{OrangeRed}{\scriptsize$\downarrow$0.8} & 79.0 \textcolor{OrangeRed}{\scriptsize$\downarrow$1.0} & 61.4 \textcolor{ForestGreen}{\scriptsize$\uparrow$11.9} & 77.7 \textcolor{ForestGreen}{\scriptsize$\uparrow$0.3} & 15.6 \textcolor{OrangeRed}{\scriptsize$\downarrow$21.5} & 40.4 \textcolor{ForestGreen}{\scriptsize$\uparrow$6.8} & 29.6 \textcolor{OrangeRed}{\scriptsize $\downarrow$0.2} & 52.9 \textcolor{OrangeRed}{\scriptsize$\downarrow$0.7} \\
SimPO & 61.3 \textcolor{OrangeRed}{\scriptsize$\downarrow$1.3} & 59.0 \textcolor{ForestGreen}{\scriptsize$\uparrow$0.6} & 80.6 \textcolor{ForestGreen}{\scriptsize$\uparrow$0.6} & 59.6 \textcolor{ForestGreen}{\scriptsize$\uparrow$10.1} & 77.7 \textcolor{ForestGreen}{\scriptsize$\uparrow$0.3} & 23.4 \textcolor{OrangeRed}{\scriptsize$\downarrow$13.7} & 40.5 \textcolor{ForestGreen}{\scriptsize$\uparrow$6.9} & 30.2 \textcolor{ForestGreen}{\scriptsize$\uparrow$0.4} & 54.0 \textcolor{ForestGreen}{\scriptsize$\uparrow$0.5} \\
ROPO & 61.5 \textcolor{OrangeRed}{\scriptsize$\downarrow$1.1} & 60.9 \textcolor{ForestGreen}{\scriptsize$\uparrow$2.5} & 82.1 \textcolor{ForestGreen}{\scriptsize$\uparrow$2.1} & 52.5 \textcolor{ForestGreen}{\scriptsize$\uparrow$3.0} & 76.6 \textcolor{OrangeRed}{\scriptsize$\downarrow$0.8} & 37.5 \textcolor{ForestGreen}{\scriptsize$\uparrow$0.4} & 41.4 \textcolor{ForestGreen}{\scriptsize$\uparrow$7.8} & 30.1 \textcolor{ForestGreen}{\scriptsize$\uparrow$0.3} & 55.3 \textcolor{ForestGreen}{\scriptsize$\uparrow$1.8} \\
\midrule
POWER-DL & 62.0 \textcolor{OrangeRed}{\scriptsize$\downarrow$0.6} & 59.6 \textcolor{ForestGreen}{\scriptsize$\uparrow$1.2} & 82.0 \textcolor{ForestGreen}{\scriptsize$\uparrow$2.0} & 61.0 \textcolor{ForestGreen}{\scriptsize$\uparrow$11.5} & 78.1 \textcolor{ForestGreen}{\scriptsize$\uparrow$0.7} & 36.5 \textcolor{OrangeRed}{\scriptsize$\downarrow$0.6} & 40.3 \textcolor{ForestGreen}{\scriptsize$\uparrow$6.7} & 30.5 \textcolor{ForestGreen}{\scriptsize$\uparrow$0.7} & \textbf{56.3 \textcolor{ForestGreen}{\scriptsize$\uparrow$2.8}} \\
POWER & 61.9 \textcolor{OrangeRed}{\scriptsize$\downarrow$0.7} & 60 \textcolor{ForestGreen}{\scriptsize$\uparrow$1.6} & 82.0 \textcolor{ForestGreen}{\scriptsize$\uparrow$2.0} & 61.0 \textcolor{ForestGreen}{\scriptsize$\uparrow$11.5} & 77.9 \textcolor{ForestGreen}{\scriptsize$\uparrow$0.5} & 35.3 \textcolor{OrangeRed}{\scriptsize$\downarrow$1.8} & 40.1 \textcolor{ForestGreen}{\scriptsize$\uparrow$6.5} & 30.4 \textcolor{ForestGreen}{\scriptsize$\uparrow$0.6} & 56.1 \textcolor{ForestGreen}{\scriptsize$\uparrow$2.6} \\
\midrule 
\multicolumn{10}{c}{\textbf{Helpsteer2 Llama3-8B-Instruct}} \\
\midrule 
Initial Model & 65.7 \textcolor{white}{\scriptsize$\downarrow$0.0} & 62.0 \textcolor{white}{\scriptsize$\downarrow$0.0} & 78.8 \textcolor{white}{\scriptsize$\downarrow$0.0} & 51.7 \textcolor{white}{\scriptsize$\downarrow$0.0} & 76.0 \textcolor{white}{\scriptsize$\downarrow$0.0}& 75.3 \textcolor{white}{\scriptsize$\downarrow$0.0}& 54.4 \textcolor{white}{\scriptsize$\downarrow$0.0}& 36.0 \textcolor{white}{\scriptsize$\downarrow$0.0} & 62.5 \textcolor{white}{\scriptsize$\downarrow$0.0}\\
\midrule 
DPO & 65.9 \textcolor{ForestGreen}{\scriptsize $\uparrow$0.2} & 63.4 \textcolor{ForestGreen}{\scriptsize $\uparrow$1.4} & 79.5 \textcolor{ForestGreen}{\scriptsize $\uparrow$0.7} & 52.7 \textcolor{ForestGreen}{\scriptsize $\uparrow$1.0} & 75.9 \textcolor{OrangeRed}{\scriptsize $\downarrow$0.1} & 76.4 \textcolor{ForestGreen}{\scriptsize $\uparrow$1.1} & 54.1 \textcolor{OrangeRed}{\scriptsize $\downarrow$0.3} & 36.3 \textcolor{ForestGreen}{\scriptsize $\uparrow$0.3} & 63.0 \textcolor{ForestGreen}{\scriptsize $\uparrow$0.5} \\
DPO+SFT & 65.9 \textcolor{ForestGreen}{\scriptsize $\uparrow$0.2} & 61.0 \textcolor{OrangeRed}{\scriptsize $\downarrow$1.0} & 73.6 \textcolor{OrangeRed}{\scriptsize $\downarrow$5.2} & 54.7 \textcolor{ForestGreen}{\scriptsize $\uparrow$3.0} & 71.2 \textcolor{OrangeRed}{\scriptsize $\downarrow$4.8} & 74.7 \textcolor{OrangeRed}{\scriptsize $\downarrow$0.6} & 48.2 \textcolor{OrangeRed}{\scriptsize $\downarrow$6.2} & 37.0 \textcolor{ForestGreen}{\scriptsize $\uparrow$1.0} & 60.8 \textcolor{OrangeRed}{\scriptsize $\downarrow$1.7} \\
cDPO & 66.0 \textcolor{ForestGreen}{\scriptsize $\uparrow$0.3} & 65.4 \textcolor{ForestGreen}{\scriptsize $\uparrow$3.4} & 79.5 \textcolor{ForestGreen}{\scriptsize $\uparrow$0.7} & 57.0 \textcolor{ForestGreen}{\scriptsize $\uparrow$5.3} & 74.6 \textcolor{OrangeRed}{\scriptsize $\downarrow$1.4} & 76.9 \textcolor{ForestGreen}{\scriptsize $\uparrow$1.6} & 49.9 \textcolor{OrangeRed}{\scriptsize $\downarrow$4.5} & 36.9 \textcolor{ForestGreen}{\scriptsize $\uparrow$0.9} & \textbf{63.3 \textcolor{ForestGreen}{\scriptsize $\uparrow$0.8}} \\
R-DPO & 65.8 \textcolor{ForestGreen}{\scriptsize $\uparrow$0.1} & 62.8 \textcolor{ForestGreen}{\scriptsize $\uparrow$0.8} & 74.8 \textcolor{OrangeRed}{\scriptsize $\downarrow$4.0} & 54.6 \textcolor{ForestGreen}{\scriptsize $\uparrow$2.9} & 72.6 \textcolor{OrangeRed}{\scriptsize $\downarrow$3.4} & 77.7 \textcolor{ForestGreen}{\scriptsize $\uparrow$2.4} & 51.0 \textcolor{OrangeRed}{\scriptsize $\downarrow$3.4} & 36.6 \textcolor{ForestGreen}{\scriptsize $\uparrow$0.6} & 62.0 \textcolor{OrangeRed}{\scriptsize $\downarrow$0.5} \\
IPO & 66.0 \textcolor{ForestGreen}{\scriptsize $\uparrow$0.3} & 64.9 \textcolor{ForestGreen}{\scriptsize $\uparrow$2.9} & 79.2 \textcolor{ForestGreen}{\scriptsize $\uparrow$0.4} & 58.4 \textcolor{ForestGreen}{\scriptsize $\uparrow$6.7} & 73.8 \textcolor{OrangeRed}{\scriptsize $\downarrow$2.2} & 75.8 \textcolor{ForestGreen}{\scriptsize $\uparrow$0.5} & 49.8 \textcolor{OrangeRed}{\scriptsize $\downarrow$4.6} & 37.1 \textcolor{ForestGreen}{\scriptsize $\uparrow$1.1} & \textbf{63.1 \textcolor{ForestGreen}{\scriptsize $\uparrow$0.6}} \\
$\chi$PO & 65.8 \textcolor{ForestGreen}{\scriptsize $\uparrow$0.1} & 63.7 \textcolor{ForestGreen}{\scriptsize $\uparrow$1.7} & 75.6 \textcolor{OrangeRed}{\scriptsize $\downarrow$3.2} & 59.1 \textcolor{ForestGreen}{\scriptsize $\uparrow$7.4} & 72.4 \textcolor{OrangeRed}{\scriptsize $\downarrow$3.6} & 75.2 \textcolor{OrangeRed}{\scriptsize $\downarrow$0.1} & 52.4 \textcolor{OrangeRed}{\scriptsize $\downarrow$2.0} & 37.2 \textcolor{ForestGreen}{\scriptsize $\uparrow$1.2} & 62.7 \textcolor{ForestGreen}{\scriptsize $\uparrow$0.2} \\
SPPO & 66.0 \textcolor{ForestGreen}{\scriptsize $\uparrow$0.3} & 63.0 \textcolor{ForestGreen}{\scriptsize $\uparrow$1.0} & 76.7 \textcolor{OrangeRed}{\scriptsize $\downarrow$2.1} & 55.7 \textcolor{ForestGreen}{\scriptsize $\uparrow$4.0} & 72.9 \textcolor{OrangeRed}{\scriptsize $\downarrow$3.1} & 75.7 \textcolor{ForestGreen}{\scriptsize $\uparrow$0.4} & 49.6 \textcolor{OrangeRed}{\scriptsize $\downarrow$4.8} & 37.1 \textcolor{ForestGreen}{\scriptsize $\uparrow$1.1} & 62.1 \textcolor{OrangeRed}{\scriptsize $\downarrow$0.4} \\
CPO & 65.7 \textcolor{white}{\scriptsize$\downarrow$0.0}& 62.2 \textcolor{ForestGreen}{\scriptsize $\uparrow$0.2} & 78.0 \textcolor{OrangeRed}{\scriptsize $\downarrow$0.8} & 52.2 \textcolor{ForestGreen}{\scriptsize $\uparrow$0.5} & 74.1 \textcolor{OrangeRed}{\scriptsize $\downarrow$1.9} & 75.7 \textcolor{ForestGreen}{\scriptsize $\uparrow$0.4} & 49.9 \textcolor{OrangeRed}{\scriptsize $\downarrow$4.5} & 36.0 \textcolor{white}{\scriptsize$\downarrow$0.0} & 61.9 \textcolor{OrangeRed}{\scriptsize $\downarrow$0.6} \\
RRHF & 65.9 \textcolor{ForestGreen}{\scriptsize $\uparrow$0.2} & 62.0 \textcolor{white}{\scriptsize$\downarrow$0.0} & 77.5 \textcolor{OrangeRed}{\scriptsize $\downarrow$1.3} & 51.5 \textcolor{OrangeRed}{\scriptsize $\downarrow$0.2} & 73.8 \textcolor{OrangeRed}{\scriptsize $\downarrow$2.2} & 76.7 \textcolor{ForestGreen}{\scriptsize $\uparrow$1.4} & 51.2 \textcolor{OrangeRed}{\scriptsize $\downarrow$3.2} & 36.4 \textcolor{ForestGreen}{\scriptsize $\uparrow$0.4} & 61.9 \textcolor{OrangeRed}{\scriptsize $\downarrow$0.6} \\
SLiCHF & 65.7 \textcolor{white}{\scriptsize$\downarrow$0.0}& 62.9 \textcolor{ForestGreen}{\scriptsize $\uparrow$0.9} & 78.0 \textcolor{OrangeRed}{\scriptsize $\downarrow$0.8} & 53.9 \textcolor{ForestGreen}{\scriptsize $\uparrow$2.2} & 74.2 \textcolor{OrangeRed}{\scriptsize $\downarrow$1.8} & 76.7 \textcolor{ForestGreen}{\scriptsize $\uparrow$1.4} & 47.6 \textcolor{OrangeRed}{\scriptsize $\downarrow$6.8} & 36.1 \textcolor{ForestGreen}{\scriptsize $\uparrow$0.1} & 62.3 \textcolor{OrangeRed}{\scriptsize $\downarrow$0.2} \\
ORPO & 65.7 \textcolor{white}{\scriptsize$\downarrow$0.0} & 62.1 \textcolor{ForestGreen}{\scriptsize $\uparrow$0.1} & 74.0 \textcolor{OrangeRed}{\scriptsize $\downarrow$4.8} & 56.7 \textcolor{ForestGreen}{\scriptsize $\uparrow$5.0} & 71.5 \textcolor{OrangeRed}{\scriptsize $\downarrow$4.5} & 75.9 \textcolor{ForestGreen}{\scriptsize $\uparrow$0.6} & 51.3 \textcolor{OrangeRed}{\scriptsize $\downarrow$3.1} & 37.2 \textcolor{ForestGreen}{\scriptsize $\uparrow$1.2} & 61.9 \textcolor{OrangeRed}{\scriptsize $\downarrow$0.6} \\
SimPO & 65.8 \textcolor{ForestGreen}{\scriptsize $\uparrow$0.1} & 61.9 \textcolor{OrangeRed}{\scriptsize $\downarrow$0.1} & 75.0 \textcolor{OrangeRed}{\scriptsize $\downarrow$3.8} & 58.3 \textcolor{ForestGreen}{\scriptsize $\uparrow$6.6} & 72.3 \textcolor{OrangeRed}{\scriptsize $\downarrow$3.7} & 74.3 \textcolor{OrangeRed}{\scriptsize $\downarrow$1.0} & 54.1 \textcolor{OrangeRed}{\scriptsize $\downarrow$0.3} & 37.2 \textcolor{ForestGreen}{\scriptsize $\uparrow$1.2} & 62.3 \textcolor{OrangeRed}{\scriptsize $\downarrow$0.2} \\
ROPO & 65.4 \textcolor{OrangeRed}{\scriptsize $\downarrow$0.3} & 62.0 \textcolor{white}{\scriptsize$\downarrow$0.0} & 76.9 \textcolor{OrangeRed}{\scriptsize $\downarrow$1.9} & 54.1 \textcolor{ForestGreen}{\scriptsize $\uparrow$2.4} & 73.7 \textcolor{OrangeRed}{\scriptsize $\downarrow$2.3} & 73.9 \textcolor{OrangeRed}{\scriptsize $\downarrow$1.4} & 50.2 \textcolor{OrangeRed}{\scriptsize $\downarrow$4.2} & 36.2 \textcolor{ForestGreen}{\scriptsize $\uparrow$0.2} & 61.8 \textcolor{OrangeRed}{\scriptsize $\downarrow$0.7} \\
\midrule
POWER-DL & 66.0 \textcolor{ForestGreen}{\scriptsize $\uparrow$0.3} & 64.3 \textcolor{ForestGreen}{\scriptsize $\uparrow$2.3} & 79.5 \textcolor{ForestGreen}{\scriptsize $\uparrow$0.7} & 53.1 \textcolor{ForestGreen}{\scriptsize $\uparrow$1.4} & 76.0 \textcolor{white}{\scriptsize$\downarrow$0.0}& 76.3 \textcolor{ForestGreen}{\scriptsize $\uparrow$1.0} & 53.5 \textcolor{OrangeRed}{\scriptsize $\downarrow$0.9} & 36.6 \textcolor{ForestGreen}{\scriptsize $\uparrow$0.6} & \textbf{63.2 \textcolor{ForestGreen}{\scriptsize $\uparrow$0.7}} \\
POWER & 65.8 \textcolor{ForestGreen}{\scriptsize $\uparrow$0.1} & 63.9 \textcolor{ForestGreen}{\scriptsize $\uparrow$1.9} & 79.6 \textcolor{ForestGreen}{\scriptsize $\uparrow$0.8} & 53.1 \textcolor{ForestGreen}{\scriptsize $\uparrow$1.4} & 76.1 \textcolor{ForestGreen}{\scriptsize $\uparrow$0.1} & 76.6 \textcolor{ForestGreen}{\scriptsize $\uparrow$1.3} & 52.5 \textcolor{OrangeRed}{\scriptsize $\downarrow$1.9} & 36.4 \textcolor{ForestGreen}{\scriptsize $\uparrow$0.4} & 63.0 \textcolor{ForestGreen}{\scriptsize $\uparrow$0.5} \\
\bottomrule
\end{tabular}%
}
\end{table}
\newpage 
\begin{table}[h!]
\centering
\caption{Downstream task evaluation results of the model trained on the Zephyr pipeline. The arrows show improvement or degradation of performance with respect to the initial model. The top three average scores are shown in bold.}
\label{tab:tasks_zephyr}
\resizebox{1\textwidth}{!}{%
\setlength{\tabcolsep}{3pt}
\begin{tabular}{lccccccccc}
\toprule
 {Task} & {MMLU} & {ARC} & {HellaSwag} & {TruthfulQA} & {Winogrande} & {GSM8K} & {IFEval} & {MMLU-PRO} & {Average}
\\
\midrule
\multicolumn{10}{c}{\textbf{Zephyr Llama3-8B-Base}} \\
\midrule 
Initial Model & 61.6 \textcolor{white}{\scriptsize$\downarrow$0.0} & 58.2 \textcolor{white}{\scriptsize$\downarrow$0.0} & 78.6 \textcolor{white}{\scriptsize$\downarrow$0.0} & 52.1 \textcolor{white}{\scriptsize$\downarrow$0.0} & 75.9 \textcolor{white}{\scriptsize$\downarrow$0.0} & 47.3 \textcolor{white}{\scriptsize$\downarrow$0.0} & 38.1 \textcolor{white}{\scriptsize$\downarrow$0.0} & 29.4 \textcolor{white}{\scriptsize$\downarrow$0.0} & 55.2 \textcolor{white}{\scriptsize$\downarrow$0.0}\\
\midrule 
DPO & 61.9 \textcolor{ForestGreen}{\scriptsize $\uparrow$0.3} & 62.4 \textcolor{ForestGreen}{\scriptsize $\uparrow$4.2} & 81.6 \textcolor{ForestGreen}{\scriptsize $\uparrow$3.0} & 63.0 \textcolor{ForestGreen}{\scriptsize $\uparrow$11.0} & 74.4 \textcolor{OrangeRed}{\scriptsize $\downarrow$1.4} & 52.8 \textcolor{ForestGreen}{\scriptsize $\uparrow$5.5} & 50.6 \textcolor{ForestGreen}{\scriptsize $\uparrow$12.5} & 31.2 \textcolor{ForestGreen}{\scriptsize $\uparrow$1.7} & \textbf{59.7 \textcolor{ForestGreen}{\scriptsize $\uparrow$4.6}}\\
DPO+SFT & 62.0 \textcolor{ForestGreen}{\scriptsize $\uparrow$0.3} & 62.0 \textcolor{ForestGreen}{\scriptsize $\uparrow$3.8} & 80.9 \textcolor{ForestGreen}{\scriptsize $\uparrow$2.3} & 61.6 \textcolor{ForestGreen}{\scriptsize $\uparrow$9.5} & 75.1 \textcolor{OrangeRed}{\scriptsize $\downarrow$0.7} & 55.0 \textcolor{ForestGreen}{\scriptsize $\uparrow$7.7} & 48.7 \textcolor{ForestGreen}{\scriptsize $\uparrow$10.6} & 31.0 \textcolor{ForestGreen}{\scriptsize $\uparrow$1.6} & 59.5 \textcolor{ForestGreen}{\scriptsize $\uparrow$4.4}\\
cDPO & 61.9 \textcolor{ForestGreen}{\scriptsize $\uparrow$0.3} & 61.9 \textcolor{ForestGreen}{\scriptsize $\uparrow$3.7} & 81.4 \textcolor{ForestGreen}{\scriptsize $\uparrow$2.7} & 62.3 \textcolor{ForestGreen}{\scriptsize $\uparrow$10.2} & 75.3 \textcolor{OrangeRed}{\scriptsize $\downarrow$0.5} & 54.7 \textcolor{ForestGreen}{\scriptsize $\uparrow$7.4} & 49.3 \textcolor{ForestGreen}{\scriptsize $\uparrow$11.2} & 31.0 \textcolor{ForestGreen}{\scriptsize $\uparrow$1.6} & \textbf{59.7 \textcolor{ForestGreen}{\scriptsize $\uparrow$4.6}}\\
R-DPO & 61.9 \textcolor{ForestGreen}{\scriptsize $\uparrow$0.2} & 59.5 \textcolor{ForestGreen}{\scriptsize $\uparrow$1.3} & 80.1 \textcolor{ForestGreen}{\scriptsize $\uparrow$1.5} & 62.0 \textcolor{ForestGreen}{\scriptsize $\uparrow$9.9} & 75.0 \textcolor{OrangeRed}{\scriptsize $\downarrow$0.9} & 52.8 \textcolor{ForestGreen}{\scriptsize $\uparrow$5.5} & 45.3 \textcolor{ForestGreen}{\scriptsize $\uparrow$7.2} & 30.5 \textcolor{ForestGreen}{\scriptsize $\uparrow$1.1} & 58.4 \textcolor{ForestGreen}{\scriptsize $\uparrow$3.2}\\
IPO & 61.8 \textcolor{ForestGreen}{\scriptsize $\uparrow$0.1} & 58.3 \textcolor{ForestGreen}{\scriptsize $\uparrow$0.1} & 78.7 \textcolor{ForestGreen}{\scriptsize $\uparrow$0.1} & 53.5 \textcolor{ForestGreen}{\scriptsize $\uparrow$1.4} & 76.7 \textcolor{ForestGreen}{\scriptsize $\uparrow$0.9} & 46.8 \textcolor{OrangeRed}{\scriptsize $\downarrow$0.5} & 39.5 \textcolor{ForestGreen}{\scriptsize $\uparrow$1.3} & 29.7 \textcolor{ForestGreen}{\scriptsize $\uparrow$0.3} & 55.6 \textcolor{white}{\scriptsize$\downarrow$0.0}\\
$\chi$PO & 62.1 \textcolor{ForestGreen}{\scriptsize $\uparrow$0.4} & 60.8 \textcolor{ForestGreen}{\scriptsize $\uparrow$2.6} & 80.2 \textcolor{ForestGreen}{\scriptsize $\uparrow$1.6} & 57.9 \textcolor{ForestGreen}{\scriptsize $\uparrow$5.8} & 75.3 \textcolor{OrangeRed}{\scriptsize $\downarrow$0.5} & 54.8 \textcolor{ForestGreen}{\scriptsize $\uparrow$7.5} & 48.8 \textcolor{ForestGreen}{\scriptsize $\uparrow$10.7} & 30.2 \textcolor{ForestGreen}{\scriptsize $\uparrow$0.7} & 58.7 \textcolor{ForestGreen}{\scriptsize $\uparrow$3.6}\\
SPPO & 61.5 \textcolor{OrangeRed}{\scriptsize $\downarrow$0.1} & 61.9 \textcolor{ForestGreen}{\scriptsize $\uparrow$3.7} & 81.2 \textcolor{ForestGreen}{\scriptsize $\uparrow$2.6} & 62.0 \textcolor{ForestGreen}{\scriptsize $\uparrow$10.0} & 73.4 \textcolor{OrangeRed}{\scriptsize $\downarrow$2.4} & 51.9 \textcolor{ForestGreen}{\scriptsize $\uparrow$4.6} & 50.7 \textcolor{ForestGreen}{\scriptsize $\uparrow$12.6} & 30.7 \textcolor{ForestGreen}{\scriptsize $\uparrow$1.3} & 59.2 \textcolor{ForestGreen}{\scriptsize $\uparrow$4.0}\\
CPO & 61.6 \textcolor{white}{\scriptsize$\downarrow$0.0}& 56.1 \textcolor{OrangeRed}{\scriptsize $\downarrow$2.1} & 77.8 \textcolor{OrangeRed}{\scriptsize $\downarrow$0.8} & 51.5 \textcolor{OrangeRed}{\scriptsize $\downarrow$0.6} & 75.9 \textcolor{white}{\scriptsize$\downarrow$0.0}& 40.2 \textcolor{OrangeRed}{\scriptsize $\downarrow$7.1} & 37.3 \textcolor{OrangeRed}{\scriptsize $\downarrow$0.8} & 29.3 \textcolor{OrangeRed}{\scriptsize $\downarrow$0.2} & 53.4 \textcolor{OrangeRed}{\scriptsize $\downarrow$1.8}\\
RRHF & 61.7 \textcolor{ForestGreen}{\scriptsize $\uparrow$0.1} & 55.8 \textcolor{OrangeRed}{\scriptsize $\downarrow$2.4} & 77.7 \textcolor{OrangeRed}{\scriptsize $\downarrow$0.9} & 51.3 \textcolor{OrangeRed}{\scriptsize $\downarrow$0.8} & 75.6 \textcolor{OrangeRed}{\scriptsize $\downarrow$0.2} & 39.1 \textcolor{OrangeRed}{\scriptsize $\downarrow$8.2} & 36.6 \textcolor{OrangeRed}{\scriptsize $\downarrow$1.6} & 29.3 \textcolor{OrangeRed}{\scriptsize $\downarrow$0.2} & 53.4 \textcolor{OrangeRed}{\scriptsize $\downarrow$1.8}\\
SLiCHF & 61.8 \textcolor{ForestGreen}{\scriptsize $\uparrow$0.2} & 57.9 \textcolor{OrangeRed}{\scriptsize $\downarrow$0.3} & 79.4 \textcolor{ForestGreen}{\scriptsize $\uparrow$0.8} & 60.2 \textcolor{ForestGreen}{\scriptsize $\uparrow$8.2} & 75.6 \textcolor{OrangeRed}{\scriptsize $\downarrow$0.2} & 49.4 \textcolor{ForestGreen}{\scriptsize $\uparrow$2.1} & 42.3 \textcolor{ForestGreen}{\scriptsize $\uparrow$4.2} & 30.3 \textcolor{ForestGreen}{\scriptsize $\uparrow$0.9} & 57.1 \textcolor{ForestGreen}{\scriptsize $\uparrow$2.0}\\
ORPO & 61.8 \textcolor{ForestGreen}{\scriptsize $\uparrow$0.1} & 62.5 \textcolor{ForestGreen}{\scriptsize $\uparrow$4.3} & 81.2 \textcolor{ForestGreen}{\scriptsize $\uparrow$2.6} & 63.9 \textcolor{ForestGreen}{\scriptsize $\uparrow$11.9} & 76.7 \textcolor{ForestGreen}{\scriptsize $\uparrow$0.9} & 48.9 \textcolor{ForestGreen}{\scriptsize $\uparrow$1.6} & 56.1 \textcolor{ForestGreen}{\scriptsize $\uparrow$18.0} & 30.8 \textcolor{ForestGreen}{\scriptsize $\uparrow$1.4} & \textbf{60.2 \textcolor{ForestGreen}{\scriptsize $\uparrow$5.1}}\\
SimPO & 61.9 \textcolor{ForestGreen}{\scriptsize $\uparrow$0.3} & 62.4 \textcolor{ForestGreen}{\scriptsize $\uparrow$4.2} & 81.6 \textcolor{ForestGreen}{\scriptsize $\uparrow$3.0} & 63.0 \textcolor{ForestGreen}{\scriptsize $\uparrow$11.0} & 74.4 \textcolor{OrangeRed}{\scriptsize $\downarrow$1.4} & 42.8 \textcolor{OrangeRed}{\scriptsize $\downarrow$4.5} & 50.6 \textcolor{ForestGreen}{\scriptsize $\uparrow$12.5} & 31.2 \textcolor{ForestGreen}{\scriptsize $\uparrow$1.7} & 58.5 \textcolor{ForestGreen}{\scriptsize $\uparrow$3.3}\\
ROPO & 61.5 \textcolor{OrangeRed}{\scriptsize $\downarrow$0.2} & 61.7 \textcolor{ForestGreen}{\scriptsize $\uparrow$3.5} & 81.4 \textcolor{ForestGreen}{\scriptsize $\uparrow$2.8} & 64.8 \textcolor{ForestGreen}{\scriptsize $\uparrow$12.8} & 73.8 \textcolor{OrangeRed}{\scriptsize $\downarrow$2.1} & 54.4 \textcolor{ForestGreen}{\scriptsize $\uparrow$7.1} & 49.9 \textcolor{ForestGreen}{\scriptsize $\uparrow$11.8} & 30.9 \textcolor{ForestGreen}{\scriptsize $\uparrow$1.4} & \textbf{59.8 \textcolor{ForestGreen}{\scriptsize $\uparrow$4.6}}\\
\midrule 
POWER-DL & 61.5 \textcolor{OrangeRed}{\scriptsize $\downarrow$0.2} & 61.7 \textcolor{ForestGreen}{\scriptsize $\uparrow$3.5} & 81.4 \textcolor{ForestGreen}{\scriptsize $\uparrow$2.8} & 64.8 \textcolor{ForestGreen}{\scriptsize $\uparrow$12.8} & 73.8 \textcolor{OrangeRed}{\scriptsize $\downarrow$2.1} & 54.4 \textcolor{ForestGreen}{\scriptsize $\uparrow$7.1} & 49.9 \textcolor{ForestGreen}{\scriptsize $\uparrow$11.8} & 30.9 \textcolor{ForestGreen}{\scriptsize $\uparrow$1.4} & \textbf{59.8 \textcolor{ForestGreen}{\scriptsize $\uparrow$4.6}}\\
POWER & 61.8 \textcolor{ForestGreen}{\scriptsize $\uparrow$0.2} & 61.8 \textcolor{ForestGreen}{\scriptsize $\uparrow$3.6} & 80.6 \textcolor{ForestGreen}{\scriptsize $\uparrow$2.0} & 59.8 \textcolor{ForestGreen}{\scriptsize $\uparrow$7.7} & 76.6 \textcolor{ForestGreen}{\scriptsize $\uparrow$0.8} & 46.3 \textcolor{OrangeRed}{\scriptsize $\downarrow$1.1} & 53.7 \textcolor{ForestGreen}{\scriptsize $\uparrow$15.6} & 30.4 \textcolor{ForestGreen}{\scriptsize $\uparrow$1.0} & 58.9 \textcolor{ForestGreen}{\scriptsize $\uparrow$3.7}\\
\midrule 
\multicolumn{10}{c}{\textbf{Zephyr Llama3-8B-Instruct}} \\
\midrule 
Initial Model & 65.7 \textcolor{white}{\scriptsize$\downarrow$0.0} & 62.0 \textcolor{white}{\scriptsize$\downarrow$0.0} & 78.8 \textcolor{white}{\scriptsize$\downarrow$0.0} & 51.7 \textcolor{white}{\scriptsize$\downarrow$0.0} & 76.0 \textcolor{white}{\scriptsize$\downarrow$0.0} & 75.3 \textcolor{white}{\scriptsize$\downarrow$0.0} & 54.4 \textcolor{white}{\scriptsize$\downarrow$0.0} & 36.0 \textcolor{white}{\scriptsize$\downarrow$0.0} & 62.5 \textcolor{white}{\scriptsize$\downarrow$0.0}\\
\midrule
DPO & 66.0 \textcolor{ForestGreen}{\scriptsize $\uparrow$0.3} & 63.0 \textcolor{ForestGreen}{\scriptsize $\uparrow$0.9} & 76.7 \textcolor{OrangeRed}{\scriptsize $\downarrow$2.1} & 55.7 \textcolor{ForestGreen}{\scriptsize $\uparrow$4.0} & 72.9 \textcolor{OrangeRed}{\scriptsize $\downarrow$3.1} & 75.7 \textcolor{ForestGreen}{\scriptsize $\uparrow$0.4} & 49.6 \textcolor{OrangeRed}{\scriptsize $\downarrow$4.8} & 37.1 \textcolor{ForestGreen}{\scriptsize $\uparrow$1.1} & 62.1 \textcolor{ForestGreen}{\scriptsize $\uparrow$0.5}\\
DPO+SFT & 66.0 \textcolor{ForestGreen}{\scriptsize $\uparrow$0.3} & 66.6 \textcolor{ForestGreen}{\scriptsize $\uparrow$4.5} & 79.2 \textcolor{ForestGreen}{\scriptsize $\uparrow$0.5} & 59.9 \textcolor{ForestGreen}{\scriptsize $\uparrow$8.2} & 75.0 \textcolor{OrangeRed}{\scriptsize $\downarrow$1.0} & 75.7 \textcolor{ForestGreen}{\scriptsize $\uparrow$0.4} & 51.8 \textcolor{OrangeRed}{\scriptsize $\downarrow$2.6} & 37.3 \textcolor{ForestGreen}{\scriptsize $\uparrow$1.3} & \textbf{64.2 \textcolor{ForestGreen}{\scriptsize $\uparrow$1.7}}\\
cDPO & 66.0 \textcolor{ForestGreen}{\scriptsize $\uparrow$0.3} & 67.8 \textcolor{ForestGreen}{\scriptsize $\uparrow$5.8} & 80.5 \textcolor{ForestGreen}{\scriptsize $\uparrow$1.8} & 59.0 \textcolor{ForestGreen}{\scriptsize $\uparrow$7.3} & 75.1 \textcolor{OrangeRed}{\scriptsize $\downarrow$1.0} & 76.9 \textcolor{ForestGreen}{\scriptsize $\uparrow$1.6} & 51.2 \textcolor{OrangeRed}{\scriptsize $\downarrow$3.2} & 37.3 \textcolor{ForestGreen}{\scriptsize $\uparrow$1.4} & \textbf{64.2 \textcolor{ForestGreen}{\scriptsize $\uparrow$1.7}}\\
R-DPO & 65.7 \textcolor{ForestGreen}{\scriptsize $\uparrow$0.0} & 66.0 \textcolor{ForestGreen}{\scriptsize $\uparrow$4.0} & 78.2 \textcolor{OrangeRed}{\scriptsize $\downarrow$0.6} & 58.9 \textcolor{ForestGreen}{\scriptsize $\uparrow$7.2} & 74.5 \textcolor{OrangeRed}{\scriptsize $\downarrow$1.5} & 75.8 \textcolor{ForestGreen}{\scriptsize $\uparrow$0.5} & 50.7 \textcolor{OrangeRed}{\scriptsize $\downarrow$3.7} & 36.9 \textcolor{ForestGreen}{\scriptsize $\uparrow$0.9} & 63.3 \textcolor{ForestGreen}{\scriptsize $\uparrow$0.8}\\
IPO & 65.8 \textcolor{ForestGreen}{\scriptsize $\uparrow$0.1} & 62.1 \textcolor{ForestGreen}{\scriptsize $\uparrow$0.1} & 78.7 \textcolor{OrangeRed}{\scriptsize $\downarrow$0.0} & 51.8 \textcolor{ForestGreen}{\scriptsize $\uparrow$0.1} & 75.9 \textcolor{OrangeRed}{\scriptsize $\downarrow$0.2} & 75.8 \textcolor{ForestGreen}{\scriptsize $\uparrow$0.5} & 54.0 \textcolor{OrangeRed}{\scriptsize $\downarrow$0.5} & 35.8 \textcolor{OrangeRed}{\scriptsize $\downarrow$0.1} & 62.2 \textcolor{ForestGreen}{\scriptsize $\uparrow$0.3}\\
$\chi$PO & 66.2 \textcolor{ForestGreen}{\scriptsize $\uparrow$0.5} & 65.4 \textcolor{ForestGreen}{\scriptsize $\uparrow$3.3} & 80.5 \textcolor{ForestGreen}{\scriptsize $\uparrow$1.7} & 54.2 \textcolor{ForestGreen}{\scriptsize $\uparrow$2.6} & 76.2 \textcolor{ForestGreen}{\scriptsize $\uparrow$0.1} & 76.8 \textcolor{ForestGreen}{\scriptsize $\uparrow$1.5} & 54.6 \textcolor{ForestGreen}{\scriptsize $\uparrow$0.1} & 37.1 \textcolor{ForestGreen}{\scriptsize $\uparrow$1.1} & \textbf{64.0 \textcolor{ForestGreen}{\scriptsize $\uparrow$1.2}}\\
SPPO & 66.1 \textcolor{ForestGreen}{\scriptsize $\uparrow$0.4} & 65.8 \textcolor{ForestGreen}{\scriptsize $\uparrow$3.8} & 78.7 \textcolor{OrangeRed}{\scriptsize $\downarrow$0.1} & 58.6 \textcolor{ForestGreen}{\scriptsize $\uparrow$6.9} & 74.1 \textcolor{OrangeRed}{\scriptsize $\downarrow$1.9} & 74.0 \textcolor{OrangeRed}{\scriptsize $\downarrow$1.3} & 55.2 \textcolor{ForestGreen}{\scriptsize $\uparrow$0.7} & 37.4 \textcolor{ForestGreen}{\scriptsize $\uparrow$1.4} & \textbf{63.7 \textcolor{ForestGreen}{\scriptsize $\uparrow$1.1}}\\
CPO & 65.4 \textcolor{OrangeRed}{\scriptsize $\downarrow$0.3} & 61.9 \textcolor{OrangeRed}{\scriptsize $\downarrow$0.2} & 77.8 \textcolor{OrangeRed}{\scriptsize $\downarrow$0.9} & 52.3 \textcolor{ForestGreen}{\scriptsize $\uparrow$0.6} & 75.5 \textcolor{OrangeRed}{\scriptsize $\downarrow$0.5} & 75.4 \textcolor{ForestGreen}{\scriptsize $\uparrow$0.1} & 53.8 \textcolor{OrangeRed}{\scriptsize $\downarrow$0.6} & 35.9 \textcolor{OrangeRed}{\scriptsize $\downarrow$0.1} & 62.2 \textcolor{OrangeRed}{\scriptsize $\downarrow$0.2}\\
RRHF & 65.4 \textcolor{OrangeRed}{\scriptsize $\downarrow$0.3} & 61.9 \textcolor{OrangeRed}{\scriptsize $\downarrow$0.2} & 77.7 \textcolor{OrangeRed}{\scriptsize $\downarrow$1.1} & 52.3 \textcolor{ForestGreen}{\scriptsize $\uparrow$0.6} & 75.3 \textcolor{OrangeRed}{\scriptsize $\downarrow$0.7} & 75.3 \textcolor{white}{\scriptsize$\downarrow$0.0} & 54.2 \textcolor{OrangeRed}{\scriptsize $\downarrow$0.2} & 35.8 \textcolor{OrangeRed}{\scriptsize $\downarrow$0.2} & 62.2 \textcolor{OrangeRed}{\scriptsize $\downarrow$0.2}\\
SLiCHF & 65.6 \textcolor{white}{\scriptsize$\downarrow$0.0} & 63.1 \textcolor{ForestGreen}{\scriptsize $\uparrow$1.0} & 79.1 \textcolor{ForestGreen}{\scriptsize $\uparrow$0.3} & 56.0 \textcolor{ForestGreen}{\scriptsize $\uparrow$4.3} & 75.4 \textcolor{OrangeRed}{\scriptsize $\downarrow$0.6} & 76.8 \textcolor{ForestGreen}{\scriptsize $\uparrow$1.5} & 48.8 \textcolor{OrangeRed}{\scriptsize $\downarrow$5.6} & 36.4 \textcolor{ForestGreen}{\scriptsize $\uparrow$0.5} & 62.9 \textcolor{ForestGreen}{\scriptsize $\uparrow$0.4}\\
ORPO & 65.9 \textcolor{ForestGreen}{\scriptsize $\uparrow$0.2} & 64.5 \textcolor{ForestGreen}{\scriptsize $\uparrow$2.5} & 78.5 \textcolor{OrangeRed}{\scriptsize $\downarrow$0.3} & 57.6 \textcolor{ForestGreen}{\scriptsize $\uparrow$5.9} & 75.3 \textcolor{OrangeRed}{\scriptsize $\downarrow$0.7} & 77.2 \textcolor{ForestGreen}{\scriptsize $\uparrow$1.9} & 52.3 \textcolor{OrangeRed}{\scriptsize $\downarrow$2.2} & 36.7 \textcolor{ForestGreen}{\scriptsize $\uparrow$0.8} & \textbf{63.7 \textcolor{ForestGreen}{\scriptsize $\uparrow$1.1}}\\
SimPO & 65.8 \textcolor{ForestGreen}{\scriptsize $\uparrow$0.1} & 62.1 \textcolor{ForestGreen}{\scriptsize $\uparrow$0.1} & 74.2 \textcolor{OrangeRed}{\scriptsize $\downarrow$4.6} & 57.4 \textcolor{ForestGreen}{\scriptsize $\uparrow$5.7} & 71.1 \textcolor{OrangeRed}{\scriptsize $\downarrow$4.9} & 72.4 \textcolor{OrangeRed}{\scriptsize $\downarrow$2.9} & 54.1 \textcolor{OrangeRed}{\scriptsize $\downarrow$0.4} & 37.0 \textcolor{ForestGreen}{\scriptsize $\uparrow$1.0} & 61.8 \textcolor{OrangeRed}{\scriptsize $\downarrow$0.7}\\
ROPO & 66.2 \textcolor{ForestGreen}{\scriptsize $\uparrow$0.5} & 63.6 \textcolor{ForestGreen}{\scriptsize $\uparrow$1.5} & 76.4 \textcolor{OrangeRed}{\scriptsize $\downarrow$2.4} & 58.1 \textcolor{ForestGreen}{\scriptsize $\uparrow$6.4} & 72.9 \textcolor{OrangeRed}{\scriptsize $\downarrow$3.1} & 73.0 \textcolor{OrangeRed}{\scriptsize $\downarrow$2.3} & 55.6 \textcolor{ForestGreen}{\scriptsize $\uparrow$1.2} & 37.5 \textcolor{ForestGreen}{\scriptsize $\uparrow$1.5} & 62.9 \textcolor{ForestGreen}{\scriptsize $\uparrow$0.4}\\
\midrule
POWER-DL & 65.8 \textcolor{ForestGreen}{\scriptsize $\uparrow$0.1} & 64.7 \textcolor{ForestGreen}{\scriptsize $\uparrow$2.7} & 76.9 \textcolor{OrangeRed}{\scriptsize $\downarrow$1.8} & 59.5 \textcolor{ForestGreen}{\scriptsize $\uparrow$7.9} & 73.6 \textcolor{OrangeRed}{\scriptsize $\downarrow$2.4} & 76.2 \textcolor{ForestGreen}{\scriptsize $\uparrow$0.9} & 55.6 \textcolor{ForestGreen}{\scriptsize $\uparrow$1.2} & 37.2 \textcolor{ForestGreen}{\scriptsize $\uparrow$1.3} & \textbf{63.7 \textcolor{ForestGreen}{\scriptsize $\uparrow$1.1}}\\
POWER & 65.7 \textcolor{white}{\scriptsize$\downarrow$0.0} & 63.1 \textcolor{ForestGreen}{\scriptsize $\uparrow$1.1} & 75.0 \textcolor{OrangeRed}{\scriptsize $\downarrow$3.7} & 59.1 \textcolor{ForestGreen}{\scriptsize $\uparrow$7.4} & 71.7 \textcolor{OrangeRed}{\scriptsize $\downarrow$4.3} & 75.7 \textcolor{ForestGreen}{\scriptsize $\uparrow$0.5} & 54.6 \textcolor{ForestGreen}{\scriptsize $\uparrow$0.1} & 37.1 \textcolor{ForestGreen}{\scriptsize $\uparrow$1.1} & 62.9 \textcolor{ForestGreen}{\scriptsize $\uparrow$0.4}\\
\bottomrule
\end{tabular}%
}
\end{table}

$ $ 
\newpage 

\subsection{{Performance on MT-Bench}}
We evaluate the trained models on MT-Bench, which includes 80 questions across 8 categories. We report the average score in Table \ref{tab:MT_bench_llama3} as evaluated by GPT-4 as the judge model. The highest score in trained models is shown in bold and the second highest score is underlined. POWER-DL consistently outperforms other preference optimization methods. Similar to \citet{meng2024simpo}, we observe that variations across different methods in MT-Bench scores are small compared to AlpacaEval 2.0 and Arena-Hard.

\begin{table}[h!]
\centering
\caption{Llama3 MT-Bench results on Helpsteer2 and Zephyr settings.}
\label{tab:MT_bench_llama3}
\resizebox{0.7\textwidth}{!}{
\setlength{\tabcolsep}{1.0pt}
\begin{tabular}{lcccc}
\toprule
 & \multicolumn{2}{c}{\textbf{Helpsteer2}} & \multicolumn{2}{c}{\textbf{Zephyr}} \\
\cmidrule(lr){2-3} \cmidrule(lr){4-5}
 & \multicolumn{1}{c}{\textbf{Llama3-8B-Base} } & \multicolumn{1}{c}{\textbf{Llama3-8B-Instruct} } & \multicolumn{1}{c}{\textbf{Llama3-8B-Base} } & \multicolumn{1}{c}{\textbf{Llama3-8B-Instruct} } \\
\midrule 
\textbf{Method} & {GPT-4 Score} & {GPT-4 Score} & {GPT-4 Score} & {GPT-4 Score} \\
\midrule
Initial Model & 4.9 & 8.3 & 6.1 & {8.3} \\
\midrule 
DPO & 5.6 & \textbf{8.2} & \underline{7.0} & \textbf{8.3} \\
DPO+SFT & 5.5 & \underline{8.1} & \underline{7.0} & 8.0 \\
cDPO & 5.8 & \underline{8.1} & 6.9 & \underline{8.2} \\
R-DPO & 5.4 & \textbf{8.2} & 6.9 & 8.1 \\
IPO & 4.9 & 8.0 & 6.3 & 8.0 \\
$\chi$PO & 5.5 & 8.0 & 6.9 & 8.1 \\
SPPO & 5.8 & \textbf{8.2} & \textbf{7.1} & \underline{8.2} \\
CPO & \underline{5.9} & 7.2 & 6.3 & 8.0 \\
RRHF & 5.7 & 7.9 & 6.2 & 8.1 \\
SLiC-HF & \underline{5.9} & 7.8 & 6.8 & 7.9 \\
ORPO & 5.7 & \underline{8.1} & 6.6 & \textbf{8.3} \\
SimPO & \textbf{6.0} & 8.0 & 6.8 & \underline{8.2} \\
ROPO & 5.8 & \underline{8.1} & \textbf{7.1} & \underline{8.2} \\
\midrule 
POWER-DL & \textbf{6.0} & \textbf{8.2} & \textbf{7.1} & \underline{8.2} \\
POWER & \underline{5.9} & \textbf{8.2} & \underline{7.0} & \underline{8.2} \\
\bottomrule
\end{tabular}
} 
\end{table}

\subsection{Experiments on Iterative Preference Optimization}
The multi-iteration experiments are conducted on the Zephyr pipeline that extends the single-iteration instruct setting into three iteration by splitting the dataset of prompts. We compare our approach with iterative DPO \cite{dong2024rlhf}, XPO \cite{xie2024exploratory}, which adds negative SFT to DPO, and an iterative variant of SimPO. Table \ref{tab:iterative_preference_optimization} presents the results, demonstrating that the benefits of our approach that extend to the multi-iteration setting.

\begin{table}[h]
\centering
\caption{Instruction-following benchmark results in the multi-iteration setting.}
\label{tab:iterative_preference_optimization}
\resizebox{0.7\textwidth}{!}{
\setlength{\tabcolsep}{2.0pt}
\begin{tabular}{lccc}
\toprule
\textbf{Method} & {AlpacaEval 2.0 (LC\%)} & {AlpacaEval 2.0 (WR\%)} & {Arena-Hard (WR\%)} \\
\midrule
\vspace{0.1mm}
Initial Model & 33.41 & 32.40 & 23.0 \\
\midrule 
DPO iter=1 & 40.02 & 38.58 & 31.6 \\
DPO iter=2 & 42.15 & \textbf{40.71} & 37.80 \\
DPO iter=3 & 42.38 & 40.8 & 38.8 \\
\midrule 
XPO iter=1 & 40.96 & \textbf{39.50} & 32.6 \\
XPO iter=2 & 41.33 & 39.66 & 36.3 \\
XPO iter=3 & 42.42 & 39.98 & 35.7 \\
\midrule 
SimPO iter=1 & 38.91 & 36.44 & 28.7 \\
SimPO iter=2 & 41.15 & 37.77 & 28.9 \\
SimPO iter=3 & 41.97 & 38.32 & 32.6 \\
\midrule 
POWER-DL iter=1 & \textbf{41.42} & 37.64 & \textbf{34.4} \\
POWER-DL iter=2 & \textbf{43.48} & 40.17 & \textbf{39.8} \\
POWER-DL iter=3 & \textbf{46.55} & \textbf{43.21} & \textbf{42.1} \\
\bottomrule
\end{tabular}}
\end{table}

\newpage 
\subsection{Hyperparameter Robustness Results}\label{sec:hyperparameter_robustness}

We assess the hyperparameter robustness of POWER-DL by examining its performance across a range of values for $\gamma$, $\eta$, and $\beta$ in the Helpsteer2 base setting. Figure \ref{fig:hyperparameter robustness} shows AlpacaEval winrate and length-controlled winrate for various hyperparameters. POWER-DL exhibits a robust behavior with respect to all three hyperparameters, particularly considering the length-controlled winrate.

\begin{figure}[H]
    \centering
    \includegraphics[width=0.95\linewidth]{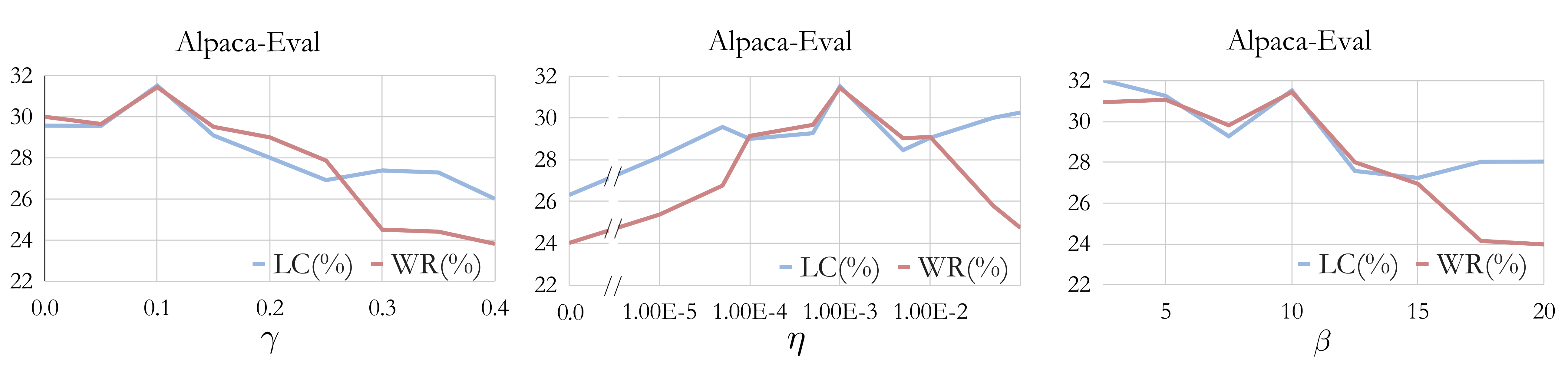}
    \caption{POWER-DL hyperparameter robustness results in the Helpsteer2 base setting.}
    \label{fig:hyperparameter robustness}
\end{figure}

\newpage

\subsection{Examples Responses}\label{sec:example_responses}

The following is an example question along with responses from various models from the Helpsteer2 base pipeline. POWER-DL response shows analytical thinking, a deeper reasoning process, and consideration of implicit factors, characteristics reminiscent of System 2 thinking.

\begin{tcolorbox}[
    enhanced,
    colback=white,
    colframe=CadetBlue,
    colbacktitle=CadetBlue, 
    coltitle=white,
    title=\textbf{Question}, 
    fonttitle=\bfseries, 
    boxrule=1pt, 
    arc=2mm,
    toprule=0.7mm, 
    bottomrule=0.7mm, 
    leftrule=0.7mm, 
    rightrule=0.7mm, 
    titlerule=0mm, 
    boxsep=4pt, 
    left=4pt, 
    right=4pt, 
    top=3pt, 
    bottom=2pt 
]
One One was a race horse Two Two was one too. One One won one race. Two Two won one too. How many horses are in this story?
\end{tcolorbox}

\begin{tcolorbox}[
    enhanced,
    colback=white, 
    colframe=CadetBlue, 
    colbacktitle=CadetBlue, 
    coltitle=white, 
    title=\textbf{Responses from Different Models}, 
    fonttitle=\bfseries, 
    boxrule=1pt,
    arc=2mm, 
    toprule=0.7mm, 
    bottomrule=0.7mm, 
    leftrule=0.7mm, 
    rightrule=0.7mm, 
    titlerule=0mm, 
    boxsep=4pt,
    left=4pt, 
    right=4pt, 
    top=3pt, 
    bottom=2pt 
]
\textbf{Initial Model:} There are two horses in this story, One One and Two Two.

\medskip
\textbf{GPT-4:} There are two horses in this story. Their names are ``One One'' and ``Two Two.''

\medskip
\hrule
\medskip
\textbf{DPO:} Based on the information given in the riddle, there are two horses involved in the story. ``The One One'' and ``The Two Two'' are both mentioned as race horses, and each of them has won one race respectively. So, there is one ``One One'' horse and one ``Two Two'' horse in this story. Together, they make up the total two horses.

\medskip
\textbf{cDPO:} Based on the information given in the riddle, there are two horses involved in the story.  ``The One One'' and ``The Two Two'' are both designated as race horses, indicating that there are two horses with those specific monikers. ``One One won one race'' refers to the horse with the name ``One One'' winning one race. ``And Two Two won one too'' refers to the horse with the name ``Two Two'' winning one race as well, implying that both horses won one race each, not ruling out the possibility of both being race horses. So, there is ``One One'' horse and ``Two Two'' horse in this story.

\medskip
\textbf{$\chi$PO:} There are two horses in this story.

\medskip
\textbf{SPPO:} Based on the information provided in the riddle, there are two horses involved in the story. ``One One'' and ``Two Two'' are both race horses, and the sentence ``One One won one race. Two Two won one too.'' indicates that both horses won one race each. So, there are two distinct horses that won a race each, namely One One and Two Two. Therefore, the number of horses in this story is 2.

\medskip
\textbf{SimPO:} Based solely on the information provided in the riddle, there could be as many as three horses and as few as one. The first two lines introduce two horses, ``One-One'' and ``Two-Two.'' The next two lines indicate that each can win a race, but the information about the races themselves and the horses that competed in them is missing. If ``One-One'' and ``Two-Two'' raced each other, then just those two horses would exist in this story. However, if either or both raced a third horse, the number of horses in the story could be three. Therefore, the actual number of horses in this riddle's story is unknown.

\medskip
\hrule
\medskip
\textbf{POWER-DL (ours):} The story mentions two horses, One One and Two Two. However, it's possible that there could be more horses involved. The statement ``One One won one race'' does not specify whether the track was crowded with competitors, and it's possible that the horse had to race against multiple opponents. Therefore, the number of horses in the story could be anywhere from two to many. The statement only specifies the race results of the two named horses.
\end{tcolorbox}

\newpage 

\section{Experimental Results on Mistral Family}\label{app:mistral}

In this section, we provide experimental results on conducting preference optimization on Mistral-7B \cite{jiang2023mistral}. We follow the same experimental details described earlier and finetune \href{https://huggingface.co/mistralai/Mistral-7B-v0.1}{mistralai/Mistral-7B-v0.1} in the base setting and \href{https://huggingface.co/mistralai/Mistral-7B-Instruct-v0.2}{mistralai/Mistral-7B-Instruct-v0.2} in the instruct setting.

\begin{table}[h!]
\centering
\caption{AlpacaEval 2 and Arena-Hard results on Helpsteer2 and Zephyr settings for Mistral family.}
\label{tab:mistral_alignment_benchmarks}
\resizebox{\textwidth}{!}{
\setlength{\tabcolsep}{0.7pt}
\begin{tabular}{lcccccccccccc}
\toprule
 & \multicolumn{6}{c}{\textbf{Helpsteer2}} & \multicolumn{6}{c}{\textbf{Zephyr}} \\
\cmidrule(lr){2-7} \cmidrule(lr){8-13}
 & \multicolumn{3}{c}{\textbf{Mistral-7B-Base}} & \multicolumn{3}{c}{\textbf{Mistral-7B-Instruct}} & \multicolumn{3}{c}{\textbf{Mistral-7B-Base}} & \multicolumn{3}{c}{\textbf{Mistral-7B-Instruct}} \\
& \multicolumn{2}{c}{AlpacaEval} & \multicolumn{1}{c}{Arena-Hard} & \multicolumn{2}{c}{ AlpacaEval} & {Arena-Hard} & \multicolumn{2}{c}{AlpacaEval} & \multicolumn{1}{c}{Arena-Hard} & \multicolumn{2}{c}{AlpacaEval} & {Arena-Hard} \\
\midrule 
\textbf{Method} & { LC(\%)} & { WR(\%)} & {WR(\%)} & {LC(\%)} & {WR(\%)} & {WR(\%)} & {LC(\%)} & {WR(\%)} & {WR(\%)} & {LC(\%)} & {WR(\%)} & {WR(\%)} \\
\midrule
Initial Model & 5.47 &  4.16 & 1.5 & 27.70 & 22.26 & 14.8 & 2.85 & 2.11 & 0.6 & 27.70 & 22.26 & 14.8 \\
\midrule 
DPO & 11.89 & 10.39 &  4.6  & 36.47 & 29.41 & 17.1 & 15.67 & 13.26 & 5.9 & 36.55 & 36.28 & 24.3 \\
DPO+SFT & 10.57 & 8.21 & 3.4 & 35.28 & 27.88 & 16.9 & 12.30 & 10.68 & 5.4 & 35.31 & 35.70 & 25.1 \\
cDPO & 12.12 & 10.52 & 3.5 & 32.07 & 29.22 & 16.6 & 13.57 & 12.24 & 5.4 & 31.42 & 30.43 & 20.9 \\
$\chi$PO & 10.88 & 8.64 & 4.1 & 38.90 & \textbf{37.02} & 22.5 & 9.80 & 8.35 & 3.4 & 34.79 & 35.53 & 17.0 \\
SimPO & 14.56 & 13.97 & 7.9 & 38.28 & 28.89 & 14.1 & 16.08 & 16.50 & 7.3 & 36.23 & 29.11 & 22.5 \\
\midrule 
POWER-DL & \textbf{19.83} & 15.40 & \textbf{8.2} & \textbf{42.26} & 34.72 & \textbf{23.7} & \textbf{20.34} & \textbf{19.50} & \textbf{12.1} & \textbf{42.57} & \textbf{42.53} & \textbf{28.0} \\
POWER & 19.72 & \textbf{16.04} & 6.5 & 39.23 &  33.06 & 20.1 & 17.09 & 15.25 & 10.2 & 38.13 & 36.85 & 26.2 \\
\bottomrule
\end{tabular}
} 
\end{table}

\begin{table}[h!]
\centering
\caption{Mistral MT-Bench results on Helpsteer2 and Zephyr settings.}
\label{tab:MT_bench_mistral}
\resizebox{0.7\textwidth}{!}{
\setlength{\tabcolsep}{0.7pt}
\begin{tabular}{lcccc}
\toprule
 & \multicolumn{2}{c}{\textbf{Helpsteer2}} & \multicolumn{2}{c}{\textbf{Zephyr}} \\
\cmidrule(lr){2-3} \cmidrule(lr){4-5}
 & \multicolumn{1}{c}{\textbf{Mistral-7B-Base} } & \multicolumn{1}{c}{\textbf{Mistral-7B-Instruct} } & \multicolumn{1}{c}{\textbf{Mistral-7B-Base} } & \multicolumn{1}{c}{\textbf{Mistral-7B-Instruct} } \\
\midrule 
\textbf{Method} & {GPT-4 Score} & {GPT-4 Score} & {GPT-4 Score} & {GPT-4 Score} \\
\midrule
Initial Model & 3.1 & 6.6 & 3.6 & 6.6 \\
\midrule 
DPO & \underline{3.4} & \underline{6.3} & \underline{5.0} & \textbf{6.6} \\
DPO+SFT & \underline{3.4} & \underline{6.3} & \textbf{5.2} & 6.3 \\
cDPO & \textbf{3.5} & 6.1 & \textbf{5.2} & \underline{6.5} \\
$\chi$PO & 3.2 & \textbf{6.6} & \underline{5.0} & 6.2 \\
SimPO & 2.9 & \underline{6.3} & 4.5 & 6.0 \\
\midrule 
POWER-DL & \underline{3.4} & \textbf{6.6} & \textbf{5.2} & \textbf{6.6} \\
POWER & 3.2 & \textbf{6.6} & \textbf{5.2} & \underline{6.5} \\
\bottomrule
\end{tabular}
} 
\end{table}

\end{document}